\newtheorem*{rep@theorem}{\rep@title}
\newcommand{\newreptheorem}[2]{%
	\newenvironment{rep#1}[1]{%
		\def\rep@title{#2 \ref{##1}}%
		\begin{rep@theorem}}%
		{\end{rep@theorem}}}
\definecolor{yc}{RGB}{255,69,0}
\definecolor{pv}{RGB}{0,102,204}
\newcommand{\topk}{\text{top}_k}
\newtheorem{theorem}{Theorem}
\newtheorem{lemma}{Lemma}
\newtheorem{definition}{Definition}
\newtheorem{assumption}{Assumption}
\DeclareMathOperator{\argmin}{argmin}
\title{Communication-efficient Vertical Federated Learning via Compressed Error Feedback}
\author{
Anonymous Author(s)\\
Affiliation
\footnote{Full affiliation}
}
\author{%
	Pedro Valdeira\footnote{Department of Electrical and Computer Engineering, Carnegie Mellon University; email: \texttt{pvaldeira@cmu.edu}.} \footnote{Institute for Systems and Robotics, Instituto Superior T\'{e}cnico.}\\
	CMU \& IST
	\and Jo\~{a}o Xavier$^\dagger$\\
	IST
	\and Cl\'{a}udia Soares\footnote{Department of Computer Science, NOVA School of Science and Technology.}\\
	NOVA
	\and
	Yuejie Chi$^{*}$\\
	CMU
}
\date{June 2024; Revised \today}
\begin{document}

	\maketitle
	
	\footnotetext[4]{A preliminary version of this work was accepted for publication at EUSIPCO 2024~\citep{valdeira2024efvfl}.}
	\renewcommand*{\thefootnote}{\arabic{footnote}}
		
	\begin{abstract}

Communication overhead is a known bottleneck in federated learning (FL). To address this, lossy compression is commonly used on the information communicated between the server and clients during training. In horizontal FL, where each client holds a subset of the samples, such communication-compressed training methods have recently seen significant progress. However, in their vertical FL counterparts, where each client holds a subset of the features, our understanding remains limited. To address this, we propose an error feedback compressed vertical federated learning (\texttt{EF-VFL}) method to train split neural networks. In contrast to previous communication-compressed methods for vertical FL, \texttt{EF-VFL} does not require a vanishing compression error for the gradient norm to converge to zero for smooth nonconvex problems. By leveraging error feedback, our method can achieve a $\mathcal{O}(\sfrac{1}{T})$ convergence rate for a sufficiently large batch size, improving over the state-of-the-art $\mathcal{O}(\sfrac{1}{\sqrt{T}})$ rate under $\mathcal{O}(\sfrac{1}{\sqrt{T}})$ compression error, and matching the rate of uncompressed methods. Further, when the objective function satisfies the Polyak-Łojasiewicz inequality, our method converges linearly. In addition to improving convergence, our method also supports the use of private labels. Numerical experiments show that \texttt{EF-VFL} significantly improves over the prior art, confirming our theoretical results. The code for this work can be found at \url{https://github.com/Valdeira/EF-VFL}.

\end{abstract}
	
\section{Introduction}
Federated learning (FL) is a machine learning paradigm where a set of clients holding local datasets collaborate to train a model without exposing their local data~\citep{McMahan2017,zhang2021fedpd,sery2021over}. FL can be divided into two categories, based on how the data is partitioned across the clients: \textit{horizontal} FL, where each client holds a different set of samples but all clients share the same features, and \textit{vertical} FL, where each client holds a different subset of features but all clients share the same samples. Note that we cannot gather and redistribute the data because it must remain at the clients. Thus, we do not choose under which category a given task falls. Rather, the category is a consequence of how the data arises.

In this work, we focus on vertical FL (VFL)~\citep{liu2024vertical}. In VFL, a global dataset~$\mathcal{D}=\{\bm{\xi}_n \}_{n=1}^N$ with $N$ samples is partitioned by features across a set of clients~$[K]\coloneqq\{1,\dots,K\}$. Each sample has $K$ disjoint blocks of features $\bm{\xi}_{n}=(\bm{\xi}_{n1},\dots,\bm{\xi}_{nK})$ and the local dataset of each client~$k\in[K]$ is $\mathcal{D}_k=\{\bm{\xi}_{nk}\}_{n=1}^N$, where $\mathcal{D}=\bigcup_k\mathcal{D}_k$. Since different datasets~$\mathcal{D}_k$ have different features, VFL suits collaborations of clients with complementary types of information, who tend to have fewer competing interests. This can lead to a greater incentive to collaborate, compared to horizontal FL. A common application of VFL is in settings where multiple entities own distinct features concerning a shared set of users and seek to collaboratively train a predictor; for example, WeBank partners with other companies to jointly build a risk model from data regarding shared customers~\citep{cheng2020federated}.

To jointly train a model from~$\{\mathcal{D}_k\}$ without sharing local data, split neural networks~\citep{ceballos2020splitnn} are often considered. To learn the parameters $\bm{x}$ of such models, we aim to solve the following nonconvex optimization problem:
\begin{equation}
	\label{eq:splitnn}
	\min_{\bm{x}\in\mathbb{R}^d}
	\;
	f(\bm{x})
	\coloneqq
	\frac{1}{N}
	\sum_{n=1}^N
	\phi_n
	\left(\bm{x}_0,\bm{h}_{1n}(\bm{x}_1),\dots,\bm{h}_{Kn}(\bm{x}_K)\right),
\end{equation}
where $\bm{x}\coloneqq(\bm{x}_0,\bm{x}_1,\dots,\bm{x}_K)$. Here, $\bm{h}_{kn}(\bm{x}_k)\coloneqq \bm{h}_{k}(\bm{x}_k;\bm{\xi}_{nk})$ is the \textit{representation} of $\bm{\xi}_{nk}$ extracted by the local model of client~$k\in[K]$, which is parameterized by $\bm{x}_k$. This representation is then sent to the server. The server, in turn, uses $\{\bm{h}_{kn}(\bm{x}_k)\}_{k=1}^K$ as input to $\phi_n$, which corresponds to the composition of the loss function and the \textit{fusion} model and is parameterized by $\bm{x}_0$.\footnote{The server often aggregates the representations $\bm{h}_{kn}$ via some nonparameterized operation (for example, a sum or an average) before inputting them into the server model. We consider this aggregation to be included in $\phi_n$.}

Most FL methods, including ours, assume that the server can communicate with all the clients and that the clients do not communicate with each other. These methods typically require many rounds of client-server communications. Such communications can significantly slow down training. In fact, they can become the main bottleneck during training~\citep{dean2012large,Lian2017}. To address this, a plethora of communication-efficient FL methods have been proposed. In particular, a popular technique to mitigate the communication overhead is lossy compression. Compression operators, or simply compressors, are operators that map a given vector into another vector that is easier to communicate (that is, requires fewer bits).

Optimization methods employing communication compression have seen great success~\citep{seide20141,alistarh2017qsgd}. Most of these works focus on the prevalent horizontal FL setup and thus consider \textit{gradient} compression, as these are the vectors being communicated in the horizontal setting~\citep{alistarh2018convergence,stich2018sparsified}. Yet, in vertical FL, the clients send \textit{representations}~$\{\bm{h}_{kn}(\bm{x}_k)\}$ instead. In contrast to gradient compression, compressing these intermediate representations leads the compression-induced error to undergo a nonlinear function $\phi_n$ before impacting gradient-based updates. Thus, compression in VFL is not covered by these works and our understanding of it remains limited. In fact, to the best of our knowledge, \citet{Castiglia2022} is the only work providing convergence guarantees for compressed VFL. Yet, \citet{Castiglia2022} employs a direct compression method, requiring the compression error to go to zero as the number of gradient steps $T$ increases, achieving a $\mathcal{O}(\sfrac{1}{\sqrt{T}})$ rate when the compression error is $\mathcal{O}(\sfrac{1}{\sqrt{T}})$. This makes the method in~\citet{Castiglia2022} unsuitable for applications with strict per-round communication limitations, such as bandwidth constraints, and leads to the following question:
\begin{center}
	\emph{Can we design a communication-compressed VFL method that preserves the convergence rate of uncompressed methods without decreasing the amount of compression as training progresses?}
\end{center}

\subsection{Our contributions}
In this work, we answer the question above in the affirmative. Our main contributions are as follows.
\begin{itemize} [leftmargin=1.2em]
	\item We propose error feedback compressed VFL (\texttt{EF-VFL}), which leverages an error feedback technique to improve the stability of communication-compressed training in VFL.
	
	\item We show that our method achieves a convergence rate of $\mathcal{O}(\sfrac{1}{{T}})$ for nonconvex objectives under nonvanishing compression error for a sufficiently large batch size, improving over the state-of-the-art $\mathcal{O}(\sfrac{1}{\sqrt{T}})$ rate under $\mathcal{O}(\sfrac{1}{\sqrt{T}})$ compression error of~\citet{Castiglia2022}, and matching the rate of the centralized setting. We further show that, under the Polyak-Łojasiewicz (PL) inequality, our method converges linearly to the solution, in the full-batch case, and, more generally, to a neighborhood of size proportional to the mini-batch variance, thus obtaining the first linearly convergent compressed VFL method. Unlike the method in~\citet{Castiglia2022}, \texttt{EF-VFL} supports the use of private labels, broadening its applicability.
	
	\item We run numerical experiments and observe empirically that our method improves the state-of-the-art, achieving a better communication efficiency than existing methods.
\end{itemize}

\begin{figure*}[t]
	\centering
	\includegraphics[width=.9\textwidth]{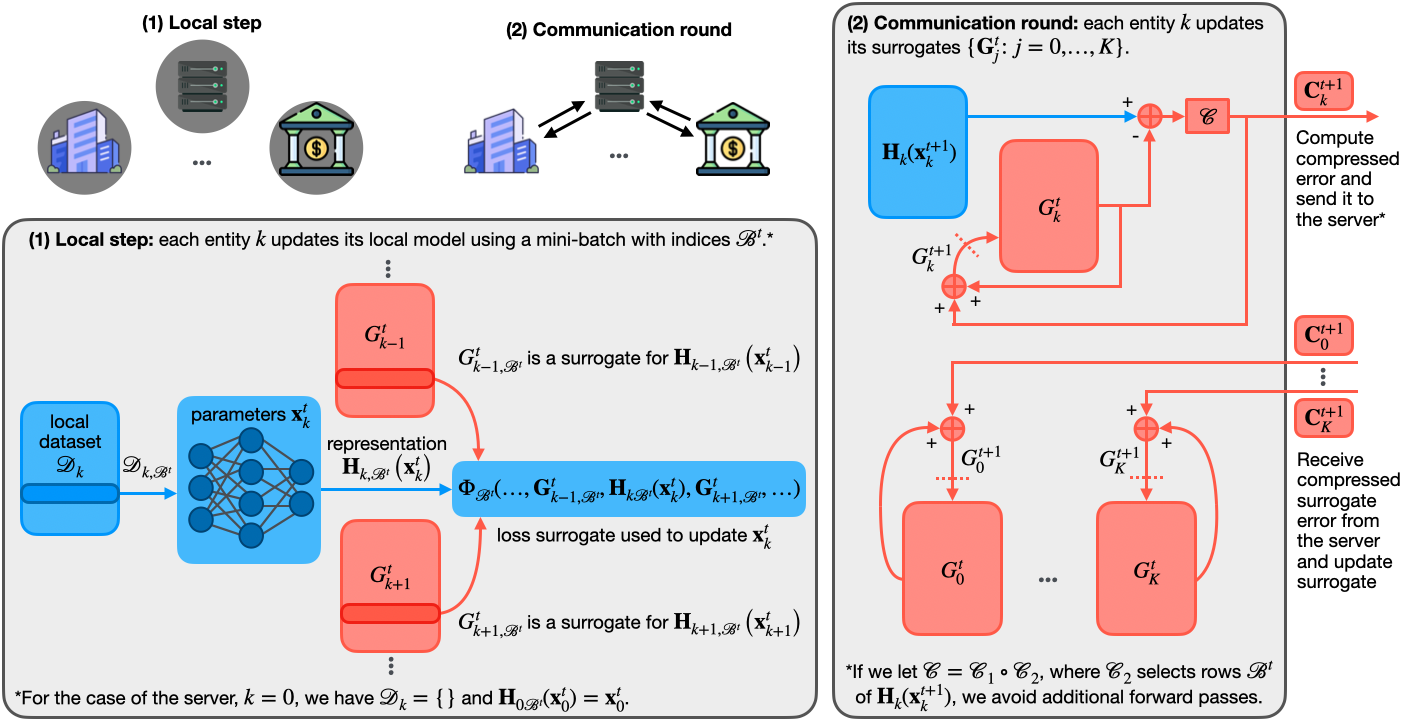}
	\caption{An illustration of an iteration of the \texttt{EF-VFL} algorithm. Step (1) concerns the model update and step (2) concerns the surrogate update.}
	\label{fig:efvfl_model}
\end{figure*}

\subsection{Related work}
\label{sec:related_work}

\paragraph{Communication-efficient FL.}
The aforementioned communication bottleneck in FL~\citep{Lian2017} makes communication-efficient methods a particularly active area of research. FL methods often employ multiple local updates between rounds of communication, use only a subset of the clients at a time~\citep{McMahan2017}, or even update the global model asynchronously~\citep{xie2019asynchronous}. Another line of research considers fully decentralized methods~\citep{Lian2017}, dispensing with the server and, instead, exploiting communications between clients. This can alleviate the bandwidth limit ensuing from the centralized role of the server. Another popular technique is lossy compression, which is the focus of this work. In both FL and, more generally, in the broader area of distributed optimization~\citep{chen2012diffusion, mota2013d}, communication-compressed methods have recently received significant attention~\citep{amiri2020machine, du2020high, shlezinger2020uveqfed, nassif2023quantization}.

Communication-compressed optimization methods can exploit different families of compressors. A popular choice is the family of \textit{unbiased} compressors~\citep{alistarh2017qsgd}, which is appealing in that its properties facilitate the theoretical analysis of the resulting methods. Yet, some widely adopted compressors do not belong to this class, such as top-$k$ sparsification~\citep{alistarh2018convergence,stich2018sparsified} and deterministic rounding~\citep{sapio2021scaling}.
Thus, the broader family of \textit{contractive} compressors~\citep{beznosikov2023biased} has recently attracted a lot of attention.
Yet, methods employing them for direct compression are often prone to instability or even divergence~\citep{beznosikov2023biased}. To address this, error feedback techniques have been proposed; first, as a heuristic~\citep{seide20141}, but, more recently, significant progress has been made on our theoretical understanding of the application of these methods to gradient compression~\citep{stich2018sparsified,alistarh2018convergence,richtarik2021ef21}. In the horizontal setting, some works have combined error feedback compression with the aforementioned communication-efficient techniques, such as communication-compressed fully decentralized methods~\citep{koloskova2019decentralized, zhao2022beer}.

\vspace{1mm}
\paragraph{Vertical FL.}
To mitigate the communication bottleneck in VFL, we often employ techniques akin to those used in horizontal FL. In particular, \citet{Liu2022} performed multiple local updates between communication rounds, \citet{Chen2020} updated the local models asynchronously, and \citet{ying2018supervised} proposed a fully decentralized approach. Recently, \citet{valdeira2023multi} proposed a semi-decentralized method leveraging both client-server and client-client communications to avoid the slow convergence of fully-decentralized methods on large and sparse networks while alleviating the server bottleneck.

In this work, we focus on communication-compressed methods tailored to VFL. While, as mentioned above, most work in communication-compressed methods focuses on gradient compression and thus does not apply directly to VFL, recently, a few empirical works on VFL have employed compression. Namely, \citet{khan2022communication} compressed the local data before sending it to the server, where the model is trained, and \citet{li2020efficient} proposed an asynchronous method with bidirectional (sparse) gradient compression. Nevertheless, \citet{Castiglia2022}, where direct compression is used, is the only work on compressed VFL with theoretical guarantees. For a more a detailed discussion on VFL, see~\citet{liu2024vertical,yang2023survey}.
	
\section{Preliminaries}

We now define the class of contractive compressors, which we consider throughout this paper.
\begin{definition}[Contractive compressor] \label{def:contractive_compressor}
	A map $\mathcal{C}\colon\mathbb{R}^d\mapsto\mathbb{R}^d$ is a contractive compressor if there exists $\alpha\in(0,1]$ such that,\footnote{In Definition~\ref{def:contractive_compressor}, we use a common, simplified notation, omitting the randomness in $\mathcal{C}\colon\mathbb{R}^d\times\Omega\to\mathbb{R}^d$. We assume that the randomness $\omega$ in $\mathcal{C}(\bm{v},\omega)$ at different applications of $\mathcal{C}$ is independent.}
	\begin{equation} \label{eq:biased_compressor}
		\forall\bm{v}\in\mathbb{R}^d\colon
		\quad
		\mathbb{E}\lVert\mathcal{C}(\bm{v})-\bm{v}\rVert^2
		\leq (1-\alpha)\lVert\bm{v}\rVert^2,
	\end{equation}
	where the expectation is taken with respect to the (possible) randomness in $\mathcal{C}$.
\end{definition}

\subsection{Error feedback}

{When transmitting a converging sequence of vectors $\{\bm{v}^t\}$, error feedback mechanisms can reduce the compression error compared to direct compression. In communication-compressed optimization, this allows for faster and more stable convergence.}

{In direct compression, each $\bm{v}^{t}$ is compressed independently, with $\mathcal{C}(\bm{v}^t)$ simply replacing $\bm{v}^t$ at the receiver. In contrast, in error feedback compression, the receiver employs a surrogate for $\bm{v}^t$ that incorporates information from previous steps $i=0,\dots,t-1$. This is achieved by resorting to an auxiliary vector that is stored in memory and updated at each step, leveraging \textit{feedback} from the compression of $\{\bm{v}^i\colon i=0,\dots,t-1 \}$ to refine the surrogate for $\bm{v}^t$.}

{Earlier communication-compressed optimization methods employing error feedback mechanisms were motivated by sigma-delta modulation~\citep{inose1962telemetering}. In these methods, the auxiliary vector accumulated the compression \textit{error} across steps, adding the accumulated error to the current vector $\bm{v}^t$ before compressing and transmitting it~\citep{seide20141, alistarh2018convergence, karimireddy2019error}.}

{More recently, a new type of error feedback mechanism has been proposed, where the auxiliary vector $\bm{s}^t$ tracks $\bm{v}^t$ directly, rather than the accumulated compression error. This mechanism uses the (compressed) difference between $\bm{v}^{t+1}$ and $\bm{s}^t$, that is, the \textit{error} of the surrogate, as the feedback. This approach was first introduced in~\citet{mishchenko2024distributed} for unbiased compressors and was later extended to the more general class of contractive compressors in EF21~\citep{richtarik2021ef21}. More formally, in EF21, the surrogate $\bm{s}^t$ is initialized as $\bm{s}^0=\mathcal{C}(\bm{v}^0)$ and updated recursively as:
	\begin{equation} \label{eq:error_feedback}
		\forall t\geq0\colon
		\quad
		\bm{s}^{t+1}
		=
		\bm{s}^{t}+\mathcal{C}(\bm{v}^{t+1}-\bm{s}^{t}).
	\end{equation}
	Note that, unlike $\mathcal{C}(\bm{v}^t)$, $\bm{s}^t$ is not necessarily in the range of $\mathcal{C}$. For example, if $\mathcal{C}$ uses sparsification, then the surrogate at the receiver in direct compression, $\mathcal{C}(\bm{v}^t)$, must be sparse, whereas $\bm{s}^t$ does not need to be. By continually updating $\bm{s}^t$ with the compressed error, we ensure that it tracks $\bm{v}^t$. Moreover, since these updates are compressed, the communication cost remains the same as in direct compression. To maintain consistency, the surrogate $\bm{s}^t$ is updated at both the sender (client or server) and the receiver.
	In this work, we adopt an EF21-based error feedback mechanism and henceforth refer to it simply as error feedback.}

\subsection{Problem setup}
Let $\bm{h}_{0n}(\bm{x}_0)=\bm{x}_0$ and $
\bm{h}_n(\bm{x})= (\bm{h}_{0n}(\bm{x}_0),\dots,\bm{h}_{Kn}(\bm{x}_K))\in\mathbb{R}^{E}$ where $\bm{h}_{kn}(\bm{x}_k) \in\mathbb{R}^{E_k}$ and $E=\sum_{k=0}^K E_k$, for all $n$. Further, let
\[
\bm{H}_{k}(\bm{x}_k)
=
\begin{bmatrix}
	\bm{h}_{k1}(\bm{x}_k)\\
	\vdots\\
	\bm{h}_{kN}(\bm{x}_k)
\end{bmatrix}
\in\mathbb{R}^{N\times E_k}
\]
and
\[
\bm{H}(\bm{x})\coloneqq
[\bm{H}_0(\bm{x}_0),\bm{H}_1(\bm{x}_1), \dots, \bm{H}_K(\bm{x}_K)]
\in\mathbb{R}^{N\times E}.
\]
Further, we define $\Phi\colon \mathbb{R}^{N\times E} \mapsto \mathbb{R}$ as follows:
\[
f(\bm{x})
=
\frac{1}{N} \sum_{n=1}^N \phi_n \left(\bm{h}_n(\bm{x})\right)
\eqqcolon
\Phi
\left(\bm{H}(\bm{x})\right).
\]
Throughout most of the paper, we assume that $\phi_n$ contains the label of $\bm{\xi}_n$ and that $\phi_n$ is known by all the clients and the server. This assumption, known as ``relaxed protocol''~\citep{liu2024vertical}, is sometimes made in VFL~\citep{hu2019fdml,Castiglia2022,castiglia2023flexible} and has applications, for example, in credit score prediction. We also address the case of private labels, proposing a modified version of our method for that setting in Section~\ref{sec:efvfl_private_labels}.

We assume that $f$ has an optimal value $f^\star\coloneqq\min_{\bm{x}}f(\bm{x})>-\infty$ and make the following assumptions, where $\nabla$ denotes not only the gradient of scalar-valued functions but, more generally, the derivative of a (possibly multidimensional) map.

\begin{assumption}[Smoothness] \label{ass:smoothness}
	A function~$h\colon \mathbb{R}^d\mapsto \mathbb{R}$ is $L$-smooth if there exists a positive constant $L$ such that
	\begin{equation} \label{eq:lsmooth}
		\forall
		\bm{x},\bm{y}\in\mathbb{R}^d
		\colon
		\quad
		\lVert \nabla h(\bm{x}) - \nabla h(\bm{y}) \rVert
		\leq
		L
		\lVert \bm{x} - \bm{y}\rVert.
		\tag{A\ref*{ass:smoothness}}
	\end{equation}
	We assume $f$ is $L_f$-smooth and $\Phi$ is $L_\Phi$-smooth and let $L=\max\{L_f,L_\Phi\}$.
\end{assumption}

\begin{assumption}[Bounded derivative] \label{ass:bounded_embedding}
	Map $\bm{F}\colon \mathbb{R}^{p_1}\mapsto \mathbb{R}^{p_2\times p_3}$ has a bounded derivative if there exists a positive constant $H$ such that
	\begin{equation} \label{eq:bounded_embedding}
		\forall
		\bm{x}
		\in
		\mathbb{R}^{p_1}
		\colon
		\quad
		\lVert \nabla \bm{F}(\bm{x}) \rVert\leq H,
		\tag{A\ref*{ass:bounded_embedding}}
	\end{equation}
	where $\lVert \nabla \bm{F}(\bm{x}) \rVert$ is the Euclidean norm of the third-order tensor $\nabla \bm{F}(\bm{x})$. We assume \eqref{eq:bounded_embedding} holds for $\{\bm{H}_k\}_{k=0}^K$.
\end{assumption}

Note that, in Assumption~\ref{ass:bounded_embedding}, we do \textit{not} assume that our objective function~$f$ has a bounded gradient. We only require the local representation-extracting maps $\{\bm{H}_k\}$ to have a bounded derivative. The same assumption is also made in~\citet{Castiglia2022}.

\section{Proposed method} \label{sec:training}
\label{sec:efvfl}

To solve Problem~\eqref{eq:splitnn} with a gradient-based method, we need to perform a forward and a backward pass at each step $t$ to compute the gradient of our objective function. In VFL, a standard uncompressed algorithm employing a single local update per communication round is mathematically equivalent to gradient descent.\footnote{When performing multiple local updates, we lose this mathematical equivalence. In that case, we instead get a parallel (block) coordinate descent method where the simultaneous updates use stale information about the other blocks of variables.} This algorithm, which our method recovers if we set $\mathcal{C}$ to be the identity map, is as follows:
\begin{itemize}
	\item In the forward pass, each client~$k$ computes $\bm{H}_k(\bm{x}_k^t)$ and sends it to the server, which then computes $f(\bm{x}^t) = \Phi\left(\bm{H}(\bm{x}^t)\right)$.
	
	\item In the backward pass, first, the server backpropagates through $\Phi$, obtaining $\nabla_0\Phi ( \bm{x}_0^t,\{\bm{H}_k(\bm{x}_k^t)\}^K_{k=1} )=\nabla_0 f( \bm{x}^t)$ and $\nabla_k \Phi(\{\bm{H}_k(\bm{x}_k^t)\})$, for all $k$, where $\nabla_k$ denotes the derivative with respect to block~$k$. The former is used to update $\bm{x}_0^t$, while the latter is sent to each client $k$, which uses this derivative to continue backpropagation over its local model, allowing it to compute $\nabla_kf(\bm{x}^t)$.
\end{itemize}
We repeat these steps until convergence. Let us now cover the general case, where $\mathcal{C}$ may not be the identity map.

\vspace{1mm}
\paragraph{Forward pass.}
An exact forward pass would require each client~$k$ to send $\bm{H}_k(\bm{x}_k^t)$ to the server, bringing a significant communication overhead. To address this, in our method, the server model does not have as input $\bm{H}_k(\bm{x}_k^t)$, but rather a surrogate for it, $\bm{G}_k^t$, which is initialized as $\bm{G}_k^0=\mathcal{C}( \bm{H}_{k}(\bm{x}_k^{0}))$ and is updated as follows, as in~\eqref{eq:error_feedback}:
\[
\forall k\in[K]:
\quad
\bm{G}_k^t
\coloneqq
\bm{G}_k^{t-1}
+
\bm{C}_{k}^{t},
\quad
\bm{C}_{k}^{t}\coloneqq\mathcal{C}( \bm{H}_{k}(\bm{x}_k^{t}) -\bm{G}_{k}^{t-1}).
\]
This requires keeping $\bm{G}_k^t$, of size $N\times E_k$, in memory at client~$k$ and at the server. (Note that $\bm{G}_k^t$ is often smaller than the local dataset~$\mathcal{D}_k$.) The server thus computes the function
$\Phi \left( \bm{x}_0^t,\bm{G}_1^t,\dots,\bm{G}_K^t \right)$, which acts as a surrogate for the original objective $f(\bm{x}^t)=\Phi ( \bm{x}_0^t,\bm{H}_1(\bm{x}_1^t),\dots,\bm{H}_K(\bm{x}_K^t))$, as illustrated in Figure~\ref{fig:efvfl_model}.

\vspace{1mm}
\paragraph{Backward pass.}
The server performs a backward pass over the server model, obtaining $\nabla_0\Phi(\bm{x}_0^t,\{\bm{G}_k^t\}^K_{k=1})$, which it uses as a surrogate for $\nabla_0\Phi ( \bm{x}_0^t,\{\bm{H}_k(\bm{x}_k^t)\}^K_{k=1} )=\nabla_0 f( \bm{x}^t)$ to update $\bm{x}_0^t$. Similarly, to update each local model $\bm{x}_k^t$, for $k\in[K]$, we want client~$k$ to have a surrogate for
\[
\nabla_kf(\bm{x}^t)
=
\sum_{i,j=1}^{N,E_k}
\left[\nabla_k \Phi(\{\bm{H}_k(\bm{x}_k^t)\}^K_{k=0})\right]_{ij}
\left[\nabla \bm{H}_k (\bm{x}_k^t)\right]_{ij:}
.
\]
However, while $\nabla \bm{H}_k (\bm{x}_k^t)$ can be computed at each client $k$, the $\nabla_k \Phi$ term cannot, as client $k$ does not have access to $\bm{H}_\ell(\bm{x}_\ell^t)$, for $\ell\not=k$. So, we instead use the following surrogate for $\nabla_kf(\bm{x}^t)$:
\begin{equation} \label{eq:grad_surrogate}
	\bm{g}^t_k
	\coloneqq
	\sum_{i,j=1}^{N,E_k}
	\left[\tilde{\nabla}_k^t \Phi\right]_{ij}
	\left[\nabla \bm{H}_k (\bm{x}_k^t)\right]_{ij:}
	,
\end{equation}
where $\tilde{\nabla}_k^t \Phi \coloneqq
\nabla_k \Phi( ..., \bm{G}_{k-1}^t, \bm{H}_{k}(\bm{x}_{k}^t), \bm{G}_{k+1}^t, ... )$. Since the server does not hold $\bm{H}_k(\bm{x}_k^t)$, it cannot compute $\tilde{\nabla}_k^t \Phi$. Thus, the server broadcasts $\{\bm{C}_k^t\}_{k=0}^K$, so that each client~$k$, which does hold $\bm{H}_k(\bm{x}_k^t)$, can compute $\{\bm{G}_\ell^t\}_{\ell\not=k}$ and use it to perform a forward and a backward pass over the server model locally, obtaining $\tilde{\nabla}_k^t \Phi$. Thus, while the forward pass only requires the error feedback module at each client~$k$ to hold the estimate~$\bm{G}^t_k$, when we account for the backward pass too, each machine~$k\in\{0,1,\dots,K\}$ must hold $\{\bm{G}_\ell^t\}_{\ell=0}^K$. We write our update as:
\[
\bm{x}^{t+1}
=
\bm{x}^{t}-\eta\bm{g}^t
\quad
\text{where}
\quad
\bm{g}^t
\coloneqq
(\bm{g}^t_0, \dots, \bm{g}^t_K)
\]
and $\eta$ is the stepsize.

\IncMargin{1.5em}
\begin{algorithm}[t]
	\DontPrintSemicolon
	\SetKwInOut{KwIn}{Input}
	\KwIn{initial point~$\bm{x}^{0}$, stepsize~$\eta$, and initial surrogates $\{\bm{G}_k^0=\mathcal{C}(\bm{H}_k(\bm{x}_k^0)) \}$.}
	\For{$t=0,\dots,T-1$}{
		Update $\bm{x}_k^{t+1} = \bm{x}_k^{t}-\eta\tilde{\bm{g}}^t_k$ in parallel, for $k\in\{0,1,\dots,K\}$, based on a shared sample $\mathcal{B}^{t}\subseteq[N]$.\;
		Compute and send $\bm{C}_{k}^{t+1}=\mathcal{C}( \bm{H}_{k}(\bm{x}_k^{t+1}) -\bm{G}_{k}^{t})$ to the server in parallel, for $k\in[K]$.\;
		Server broadcasts $\{\bm{C}_{k}^{t+1}\}_{k=0}^K$ to all clients.\;
		Update $\bm{G}_{k}^{t+1}=\bm{G}_{k}^{t}+\bm{C}_{k}^{t+1}$ in parallel, for $k\in\{0,1,\dots,K\}$.
	}
	\caption{\texttt{EF-VFL}}
	\label{alg:efvfl}
\end{algorithm}
\DecMargin{1.5em}

\vspace{1mm}
\paragraph{Mini-batch.}
For the sake of computation efficiency, we further allow for the use of mini-batch approximations of the objective. Without compression, or with direct compression, the use of mini-batches allows client~$k$ to send only the entries of $\bm{H}_{k}(\bm{x}_k)$ corresponding to mini-batch $\mathcal{B}\subseteq[N]$, of size $B$, denoted by $\bm{H}_{k\mathcal{B}}(\bm{x}_k)\in\mathbb{R}^{B\times E_k}$, instead of $\bm{H}_{k}(\bm{x}_k)\in\mathbb{R}^{N\times E_k}$. Yet, our method provides all machines with an estimate for all the entries of $\{\bm{H}_k(\bm{x}_k)\}$ at all times, the error feedback states $\{\bm{G}_k\}$. Therefore, our communications, needed to update $\bm{G}_k$, may not depend on $N$ and $B$ at all. This is determined by our choice of $\mathcal{C}$.

Our mini-batch surrogates depend on the entries of $\bm{H}_{k}(\bm{x}_k)$ and $\bm{G}_k$ corresponding to $\mathcal{B}$, $\bm{H}_{k\mathcal{B}}(\bm{x}_k)$ and $\bm{G}_{k\mathcal{B}}$.
(Note that $\bm{H}_{0\mathcal{B}}(\bm{x}_0)=\bm{H}_{0}(\bm{x}_0)$ and $\bm{G}_{0\mathcal{B}}^t=\bm{G}_{0}^t$.) Thus, we approximate the partial derivative of the mini-batch function $f_{\mathcal{B}}(\bm{x})=\Phi_{\mathcal{B}}( \{\bm{H}_{k\mathcal{B}}(\bm{x}_k)\} ) = \frac{1}{B} \sum_{n\in\mathcal{B}} \phi_n \left(\bm{h}_n(\bm{x})\right)$ with respect to $\bm{x}_k$ by:
\[
\tilde{\bm{g}}^t_k
\coloneqq
\sum_{i\in\mathcal{B}^t}
\sum_{j=1}^{E_k}
\left[\tilde{\nabla}_k^t \Phi_{\mathcal{B}^t}\right]_{ij}
\left[\nabla \bm{H}_{k\mathcal{B}^t} (\bm{x}_k^t)\right]_{ij:}
,
\]
where
$\tilde{\nabla}_k^t \Phi_{\mathcal{B}^t}
\coloneqq
\nabla_k \Phi_{\mathcal{B}^t}(
...,
\bm{G}_{k-1,\mathcal{B}^t}^t,
\bm{H}_{k\mathcal{B}^t}(\bm{x}_{k}^t),
\bm{G}_{k+1,\mathcal{B}^t}^t,
...
)$.
We write the mini-batch version of the update as:
\[
\bm{x}^{t+1}=\bm{x}^{t}-\eta\tilde{\bm{g}}^t
\quad
\text{where}
\quad
\tilde{\bm{g}}^t \coloneqq \left( \tilde{\bm{g}}^t_0 , \dots , \tilde{\bm{g}}^t_K \right)
.
\]

We now describe our method, which we summarize in Algorithm~\ref{alg:efvfl}.
\begin{itemize} 
	\item \textbf{Initialization:}
	We initialize our model parameters as $\bm{x}^0$. Each machine~$k\in\{0,1,\dots,K\}$ must hold $\bm{x}_k^0$ and our compression estimates $\{\bm{G}_k^0=\mathcal{C}(\bm{H}_k(\bm{x}_k^0))\}_{k=0}^K$.
	
	\item \textbf{Model parameters update:}
	In parallel, all machines~$k\in\{0,1,\dots,K\}$ take a (stochastic) coordinate descent step with respect to their local surrogate objective, updating $\bm{x}^{t+1}_k = \bm{x}^{t}_k-\eta\tilde{\bm{g}}^t_k$ based on a shared batch~$\mathcal{B}^{t}$, sampled locally at each client following a shared seed.
	
	\item \textbf{Compressed communications:}
	All clients $k\in[K]$ compute $\bm{C}_{k}^{t+1}$ and send it to the server, who broadcasts $\{\bm{C}_{k}^{t+1}\}$.
	
	\item \textbf{Compression estimates update:}
	Lastly, all machines~$k\in\{0,1,\dots,K\}$ use the compressed error feedback $\{\bm{C}_{k}^{t+1}\}$ to update their (matching) compression estimates $\{\bm{G}_{k}^{t}\}$.
\end{itemize}

Note that, in the absence of compression (that is, if $\mathcal{C}$ is the identity operator), each client~$k$ must send $\bm{C}_{k}^{t+1}$, of size $N\times E_k$, at each iteration. Further, the server must broadcast $\{\bm{C}_{k}^{t+1}\}_{k=0}^K$. Thus, the total communication complexity of \texttt{EF-VFL} in the absence of compression is $\mathcal{O}(N \cdot E \cdot T)$. When compression is used, $N \cdot E$ is replaced by some smaller amount which depends on the compression mechanism. For example, for top-$k$ sparsification (defined in Section~\ref{sec:experiments}), the communication complexity is reduced to $\mathcal{O}(k \cdot T)$.

In Algorithm~\ref{alg:efvfl}, we formulate our method in a general setting which allows for the compression of both $\{\bm{H}_{k}(\bm{x}_k)\}_{k=1}^K$ and $\bm{x}_0$. This setting is covered by our analysis in Section~\ref{sec:convergence_guarantees}. However, in general, the bottleneck lies in the uplink (client-to-server) communications, rather than the server broadcasting~\citep{haddadpour2021federated}. Therefore, our experiments in Section~\ref{sec:experiments} focus on the compression of $\{\bm{H}_{k}(\bm{x}_k)\}_{k=1}^K$, rather than $\bm{x}_0$.

\subsection{Adapting our method for handling private labels}
\label{sec:efvfl_private_labels}

In this section, we propose an adaptation of \texttt{EF-VFL} to allow for private labels. That is, we remove the assumption that all clients hold $\phi_n$, which contains the label of $\bm{\xi}_n$. Instead, only the server holds the labels. Further, in this adaptation, the parameters of the server model, $\bm{x}_0$, are not shared with the clients either.

Note that, without holding $\phi_n$, the clients cannot perform the entire forward pass locally. Instead, in this setting, the forward and backward pass over $\phi_n$ take place at the server, while the forward and backward pass over $\bm{H}_{k}(\bm{x}_k)$ takes place at client~$k$. More precisely, in the forward pass, each client~$k$ sends $\bm{C}_k$ to the server, who holds $\bm{x}_0$ and the labels, and can thus compute the loss. Then, for the backward pass, the server backpropagates over its model and sends only the derivative of the loss function with respect to $\bm{G}_k$ to each client~$k$. This requires replacing our surrogate of $\nabla_kf(\bm{x}^t)$ in~\eqref{eq:grad_surrogate}, which uses the exact local representation $\bm{H}_{k}(\bm{x}_k)$ at each client, by one based on our error feedback surrogates:
\begin{equation} \label{eq:private_label_surrogate}
	\bm{\nabla}^t_k
	\coloneqq
	\sum_{i,j=1}^{N,E_k}
	\left[\nabla_k \Phi\left(\bm{H}_0(\bm{x}_0^t),
	\{\bm{G}_j^t\}^K_{j=1}\right)\right]_{ij}
	\left[\nabla \bm{H}_k (\bm{x}_k^t)\right]_{ij:}
	.
\end{equation}
Note that we do not backpropagate through the error-feedback update.

More generally, we use the following (possibly) mini-batch update vector:
\[
\underbrace{\sum_{i\in\mathcal{B}^t}
	\sum_{j=1}^{E_k}
	\left[\nabla_k \Phi_{\mathcal{B}^t}\left(\bm{H}_{0\mathcal{B}^t}(\bm{x}_0^t),
	\{\bm{G}_{j\mathcal{B}^t}^t\}^K_{j=1}\right)\right]_{ij}
	\left[\nabla \bm{H}_{k\mathcal{B}^t} (\bm{x}_k^t)\right]_{ij:}}_{\eqqcolon\tilde{\bm{\nabla}}^t_k}
.
\]
We summarize the adaption of the \texttt{EF-VFL} method to the private labels setting in Algorithm~\ref{alg:efvfl_private_labels}.

Allowing for private labels broadens the range of applications for our method, since many VFL applications require not only private features, but also private labels, rendering any method requiring public labels inapplicable. However, as we will see in Section~\ref{sec:private_labels_exps}, using surrogate~\eqref{eq:private_label_surrogate} instead of \eqref{eq:grad_surrogate} can slow down convergence. Further, unlike Algorithm~\ref{alg:efvfl}, which can be easily extendable to allow for multiple local updates at the clients between rounds of communication, Algorithm~\ref{alg:efvfl_private_labels} works only for a single local update. This is because, for an VFL algorithm to perform multiple local updates, $\phi_n$ and $\bm{x}_0$ must be available at the clients, so that the forward and backward passes over the server model and the loss function can be performed locally after each update.

\IncMargin{1.5em}
\begin{algorithm}[t]
	\DontPrintSemicolon
	\SetKwInOut{KwIn}{Input}
	\KwIn{initial point~$\bm{x}^{0}$, stepsize~$\eta$, and initial surrogates $\{\bm{G}_k^0=\mathcal{C}(\bm{H}_k(\bm{x}_k^0)) \}$.}
	\For{$t=0,\dots,T-1$}{
		Update $\bm{x}_k^{t+1} = \bm{x}_k^{t}-\eta\tilde{\bm{\nabla}}^t_k$ in parallel, for $k\in\{0,1,\dots,K\}$, based on a shared sample $\mathcal{B}^{t}\subseteq[N]$.\;
		Compute and send $\bm{C}_{k}^{t+1}=\mathcal{C}( \bm{H}_{k}(\bm{x}_k^{t+1}) -\bm{G}_{k}^{t})$ to the server in parallel, for $k\in[K]$.\;
		Server sends $\nabla_k \Phi_{\mathcal{B}^t}\left(\bm{H}_{0\mathcal{B}^t}(\bm{x}_0^t),
		\{\bm{G}_{j\mathcal{B}^t}^t\}^K_{j=1}\right)$ to client~$k$, in parallel, for $k\in\{0,1,\dots,K\}$.\;
		Update $\bm{G}_{k}^{t+1}=\bm{G}_{k}^{t}+\bm{C}_{k}^{t+1}$ in parallel, for $k\in\{0,1,\dots,K\}$.
	}
	\caption{\texttt{EF-VFL} with private labels}
	\label{alg:efvfl_private_labels}
\end{algorithm}
\DecMargin{1.5em}
	
\section{Convergence guarantees}
\label{sec:convergence_guarantees}

In this section, we provide convergence guarantees for \texttt{EF-VFL}. We present our results for Algorithm~\ref{alg:efvfl} only, rather then stating them again for Algorithm~\ref{alg:efvfl_private_labels}, since they exhibit only a minor difference in Lemma~\ref{lemma:surrogate_offset_bound} and in the main theorem, where the constant $K$ is replaced with $K+1$. We defer the details to Appendix~\ref{sec:preliminaries}.

First, let us define the following sigma-algebra
\[
\mathcal{F}_t\coloneqq \sigma(\bm{G}^0,\bm{x}^1,\bm{G}^1,
\dots, \bm{x}^{t},\bm{G}^{t}),
\]
where $\bm{G}^t\coloneqq\{\bm{G}^t_0,\dots,\bm{G}^t_K\}$. For the sake of conciseness, we further let $\mathbb{E}_{\mathcal{F}}$ denote the conditional expectation $\mathbb{E}[\cdot\mid \mathcal{F}]$ with sigma-algebra $\mathcal{F}$. We use the following assumptions on our stochastic update vector~$\tilde{\bm{g}}^t$ and the use of mini-batches.

\begin{assumption}[Unbiased] \label{ass:unbiased_grad}
	We assume that our stochastic update vector is unbiased:
	\begin{equation} \label{eq:unbiased_grad}
		\forall (\bm{x},t) \in\mathbb{R}^d\times\{0,1,\dots,T-1\}
		\colon
		\quad
		\mathbb{E}_{\mathcal{F}_t} \left[\tilde{\bm{g}}^t\right]
		=
		\bm{g}^t.
		\tag{A\ref*{ass:unbiased_grad}}
	\end{equation}
\end{assumption}

\begin{assumption}[Bounded variance] \label{ass:bounded_var}
	We assume that there exists a constant $\sigma\geq0$ such that
	\begin{equation} \label{eq:bounded_var}
		\forall (\bm{x},t) \in\mathbb{R}^d\times\{0,1,\dots,T-1\}
		\colon
		\quad
		\mathbb{E}_{\mathcal{F}_t}
		\lVert
		\tilde{\bm{g}}^t - \bm{g}^t
		\rVert^2
		\leq
		\frac{\sigma^2}{B}
		.
		\tag{A\ref*{ass:bounded_var}}
	\end{equation}
\end{assumption}

We now present Lemma~\ref{lemma:surrogate_offset_bound} and Lemma~\ref{lemma:distortion_recursive_bound_w_embedding_diff}, which we will use to prove our main theorems. We let $D^{(t)}\coloneqq \sum_{k=0}^K \left\lVert\bm{G}_k^{t}-\bm{H}_k(\bm{x}_k^t)\right\rVert^2$ denote the total distortion (caused by compression) at time~$t$.

\begin{lemma}[Surrogate offset bound] \label{lemma:surrogate_offset_bound}
	If $\Phi$ is $L$-smooth~\eqref{eq:lsmooth} and $\{\bm{H}_k\}$ have bounded derivatives~\eqref{eq:bounded_embedding}, then, for all $t\geq0$,
	\begin{equation} \label{eq:surrogate_offset_bound}
		\lVert \bm{g}^{t} - \nabla f(\bm{x}^{t}) \rVert^2
		\leq
		KH^2L^2D^{(t)}.
	\end{equation}
\end{lemma}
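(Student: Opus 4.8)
The plan is to bound the per-block error $\lVert \bm{g}_k^t - \nabla_k f(\bm{x}^t)\rVert$ separately and then sum over $k\in\{0,\dots,K\}$. The key observation is that both $\bm{g}_k^t$ and $\nabla_k f(\bm{x}^t)$ arise from contracting the \emph{same} tensor $\nabla\bm{H}_k(\bm{x}_k^t)$ against a matrix: for $\bm{g}_k^t$ this matrix is $\tilde{\nabla}_k^t\Phi = \nabla_k\Phi(\dots,\bm{G}_{k-1}^t,\bm{H}_k(\bm{x}_k^t),\bm{G}_{k+1}^t,\dots)$, whereas for $\nabla_k f(\bm{x}^t)$ it is $\nabla_k\Phi(\bm{H}_0(\bm{x}_0^t),\dots,\bm{H}_K(\bm{x}_K^t))$. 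Hence their difference is a single contraction of $\nabla\bm{H}_k(\bm{x}_k^t)$ against the difference of these two partial-derivative matrices, and everything reduces to controlling (i) the contraction and (ii) the perturbation in the argument of $\nabla_k\Phi$.

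First I would handle the contraction. Writing $\bm{g}_k^t - \nabla_k f(\bm{x}^t) = \sum_{i,j}\big([\tilde{\nabla}_k^t\Phi]_{ij} - [\nabla_k\Phi(\{\bm{H}_\ell(\bm{x}_\ell^t)\})]_{ij}\big)[\nabla\bm{H}_k(\bm{x}_k^t)]_{ij:}$, an entrywise Cauchy--Schwarz step together with the bounded-derivative assumption~\eqref{eq:bounded_embedding} gives $\lVert\bm{g}_k^t - \nabla_k f(\bm{x}^t)\rVert \le H\,\lVert \tilde{\nabla}_k^t\Phi - \nabla_k\Phi(\{\bm{H}_\ell(\bm{x}_\ell^t)\})\rVert_F$, since for any matrix $\bm{A}$ one has $\lVert\sum_{ij}\bm{A}_{ij}[\nabla\bm{H}_k]_{ij:}\rVert^2 \le \lVert\bm{A}\rVert_F^2\,\lVert\nabla\bm{H}_k\rVert^2 \le H^2\lVert\bm{A}\rVert_F^2$.

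Next I would control the perturbation using $L$-smoothness of $\Phi$. Since $\nabla\Phi$ is $L$-Lipschitz and $\nabla_k\Phi$ is one of its blocks, $\lVert\tilde{\nabla}_k^t\Phi - \nabla_k\Phi(\{\bm{H}_\ell(\bm{x}_\ell^t)\})\rVert_F \le L\,\lVert\bm{U}_k - \bm{V}\rVert$, where $\bm{U}_k$ and $\bm{V}$ are the two evaluation points of $\nabla_k\Phi$. The crucial point is that $\bm{U}_k$ uses the \emph{exact} representation $\bm{H}_k(\bm{x}_k^t)$ in its own $k$-th block — exactly as $\bm{V}$ does — so the two points agree on block $k$ and differ only on the blocks $\ell\neq k$, yielding $\lVert\bm{U}_k - \bm{V}\rVert^2 = \sum_{\ell\neq k}\lVert\bm{G}_\ell^t - \bm{H}_\ell(\bm{x}_\ell^t)\rVert^2 \le D^{(t)}$. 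Combining the two steps, $\lVert\bm{g}_k^t - \nabla_k f(\bm{x}^t)\rVert^2 \le H^2L^2\sum_{\ell\neq k}\lVert\bm{G}_\ell^t - \bm{H}_\ell(\bm{x}_\ell^t)\rVert^2$.

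Finally I would sum over $k$: $\lVert\bm{g}^t - \nabla f(\bm{x}^t)\rVert^2 = \sum_{k=0}^K\lVert\bm{g}_k^t - \nabla_k f(\bm{x}^t)\rVert^2 \le H^2L^2\sum_{k=0}^K\sum_{\ell\neq k}\lVert\bm{G}_\ell^t - \bm{H}_\ell(\bm{x}_\ell^t)\rVert^2$. In the double sum each distortion term $\lVert\bm{G}_\ell^t - \bm{H}_\ell(\bm{x}_\ell^t)\rVert^2$ is counted once for every $k\neq\ell$, i.e.\ exactly $K$ times, so the double sum equals $K D^{(t)}$ and the claimed bound $KH^2L^2D^{(t)}$ follows. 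I expect the main obstacle to be purely notational — correctly formalizing the third-order-tensor contraction and its norm bound — rather than conceptual; the one genuinely substantive step is noticing that block $k$ drops out of $\bm{U}_k-\bm{V}$, which is precisely what produces the factor $K$ (and, for Algorithm~\ref{alg:efvfl_private_labels}, where the $k$-th block instead uses $\bm{G}_k^t$, the factor $K+1$).
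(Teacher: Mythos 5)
Your proposal is correct and follows essentially the same route as the paper's proof: decompose blockwise, bound the tensor contraction by $H^2$ via the bounded-derivative assumption, apply $L$-smoothness of $\Phi$ to the difference of evaluation points (which agree on block $k$), and count each distortion term $K$ times in the double sum. The observation that block $k$ drops out of the perturbation—and that retaining $\bm{G}_k^t$ there instead yields the $K+1$ factor for the private-labels variant—is exactly the paper's argument.
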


\begin{proof}
	See Appendix~\ref{app:lemma1_proof}.
\end{proof}

\begin{lemma}[Recursive distortion bound] \label{lemma:distortion_recursive_bound_w_embedding_diff}
	Let $\{\bm{x}^t\}$ be a sequence generated by Algorithm~\ref{alg:efvfl}. If $\mathcal{C}$ is a  contractive compressor~\eqref{eq:biased_compressor}, $\{\bm{H}_k\}$ have bounded derivatives~\eqref{eq:bounded_embedding},
	and \eqref{eq:unbiased_grad} and \eqref{eq:bounded_var} hold, then, for all $t\geq0$ and $\epsilon>0$:
	\begin{equation}\label{eq:distortion_recursive_bound}
		\begin{split}
			\mathbb{E} D^{(t+1)}
			&\leq
			(1-\alpha)(1+\epsilon) \mathbb{E} D^{(t)}
			\\&\quad+
			(1-\alpha)(1+\epsilon^{-1}) \eta^2 H^2 \left(\mathbb{E} \lVert \bm{g}^{t} \rVert^2+ \frac{\sigma^2}{B}\right).
		\end{split}
	\end{equation}
\end{lemma}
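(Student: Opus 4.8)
The plan is to exploit the EF21 surrogate update to collapse the per-coordinate distortion into a direct application of the contractive property, and then to convert the resulting one-step bound into the stated recursion via Young's inequality, the Lipschitz-type consequence of the bounded-derivative assumption, and a bias--variance split. The one subtlety to watch is the order in which the two sources of randomness (the compressor noise at step $t+1$ and the mini-batch at step $t$) are integrated.

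First I would rewrite the distortion at time $t+1$ so as to expose the compression. Using $\bm{G}_k^{t+1}=\bm{G}_k^{t}+\mathcal{C}(\bm{H}_k(\bm{x}_k^{t+1})-\bm{G}_k^{t})$ and setting $\bm{v}_k^{t+1}\coloneqq\bm{H}_k(\bm{x}_k^{t+1})-\bm{G}_k^{t}$, a one-line cancellation gives the key identity $\bm{G}_k^{t+1}-\bm{H}_k(\bm{x}_k^{t+1})=\mathcal{C}(\bm{v}_k^{t+1})-\bm{v}_k^{t+1}$. Conditioning on $\bm{x}^{t+1}$ (hence on $\mathcal{F}_t$ and on the step-$t$ mini-batch) and taking expectation $\mathbb{E}_\omega$ over the independent compressor randomness, Definition~\ref{def:contractive_compressor} yields $\mathbb{E}_\omega\lVert\bm{G}_k^{t+1}-\bm{H}_k(\bm{x}_k^{t+1})\rVert^2\le(1-\alpha)\lVert\bm{v}_k^{t+1}\rVert^2$. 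Summing over $k\in\{0,\dots,K\}$ bounds $\mathbb{E}_\omega D^{(t+1)}$ by $(1-\alpha)\sum_k\lVert\bm{v}_k^{t+1}\rVert^2$.

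Next I would split $\bm{v}_k^{t+1}=(\bm{H}_k(\bm{x}_k^{t+1})-\bm{H}_k(\bm{x}_k^{t}))+(\bm{H}_k(\bm{x}_k^{t})-\bm{G}_k^{t})$ and apply Young's inequality $\lVert\bm{a}+\bm{b}\rVert^2\le(1+\epsilon)\lVert\bm{b}\rVert^2+(1+\epsilon^{-1})\lVert\bm{a}\rVert^2$, identifying the second summand with $\bm{b}$ so that summing over $k$ reproduces $(1+\epsilon)D^{(t)}$. For the first summand, the bounded-derivative assumption~\eqref{eq:bounded_embedding} makes each $\bm{H}_k$ an $H$-Lipschitz map, so together with $\bm{x}_k^{t+1}-\bm{x}_k^{t}=-\eta\tilde{\bm{g}}_k^{t}$ we get $\lVert\bm{H}_k(\bm{x}_k^{t+1})-\bm{H}_k(\bm{x}_k^{t})\rVert^2\le H^2\eta^2\lVert\tilde{\bm{g}}_k^{t}\rVert^2$, and summing over $k$ collects these into $H^2\eta^2\lVert\tilde{\bm{g}}^{t}\rVert^2$. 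This produces the conditional bound $\mathbb{E}_\omega D^{(t+1)}\le(1-\alpha)(1+\epsilon)D^{(t)}+(1-\alpha)(1+\epsilon^{-1})\eta^2H^2\lVert\tilde{\bm{g}}^{t}\rVert^2$, valid given $\bm{x}^{t+1}$ (note $D^{(t)}$ is $\mathcal{F}_t$-measurable and $\lVert\tilde{\bm{g}}^{t}\rVert^2$ is determined by $\bm{x}^{t+1}$ and $\bm{x}^{t}$).

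Finally I would take the remaining expectation over the mini-batch randomness via the tower property. The only stochastic term left is $\lVert\tilde{\bm{g}}^{t}\rVert^2$, and the bias--variance decomposition $\mathbb{E}_{\mathcal{F}_t}\lVert\tilde{\bm{g}}^{t}\rVert^2=\lVert\mathbb{E}_{\mathcal{F}_t}\tilde{\bm{g}}^{t}\rVert^2+\mathbb{E}_{\mathcal{F}_t}\lVert\tilde{\bm{g}}^{t}-\mathbb{E}_{\mathcal{F}_t}\tilde{\bm{g}}^{t}\rVert^2$, combined with unbiasedness~\eqref{eq:unbiased_grad} and bounded variance~\eqref{eq:bounded_var}, gives $\mathbb{E}_{\mathcal{F}_t}\lVert\tilde{\bm{g}}^{t}\rVert^2\le\lVert\bm{g}^{t}\rVert^2+\sigma^2/B$. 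Substituting and taking total expectation yields~\eqref{eq:distortion_recursive_bound}. The main delicacy, and the only place one can slip, is the bookkeeping of the order of expectations: the compressor noise at step $t+1$ must be integrated \emph{first}, conditionally on $\bm{x}^{t+1}$, precisely because that noise is independent of the mini-batch that generated $\bm{x}^{t+1}$; only afterward may one take the mini-batch expectation, where unbiasedness is what converts $\lVert\tilde{\bm{g}}^{t}\rVert^2$ into $\lVert\bm{g}^{t}\rVert^2+\sigma^2/B$. Everything else is routine algebra.
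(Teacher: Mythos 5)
Your proposal is correct and follows essentially the same route as the paper's proof: the EF21 update identity combined with the contractive property, Young's inequality to split off $D^{(t)}$, the $H$-Lipschitz consequence of the bounded-derivative assumption applied to $\bm{x}_k^{t+1}-\bm{x}_k^t=-\eta\tilde{\bm{g}}_k^t$, and the bias--variance decomposition under \eqref{eq:unbiased_grad} and \eqref{eq:bounded_var}. Your explicit handling of the order of expectations (compressor noise conditional on $\bm{x}^{t+1}$ first, then mini-batch) is exactly what the paper encodes via the sigma-algebra $\mathcal{F}'_t$.
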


\begin{proof}
	See Appendix~\ref{app:lemma2_proof}.
\end{proof}

\subsection{Nonconvex setting}

We now present our main convergence result for \texttt{EF-VFL}.

\begin{theorem} \label{thm:efvfl_thm}
	Let $\{\bm{x}^t\}$ be a sequence generated by Algorithm~\ref{alg:efvfl}, $\mathcal{C}$ be a contractive compressor~\eqref{eq:biased_compressor}, and $f^\star>-\infty$. If \eqref{eq:lsmooth} to \eqref{eq:bounded_var} hold, then, for $0<\eta\leq 1/ (\sqrt{\rho_{\alpha1}} L+L)$:
	\begin{equation} \label{eq:nonconvex_thm}
		\frac{1}{T} \sum_{t=0}^{T-1} \mathbb{E} \lVert \nabla f(\bm{x}^{t})\rVert^2
		\leq
		\frac{2\Delta}{\eta T}
		+
		\left( 1+\eta L \rho_{\alpha1} \right) \frac{\eta L\sigma^2}{B}
		+
		\rho_{\alpha2} \frac{\mathbb{E} D^{(0)}}{T}
		,
	\end{equation}
	where the expectation is over the randomness in $\mathcal{C}$ and in $\{\mathcal{B}_t\}$, $\Delta\coloneqq f \left( \bm{\theta}^{0} \right)-f^\star$, and
	\[
	\rho_{\alpha1} \coloneqq KH^4 \left(\frac{1+\sqrt{1-\alpha}}{\alpha}-1\right)^2
	\;
	\text{and}
	\;\;
	\rho_{\alpha2} \coloneqq \frac{KH^2L^2}{1-\sqrt{1-\alpha}}
	.
	\]
\end{theorem}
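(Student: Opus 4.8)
The plan is to run the standard EF21-style descent-plus-Lyapunov argument, combining the $L$-smoothness of $f$ with the two lemmas. First I would apply the descent inequality for $f$ along the update $\bm{x}^{t+1}=\bm{x}^t-\eta\tilde{\bm{g}}^t$ and take the conditional expectation $\mathbb{E}_{\mathcal{F}_t}$. Using unbiasedness~\eqref{eq:unbiased_grad} to replace $\tilde{\bm{g}}^t$ by $\bm{g}^t$ in the linear term, and the variance bound~\eqref{eq:bounded_var} together with $\mathbb{E}_{\mathcal{F}_t}\lVert\tilde{\bm{g}}^t\rVert^2=\lVert\bm{g}^t\rVert^2+\mathbb{E}_{\mathcal{F}_t}\lVert\tilde{\bm{g}}^t-\bm{g}^t\rVert^2$ in the quadratic term, I obtain an inequality in $\lVert\bm{g}^t\rVert^2$ and $\sigma^2/B$. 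The key manipulation is to rewrite the cross term via $-\langle\nabla f(\bm{x}^t),\bm{g}^t\rangle=\tfrac12(\lVert\bm{g}^t-\nabla f(\bm{x}^t)\rVert^2-\lVert\nabla f(\bm{x}^t)\rVert^2-\lVert\bm{g}^t\rVert^2)$ and then control the surrogate offset $\lVert\bm{g}^t-\nabla f(\bm{x}^t)\rVert^2\le KH^2L^2D^{(t)}$ via Lemma~\ref{lemma:surrogate_offset_bound}. This yields a one-step bound of the form $\mathbb{E}_{\mathcal{F}_t}f(\bm{x}^{t+1})\le f(\bm{x}^t)-\tfrac\eta2\lVert\nabla f(\bm{x}^t)\rVert^2-\tfrac\eta2(1-\eta L)\lVert\bm{g}^t\rVert^2+\tfrac\eta2 KH^2L^2D^{(t)}+\tfrac{\eta^2 L}{2}\tfrac{\sigma^2}{B}$.

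Because $D^{(t)}$ enters with a positive sign, I would absorb it into a Lyapunov function $\Psi^t\coloneqq f(\bm{x}^t)-f^\star+c\,D^{(t)}$ and invoke the recursive distortion bound of Lemma~\ref{lemma:distortion_recursive_bound_w_embedding_diff}. Combining the two inequalities in expectation, the coefficient of $\mathbb{E}D^{(t)}$ becomes $c(1-\alpha)(1+\epsilon)+\tfrac\eta2 KH^2L^2$, which I need to be at most $c$ so the distortion terms telescope; the coefficient of $\mathbb{E}\lVert\bm{g}^t\rVert^2$ becomes $c(1-\alpha)(1+\epsilon^{-1})\eta^2H^2-\tfrac\eta2(1-\eta L)$, which I need to be nonpositive so the $\bm{g}^t$-term can be dropped.

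The crux is tuning $\epsilon$ and $c$ to satisfy both constraints while producing exactly the stated constants. I would choose $1+\epsilon=1/\sqrt{1-\alpha}$, so that $(1-\alpha)(1+\epsilon)=\sqrt{1-\alpha}$, hence $1-(1-\alpha)(1+\epsilon)=1-\sqrt{1-\alpha}$, and correspondingly $(1-\alpha)(1+\epsilon^{-1})=(1-\alpha)/(1-\sqrt{1-\alpha})$. The telescoping constraint then forces the minimal choice $c=\tfrac{\eta KH^2L^2}{2(1-\sqrt{1-\alpha})}=\tfrac{\eta}{2}\rho_{\alpha2}$. Substituting this $c$ into the nonpositivity constraint and using the identity $\alpha=(1-\sqrt{1-\alpha})(1+\sqrt{1-\alpha})$---which gives $\tfrac{1+\sqrt{1-\alpha}}{\alpha}-1=\tfrac{\sqrt{1-\alpha}}{1-\sqrt{1-\alpha}}$ and hence $KH^4(1-\alpha)/(1-\sqrt{1-\alpha})^2=\rho_{\alpha1}$---the condition reduces to $\eta^2\rho_{\alpha1}L^2+\eta L\le1$. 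This is implied by the stated stepsize bound $\eta\le 1/(\sqrt{\rho_{\alpha1}}L+L)$, since the left-hand side is increasing in $\eta$ and equals $(\rho_{\alpha1}+\sqrt{\rho_{\alpha1}}+1)/(\sqrt{\rho_{\alpha1}}+1)^2<1$ at the endpoint.

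Finally I would collect the surviving $\sigma^2/B$ terms, whose combined coefficient is $\tfrac{\eta^2 L}{2}(1+\eta\rho_{\alpha1}L)$, take total expectation, and telescope $\mathbb{E}\Psi^{t+1}\le\mathbb{E}\Psi^t-\tfrac\eta2\mathbb{E}\lVert\nabla f(\bm{x}^t)\rVert^2+\tfrac{\eta^2 L}{2}(1+\eta\rho_{\alpha1}L)\tfrac{\sigma^2}{B}$ over $t=0,\dots,T-1$. Using $\Psi^T\ge0$ and $\Psi^0=\Delta+\tfrac{\eta}{2}\rho_{\alpha2}\mathbb{E}D^{(0)}$, dividing by $\eta T/2$ yields the claimed bound. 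The main obstacle I expect is the bookkeeping in this constant-matching step: verifying that the single choice $1+\epsilon=1/\sqrt{1-\alpha}$ simultaneously produces $\rho_{\alpha2}$ through the distortion-telescoping constraint and $\rho_{\alpha1}$ through the $\bm{g}^t$-nonpositivity constraint, and that the resulting stepsize requirement collapses to the clean form $\eta\le 1/(\sqrt{\rho_{\alpha1}}L+L)$ rather than the opaque quadratic-formula threshold.
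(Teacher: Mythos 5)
Your proposal is correct and follows essentially the same route as the paper's proof: the same descent inequality with the polarization identity, the same use of Lemmas~\ref{lemma:surrogate_offset_bound} and~\ref{lemma:distortion_recursive_bound_w_embedding_diff}, the same optimal choice $1+\epsilon=1/\sqrt{1-\alpha}$ (which the paper obtains by minimizing $\tilde{\gamma}(\epsilon)$ via Lemma~3 of EF21), and the same minimal weight $c=\tfrac{\eta}{2}\rho_{\alpha2}$ on the distortion term; your Lyapunov-function framing is just a repackaging of the paper's step of multiplying the distortion recursion by $w$ and adding. All the constants, including the $\sigma^2/B$ coefficient and the stepsize threshold, check out against the paper's derivation.
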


\begin{proof}
	See Appendix~\ref{sec:efvfl_thm_proof}.
\end{proof}

If the batch size is large enough, $B=\Omega(\sigma^2/\delta)$, the iteration complexity to reach $\frac{1}{T} \sum_{t=0}^{T-1}
\mathbb{E}\left\lVert \nabla f \left( \bm{\theta}^{t}\right) \right\rVert^2\leq\delta$ matches the  $\mathcal{O}(1/T)$ rate of the centralized, uncompressed setting. Also, in the absence of compression ($\alpha=1$ and $D^{(t)}=0$), we recover that, for $\eta\in(0,1/L]$, we can output an $\bm{x}^{\text{out}}$ such that $\mathbb{E} \lVert \nabla f(\bm{x}^{\text{out}})\rVert^2\leq \frac{2\Delta}{\eta T} + \frac{\eta L\sigma^2}{B}$. If we are also in the full-batch case ($\sigma=0$), we recover the gradient descent bound exactly: $\mathbb{E} \lVert \nabla f(\bm{x}^{\text{out}})\rVert^2\leq \frac{2\Delta}{\eta T}$.

{Note that, if we start our method by sending noncompressed representations (at $t=0$ only), we can drop the last term in the upper bound in~\eqref{eq:nonconvex_thm}.}

Our results improve over the prior state-of-the-art compressed vertical FL (\texttt{CVFL})~\citep{Castiglia2022}, whose convergence result for a fixed stepsize is presented below:
\[
\frac{1}{T} \sum_{t=0}^{T-1} \mathbb{E} \lVert \nabla f(\bm{x}^{t})\rVert^2
\leq
\frac{4\Delta}{\eta T}
+
\mathcal{O}\left(\frac{\eta L\sigma^2}{B}\right)
+
\mathcal{O}\left(\frac{1}{T} \sum_{t=0}^{T-1}D_{d}^{(t)}\right)
.
\]
Note how, even for full-batch updates, the upper bound above does not go to zero as $T\to\infty$ unless $D_{d}^{(t)}\to0$, where $D_{d}^{(t)}$ is the distortion resulting from direct compression $D_{d}^{(t)}=\sum_{k=0}^K \left\lVert\mathcal{C}(\bm{H}_k(\bm{x}_k^t))-\bm{H}_k(\bm{x}_k^t)\right\rVert^2$. That is, to achieve $\frac{1}{T} \sum_{t=0}^{T-1} \mathbb{E} \lVert \nabla f(\bm{x}^{t})\rVert^2\to0$, \texttt{CVFL} requires a vanishing compression error, which necessitates that $\alpha\to1$. This means that, despite reducing the total amount of communications across the training, \texttt{CVFL} does not reduce the maximum amount of communications per round. In contrast, by allowing for nonvanishing compression, \texttt{EF-VFL} ensures small communication cost at every round.

\subsection{Under the PL inequality}
In this section, we establish the linear convergence of \texttt{EF-VFL} under the PL inequality~\citep{polyak1963gradient}.
\begin{assumption}[PL inequality] \label{ass:pl}
	We assume that there exists a positive constant $\mu$ such that
	\begin{equation} \label{eq:pl}
		\forall
		\bm{x}\in\mathbb{R}^d
		\colon
		\quad
		\lVert \nabla f(\bm{x}) \rVert^2
		\geq
		2\mu(f(\bm{x})-f^\star).
		\tag{A\ref*{ass:pl}}
	\end{equation}
\end{assumption}
We resort to the following Lyapunov function to show linear convergence:
\begin{equation}
	\label{eq:lyapunov}
	V_{t}\coloneqq
	\mathbb{E}f(\bm{x}^t)-f^\star+c\mathbb{E} D^{(t)},
\end{equation}
where $c$ is a positive constant. We now present Theorem~\ref{thm:pl}.
\begin{theorem} \label{thm:pl}
	Let $\{\bm{x}^t\}$ be a sequence generated by Algorithm~\ref{alg:efvfl}, $\mathcal{C}$ be a contractive compressor~\eqref{eq:biased_compressor}, and $f^\star>-\infty$. If \eqref{eq:lsmooth} to \eqref{eq:pl} hold, then, for $\eta$ such that $\eta^2 L^2 \left(1-\mu/L\right) + \eta \mu \leq \alpha^2$, we have:
	\[
	V_{T}\leq(1-\eta\mu)^TV_{0}
	+
	{\frac{\sigma^2}{2B\mu}}
	.
	\]
\end{theorem}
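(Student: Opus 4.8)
The plan is to prove a one-step contraction for the Lyapunov function $V_t$ in~\eqref{eq:lyapunov}, namely $V_{t+1}\le(1-\eta\mu)V_t+\eta\sigma^2/(2B)$, and then unroll it as a geometric series. The two ingredients are a descent inequality for the function gap $\mathbb{E}[f(\bm{x}^t)-f^\star]$ and the distortion recursion~\eqref{eq:distortion_recursive_bound}, with the constant $c$ chosen so as to couple them.

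First I would run the standard descent argument. Applying $L$-smoothness~\eqref{eq:lsmooth} of $f$ to $\bm{x}^{t+1}=\bm{x}^t-\eta\tilde{\bm{g}}^t$, taking $\mathbb{E}_{\mathcal{F}_t}$, and using unbiasedness~\eqref{eq:unbiased_grad} together with $\mathbb{E}_{\mathcal{F}_t}\lVert\tilde{\bm{g}}^t\rVert^2\le\lVert\bm{g}^t\rVert^2+\sigma^2/B$ from~\eqref{eq:bounded_var}, I get
\[
\mathbb{E}_{\mathcal{F}_t}f(\bm{x}^{t+1})\le f(\bm{x}^t)-\eta\langle\nabla f(\bm{x}^t),\bm{g}^t\rangle+\tfrac{L\eta^2}{2}\bigl(\lVert\bm{g}^t\rVert^2+\tfrac{\sigma^2}{B}\bigr).
\]
I then split the inner product with the polarization identity $\langle\nabla f,\bm{g}^t\rangle=\tfrac12\lVert\nabla f\rVert^2+\tfrac12\lVert\bm{g}^t\rVert^2-\tfrac12\lVert\bm{g}^t-\nabla f\rVert^2$, bound $\lVert\bm{g}^t-\nabla f(\bm{x}^t)\rVert^2\le KH^2L^2D^{(t)}$ via Lemma~\ref{lemma:surrogate_offset_bound}, and apply the PL inequality~\eqref{eq:pl} to the resulting $-\tfrac{\eta}{2}\lVert\nabla f(\bm{x}^t)\rVert^2$ term. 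After subtracting $f^\star$ and taking total expectation this gives
\[
\mathbb{E}[f(\bm{x}^{t+1})-f^\star]\le(1-\eta\mu)\mathbb{E}[f(\bm{x}^t)-f^\star]-\tfrac{\eta}{2}(1-L\eta)\mathbb{E}\lVert\bm{g}^t\rVert^2+\tfrac{\eta}{2}KH^2L^2\mathbb{E}D^{(t)}+\tfrac{L\eta^2\sigma^2}{2B}.
\]
Using the polarization identity (rather than a cruder Young's split) is precisely what produces the clean factor $(1-\eta\mu)$ needed to match the statement.

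Next I would add $c$ times~\eqref{eq:distortion_recursive_bound}, choosing $\epsilon$ optimally so that $1+\epsilon=1/\sqrt{1-\alpha}$, which gives $(1-\alpha)(1+\epsilon)=\sqrt{1-\alpha}$ and $(1-\alpha)(1+\epsilon^{-1})=1+\sqrt{1-\alpha}$, to assemble $V_{t+1}$. The function-gap term already carries the factor $(1-\eta\mu)$, so it remains to pick $c$ such that (i) the coefficient of $\mathbb{E}D^{(t)}$ is at most $(1-\eta\mu)c$, i.e. $c\ge\frac{\eta KH^2L^2}{2\,(1-\eta\mu-\sqrt{1-\alpha})}$, and (ii) the coefficient of $\mathbb{E}\lVert\bm{g}^t\rVert^2$ is nonpositive, i.e. $c\le\frac{1-L\eta}{2(1+\sqrt{1-\alpha})\eta H^2}$, so that term can be discarded. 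Constraint (ii) is exactly what forces the leftover noise $\tfrac{L\eta^2\sigma^2}{2B}+c(1+\sqrt{1-\alpha})\eta^2H^2\tfrac{\sigma^2}{B}$ down to at most $\eta\sigma^2/(2B)$, matching the target. With such a $c$ in hand the recursion $V_{t+1}\le(1-\eta\mu)V_t+\eta\sigma^2/(2B)$ holds, and unrolling it while summing $\sum_{j\ge0}(1-\eta\mu)^j=1/(\eta\mu)$ (valid since $0<\eta\mu<1$) yields $V_T\le(1-\eta\mu)^TV_0+\sigma^2/(2B\mu)$.

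The main obstacle is the simultaneous feasibility of (i) and (ii): a valid $c$ exists only when $\eta^2KH^4L^2(1+\sqrt{1-\alpha})\le(1-L\eta)(1-\eta\mu-\sqrt{1-\alpha})$, which also tacitly requires $1-\eta\mu-\sqrt{1-\alpha}>0$. Reducing this two-sided constraint to a single explicit stepsize condition of the stated form $\eta^2L^2(1-\mu/L)+\eta\mu\le\alpha^2$, after absorbing the problem constants $K$ and $H$, is the fiddly algebraic crux; everything else is bookkeeping.
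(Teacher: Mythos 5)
Your skeleton is the paper's own: the same smoothness-plus-polarization descent step, the same use of Lemma~\ref{lemma:surrogate_offset_bound} and of the recursion~\eqref{eq:distortion_recursive_bound}, the same Lyapunov coupling $V_t=\mathbb{E}f(\bm{x}^t)-f^\star+c\,\mathbb{E}D^{(t)}$ with a window of admissible $c$ determined by one lower bound (to telescope the distortion with factor $1-\eta\mu$) and one upper bound (to kill the $\mathbb{E}\lVert\bm{g}^t\rVert^2$ term), and the same observation that the upper bound on $c$ simultaneously caps the per-step noise at $\eta\sigma^2/(2B)$ before the geometric sum. All of that is correct and matches Appendix~\ref{sec:efvfl_thm_proof_pl}. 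A minor slip: for $1+\epsilon=1/\sqrt{1-\alpha}$ one gets $(1-\alpha)(1+\epsilon^{-1})=\frac{1-\alpha}{1-\sqrt{1-\alpha}}$, not $1+\sqrt{1-\alpha}$ (these coincide only at $\alpha=1/2$); this is harmless for the noise cancellation, since the same quantity appears in both the constraint on $c$ and the noise term, but it changes the explicit form of your feasibility window.

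The genuine gap is exactly the step you defer as the ``fiddly algebraic crux'': showing that the stated condition $\eta^2L^2(1-\mu/L)+\eta\mu\le\alpha^2$ makes the window for $c$ nonempty. Two things go wrong with your sketch there. First, the paper does \emph{not} reuse the Theorem~\ref{thm:efvfl_thm} choice $\epsilon=1/\sqrt{1-\alpha}-1$; it switches to the piecewise choice $\epsilon^\star=\alpha$ for $\alpha\le 1/2$ and $\epsilon^\star=1-\alpha$ otherwise, precisely because this yields $\theta_{\epsilon^\star}(\alpha)=1-(1-\alpha)(1+\epsilon^\star)\ge\alpha^2$, which is the only place the $\alpha^2$ in the theorem's stepsize condition comes from. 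With your $\epsilon$ the corresponding quantity is $1-\sqrt{1-\alpha}$, and the constants will not reduce to the stated $\alpha^2$ without redoing this part. Second, the factor $KH^4$ multiplying $\eta^2L^2$ in the feasibility inequality cannot be ``absorbed'': the paper itself establishes $\beta_{\epsilon^\star}(\alpha)KH^4\ge 1$, which makes the true feasibility requirement \emph{stronger} than $\eta^2L^2(1-\mu/L)+\eta\mu\le\alpha^2$, not weaker (indeed this is the one step where the paper's own implication arguably runs in the wrong direction). So a rigorous completion needs either a stepsize condition carrying a $\rho$-type constant in $K$ and $H$, as in Theorem~\ref{thm:efvfl_thm}, or an explicit argument you have not supplied; as written, the proposal does not close the proof of the theorem as stated.
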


\begin{proof}
	See Appendix~\ref{sec:efvfl_thm_proof_pl}.
\end{proof}

Since $\mu\leq L$, we have that $\eta\in(0,1/L)$ implies that $1-\eta\mu\in(0,1)$. Hence, \texttt{EF-VFL} converges linearly to a $\mathcal{O}(\sigma^2)$ neighborhood around the global optimum.

\begin{table*}[t]
	\caption{
		{
			Total communication cost to reach error level $\delta>0$ (top-$k$; full-batch; single local update).
		}
		\label{tab:total_comms}
	}
	\centering
	\renewcommand{\arraystretch}{1.2}
	\resizebox{\textwidth}{!}{
		\begin{tabular}{c|cccc}
			\specialrule{1.2pt}{0pt}{0pt}
			{$ \mathcal{C} $} & {\texttt{SVFL}~[19]} & {\texttt{CVFL}~[12]} & {\texttt{EF-VFL} (ours)} & {\texttt{EF-VFL} with private labels (ours)}\\
			\hline
			{Uplink (total)} & $N\bar{E}K \cdot \mathcal{O}(\sfrac{1}{\delta})$ & $ \min\{ k\cdot\mathcal{O}(\sfrac{1}{\sqrt{\delta}}),N\bar{E} \}\cdot K \cdot \mathcal{O}(\sfrac{1}{\delta^2}) $ & $ kK \cdot \mathcal{O}(\sfrac{1}{\delta})$ & $ kK \cdot \mathcal{O}(\sfrac{1}{\delta})$ \\
			{Downlink (total/broadcast)} & $N\bar{E}K \cdot \mathcal{O}(\sfrac{1}{\delta})$ & $\min\{ k\cdot\mathcal{O}(\sfrac{1}{\sqrt{\delta}}),N\bar{E} \}\cdot(K+1) \cdot \mathcal{O}(\sfrac{1}{\delta^2})$ & $k(K+1) \cdot \mathcal{O}(\sfrac{1}{\delta})$ & $kK \cdot\mathcal{O}(\sfrac{1}{\delta})$ \\
			\specialrule{1.2pt}{0pt}{0pt}
		\end{tabular}
	}
\end{table*}

{In Table~\ref{tab:total_comms}, where $\bar{E}=E_j$ for $j\in[K]$ is the embedding size for each sample at each client (assumed to match for simplicity), we present the total communication cost to reach $\frac{1}{T} \sum_{t=0}^{T-1} \mathbb{E}\left\lVert \nabla f \left( \bm{\theta}^{t}\right) \right\rVert^2\leq\delta$, where $\delta>0$ (top-$k$; full-batch; single local update). We discuss different downlink communication schemes for \texttt{EF-VFL} in Appendix~\ref{app:downlink_schemes}.}
	
	
\section{Experiments}
\label{sec:experiments}

We compare \texttt{EF-VFL} with two baselines: \textbf{(1)} standard VFL (\texttt{SVFL}), which corresponds to the method in~\citet{Liu2022} and is mathematically equivalent to stochastic gradient descent when a single local update is used, and \textbf{(2)} \texttt{CVFL}~\citep{Castiglia2022}, which recovers \texttt{SVFL} when an identity compressor is used (that is, without employing compression). All of our results correspond to the mean and standard deviation for five different seeds. We employ two popular compressors in our experiments.
\begin{itemize} [leftmargin=1.0em]
	\item Top-$k$ sparsification~\citep{alistarh2018convergence,stich2018sparsified} is a map $\topk\colon\mathbb{R}^d\mapsto\mathbb{R}^d$ defined as
	\[
	\topk(\bm{v})
	\coloneqq
	\bm{v} \odot \bm{u}(\bm{v}),
	\]
	where $\odot$ denotes the Hadamard product and $\bm{u}(\bm{v})$ is such that its entry $i$ is 1 if $v_i$ is one of the $k$ largest entries of $\bm{v}$ in absolute value and 0 otherwise. We have that \eqref{eq:biased_compressor} holds for $\alpha=k/d$.
	
	\item Stochastic quantization~\citep{alistarh2017qsgd} is a map $\text{qsgd}_s\colon\mathbb{R}^d\mapsto\mathbb{R}^d$, with $s>1$ quantization levels defined as
	\[
	\text{qsgd}_s(\bm{v})
	\coloneqq
	\frac{\lVert \bm{v} \rVert \cdot \text{sign}(\bm{v}) }{s\tau}
	\cdot \left\lfloor s \frac{\lvert \bm{v} \rvert}{\lVert \bm{v} \rVert} + \xi \right\rfloor,
	\]
	where $\tau=1+\min\{d/s^2,\sqrt{d}/s\}$ and $\xi\sim\mathcal{U}([0,1]^d)$, where $\mathcal{U}$ denotes the uniform distribution. In practice, we are interested in values of $s$ such that $s=2^b$, where $b$ is the number of bits. We have that \eqref{eq:biased_compressor} holds for $\alpha=1/\tau$.
	
\end{itemize}
For the sake of the comparison with \texttt{CVFL}, we employ compressors $\mathcal{C}=\mathcal{C}_2\circ\mathcal{C}_1$, where $\mathcal{C}_2$ is either $\topk(\bm{v})$ or $\text{qsgd}_s$ and $\mathcal{C}_1$ selects the rows in $\mathcal{B}^t$, similarly to \texttt{CVFL}. Further, as explained in Section~\ref{sec:efvfl}, our experiments focus on the compression of $\{\bm{H}_{k}(\bm{x}_k)\}_{k=1}^K$, and not $\bm{x}_0$.

\subsection{Comparison with \texttt{SVFL} and \texttt{CVFL}}
\label{sec:single_local_update}

The detailed hyperparameters for the following experiments can be found in the provided code.

\begin{figure*}[t!]
	\centering
	\subfloat[top-$k$ keeping $10\%$\vspace{-0mm} ]{\includegraphics[width=0.24\linewidth]{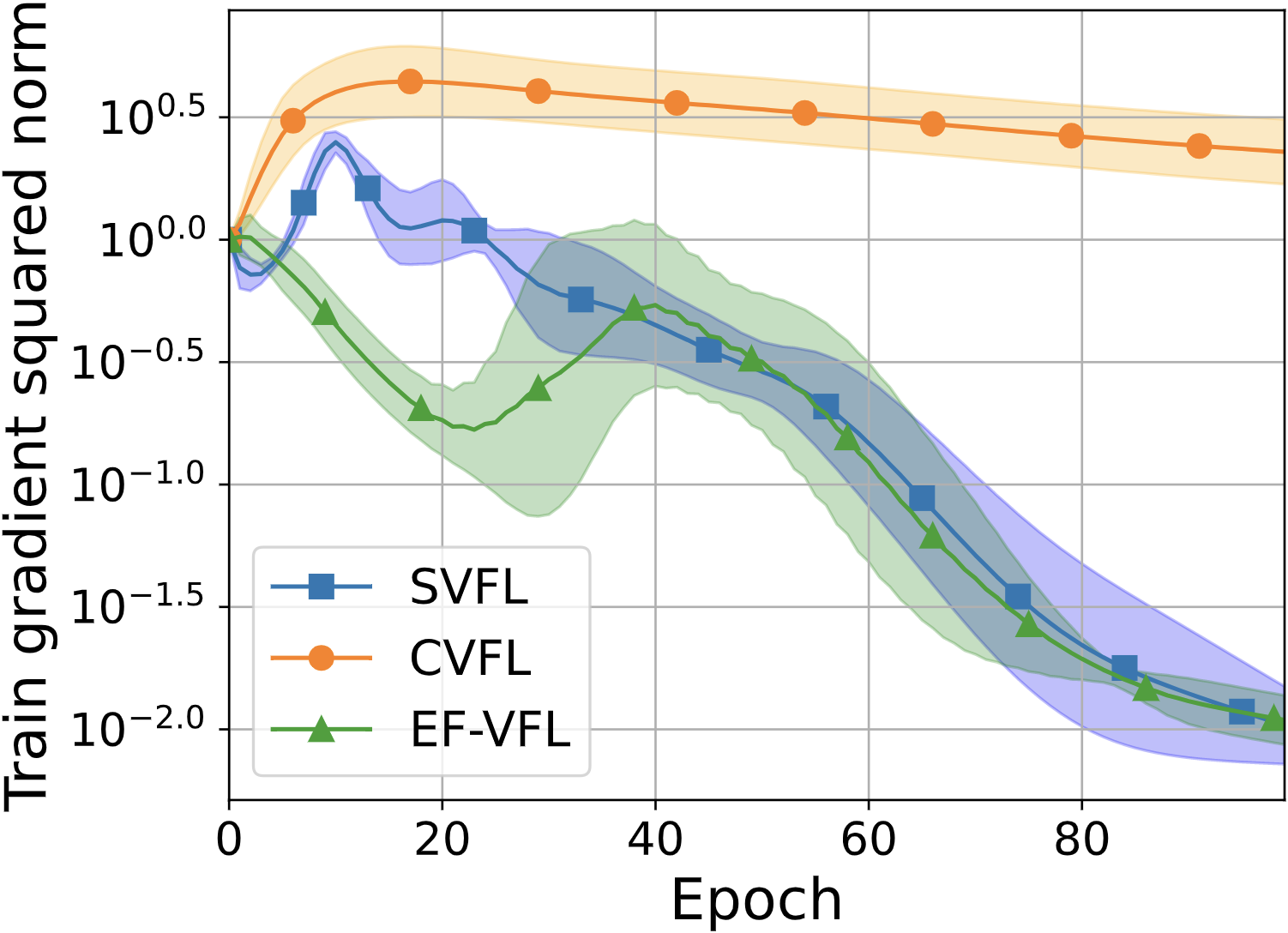}\hspace{1mm}\includegraphics[width=0.24\linewidth]{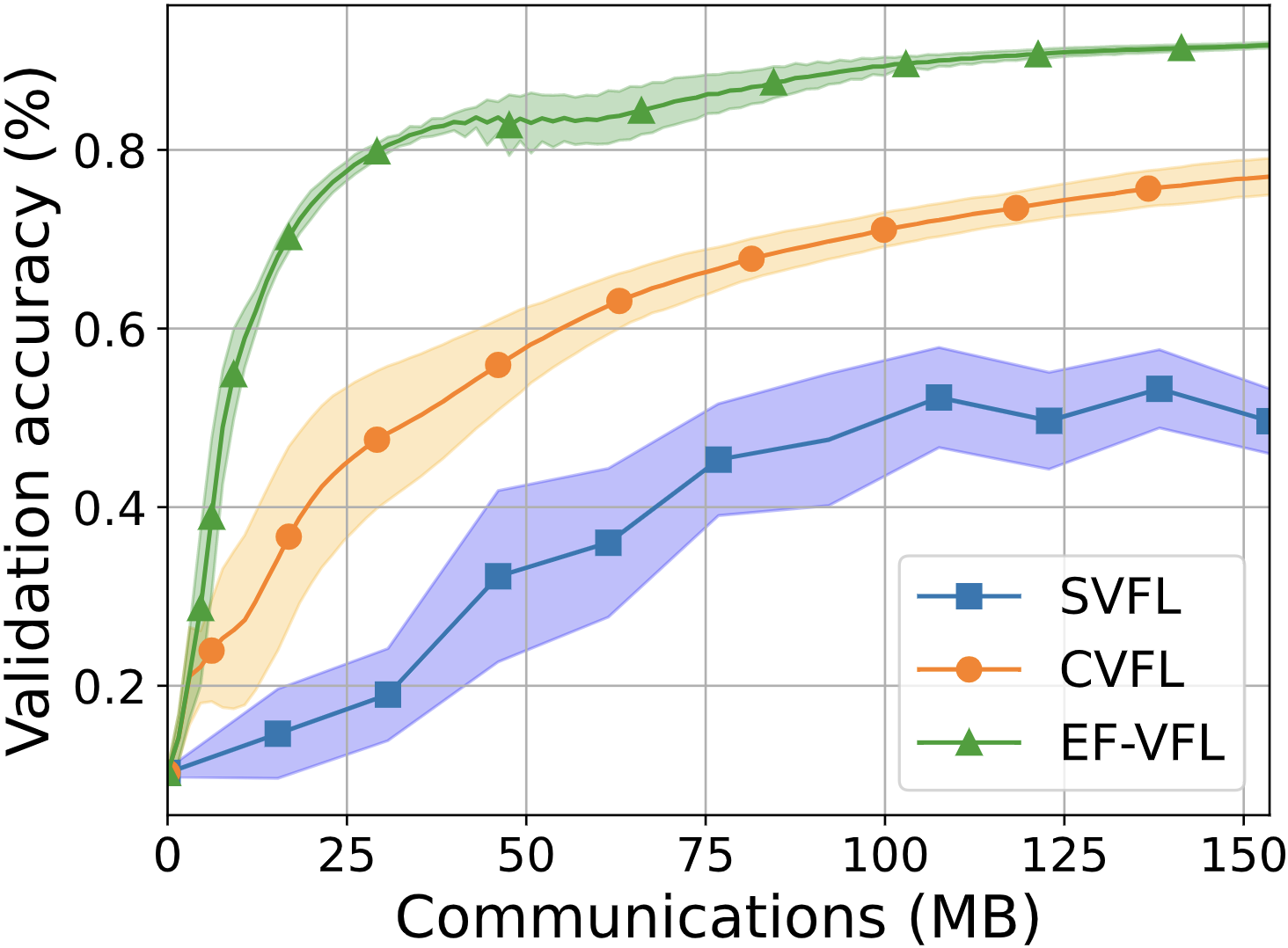}}
	\hfil
	\subfloat[quantization with $b=4$\vspace{-0mm} ]{\includegraphics[width=0.24\linewidth]{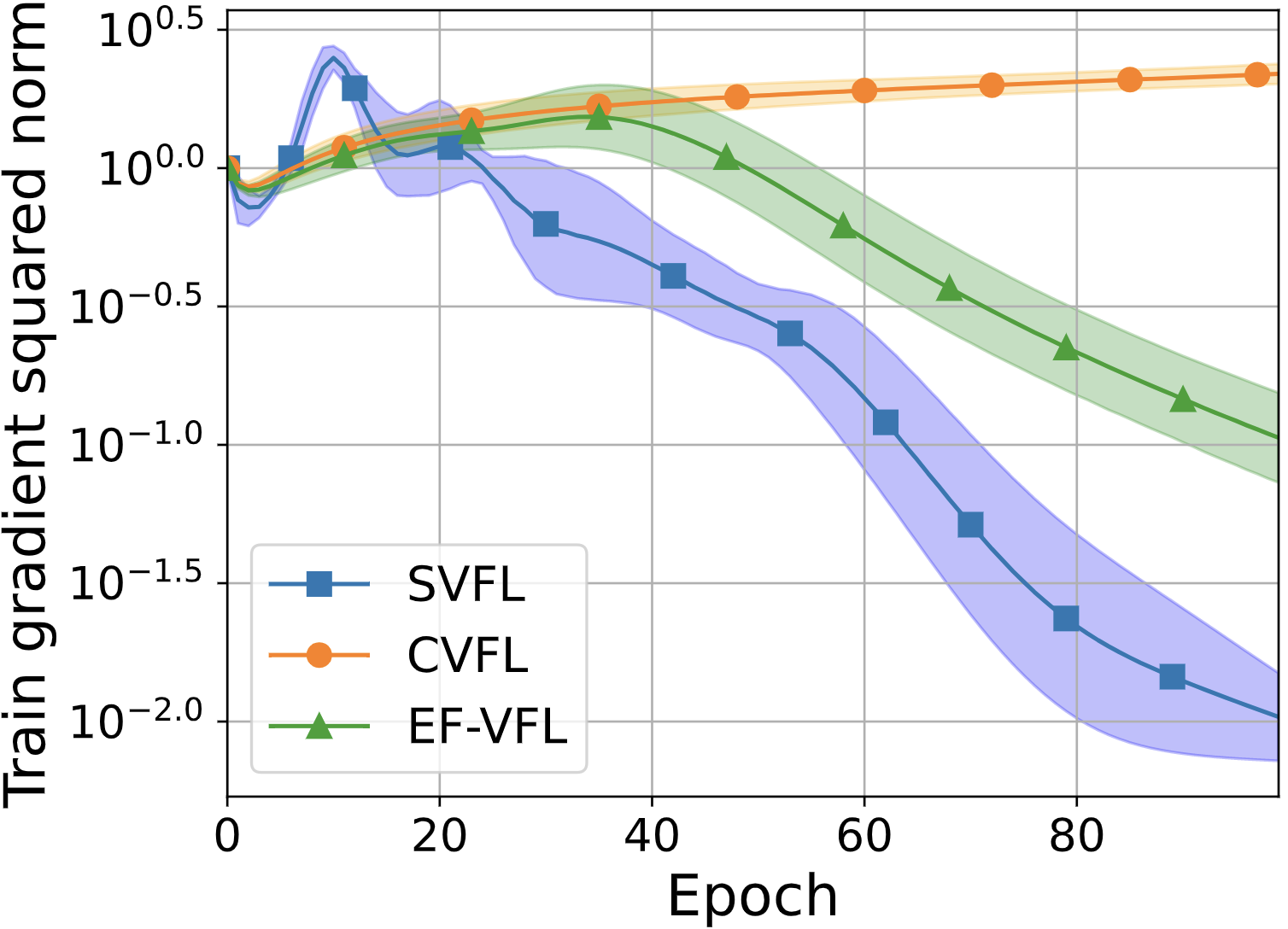}\hspace{1mm}\includegraphics[width=0.24\linewidth]{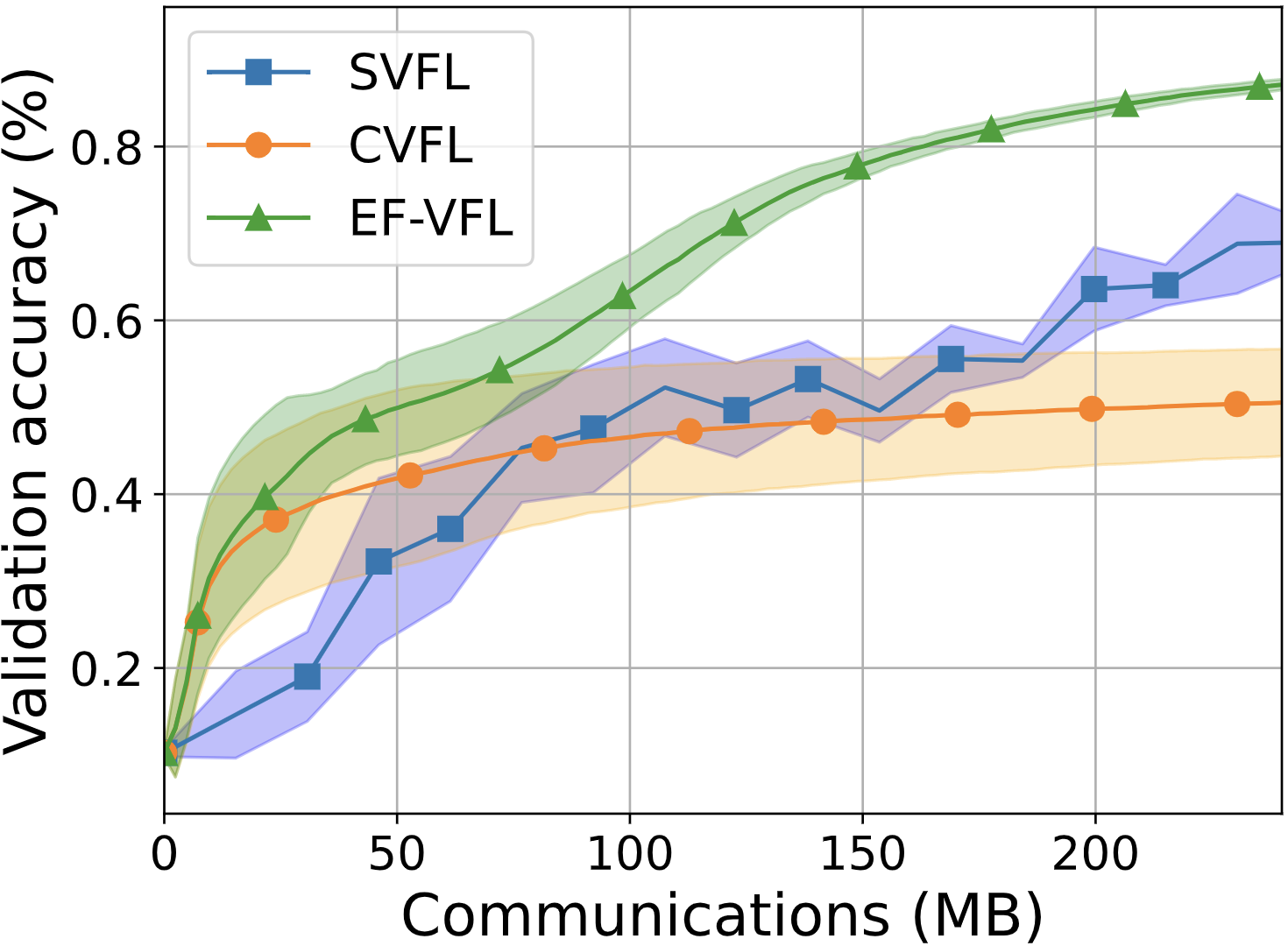}}
	\\
	\subfloat[top-$k$ keeping $1\%$\vspace{-0mm} ]{\includegraphics[width=0.24\linewidth]{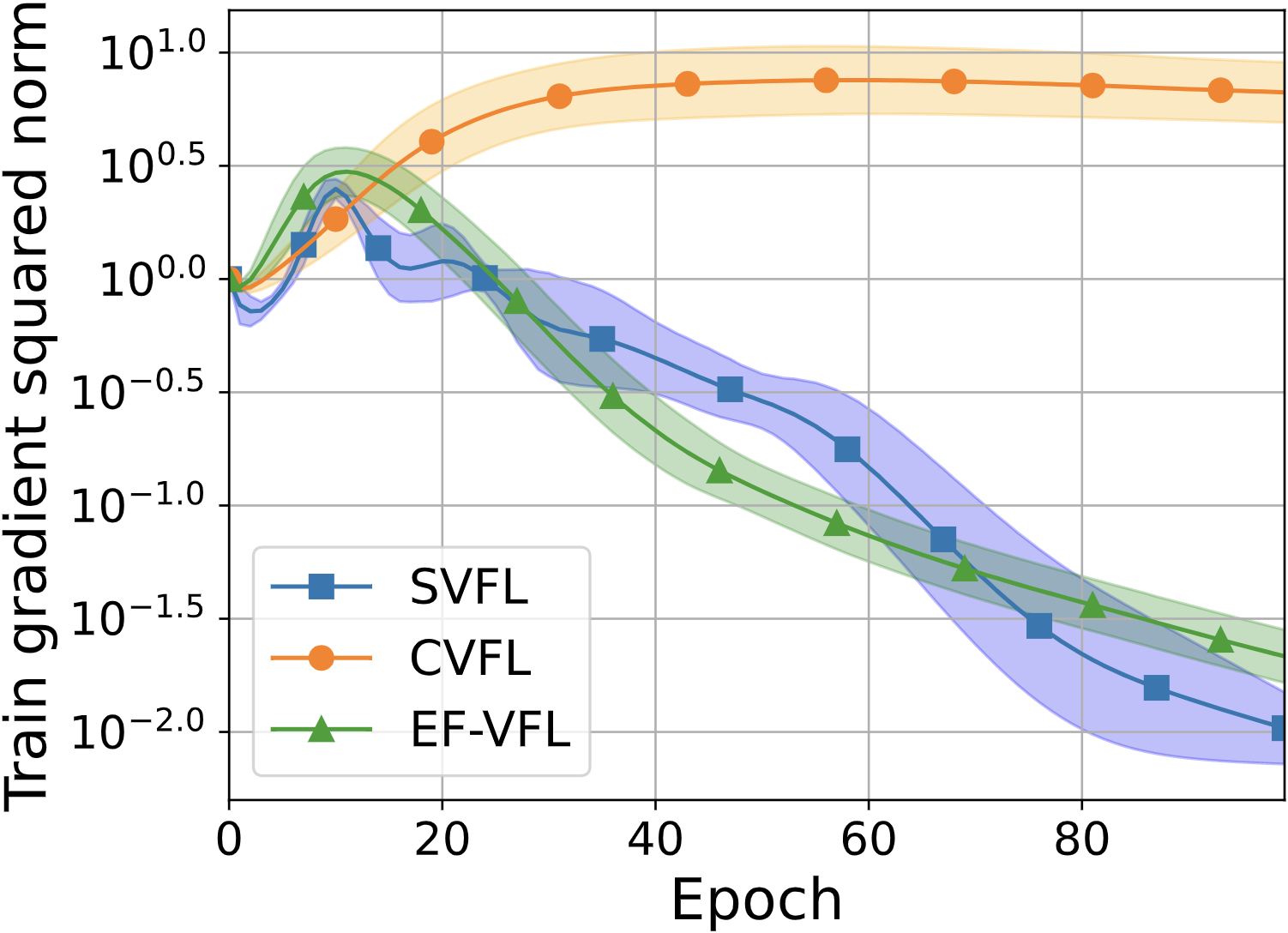}\hspace{1mm}\includegraphics[width=0.24\linewidth]{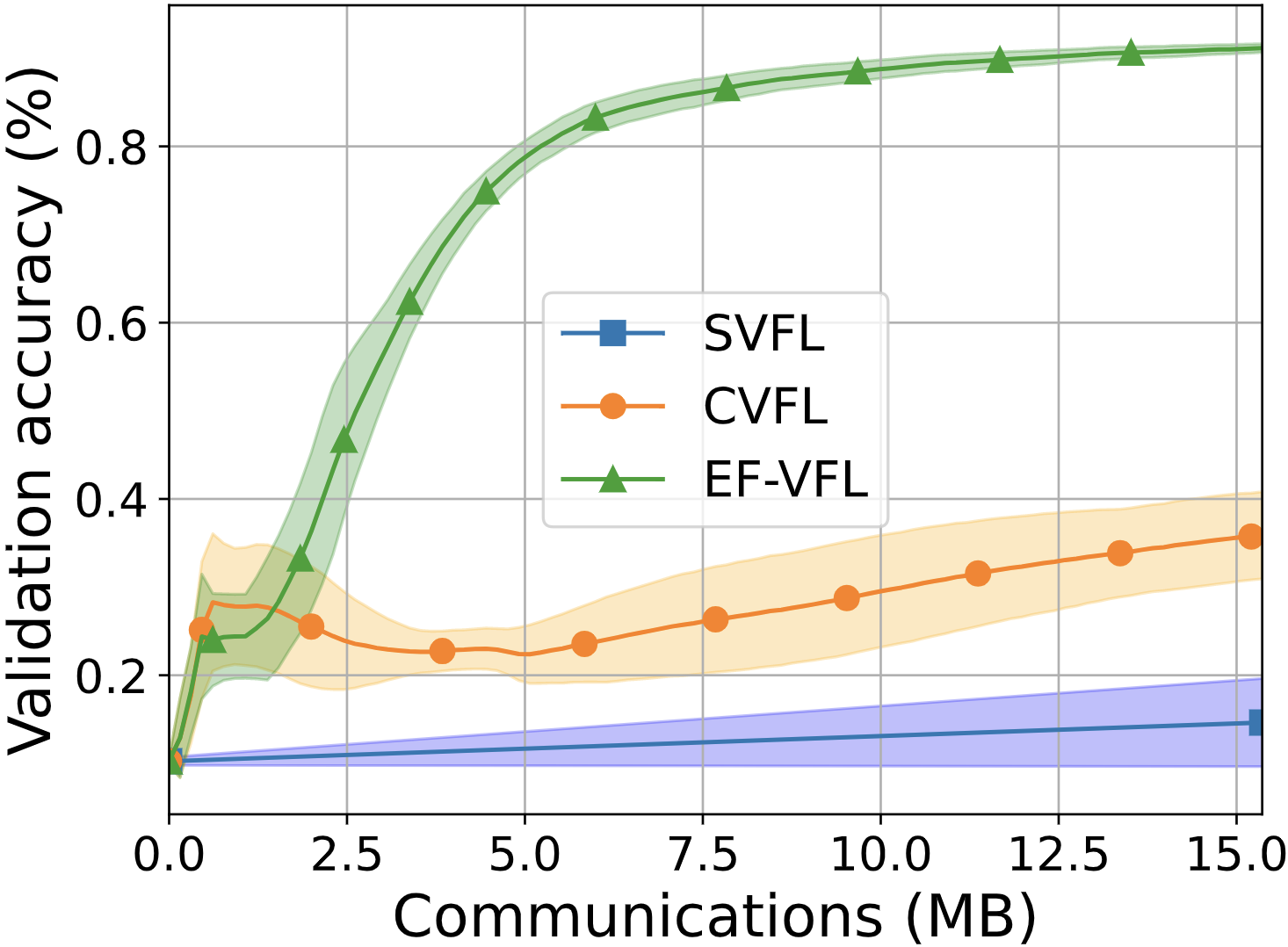}}
	\hfil
	\subfloat[quantization with $b=2$\vspace{-0mm} ]{\includegraphics[width=0.24\linewidth]{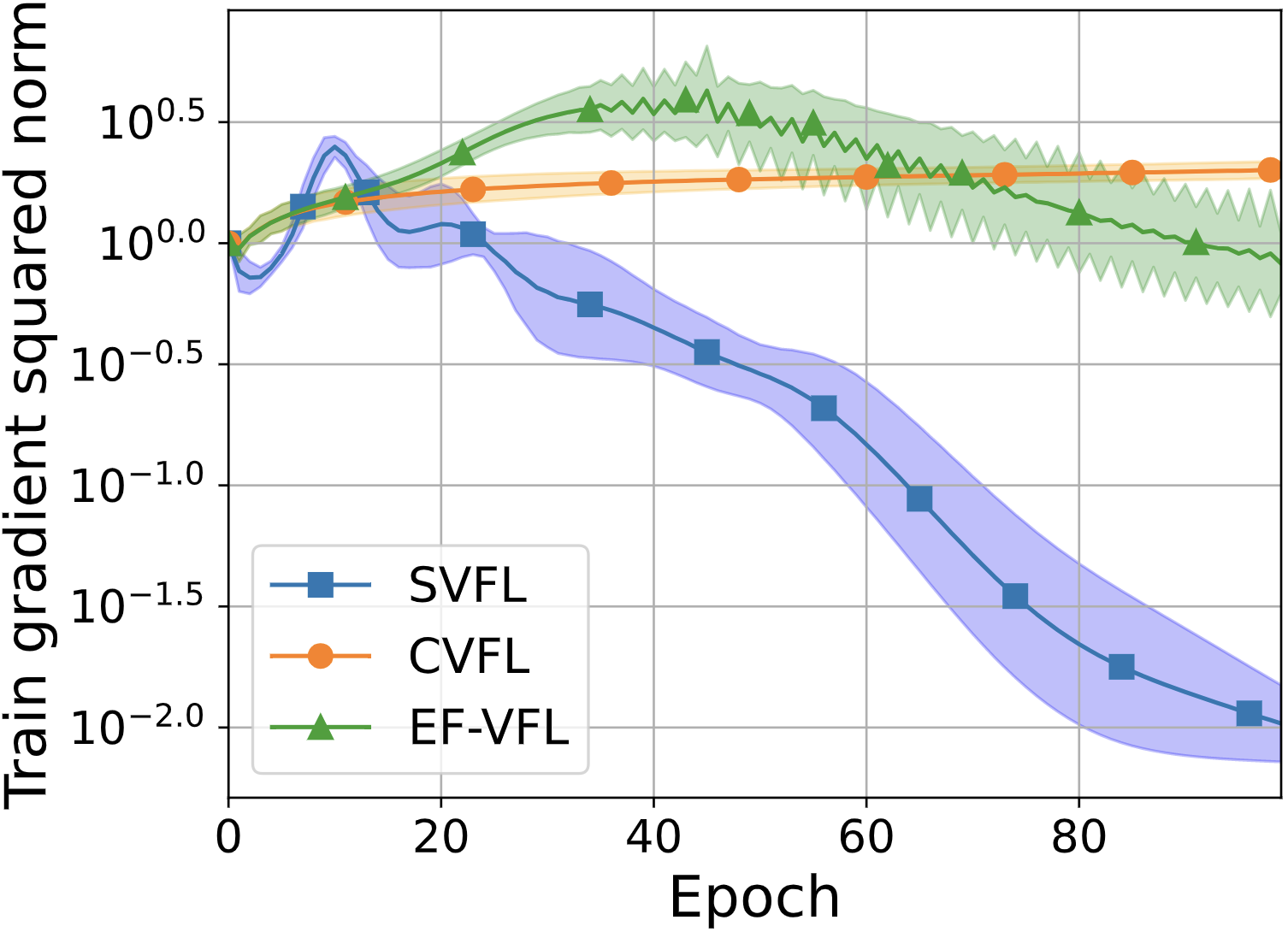}\hspace{1mm}\includegraphics[width=0.24\linewidth]{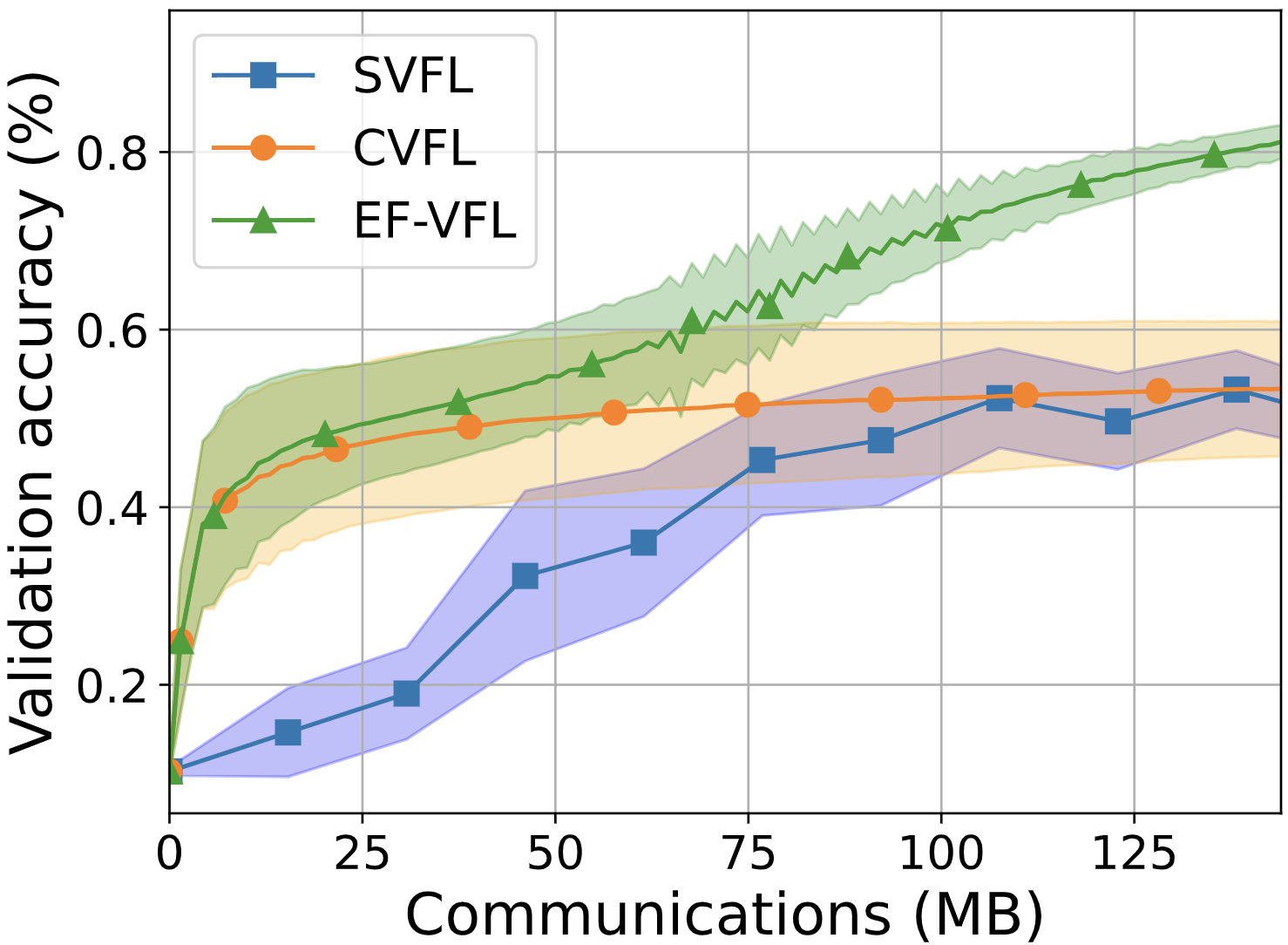}}
	\\
	\subfloat[top-$k$ keeping $0.1\%$]{\includegraphics[width=0.24\linewidth]{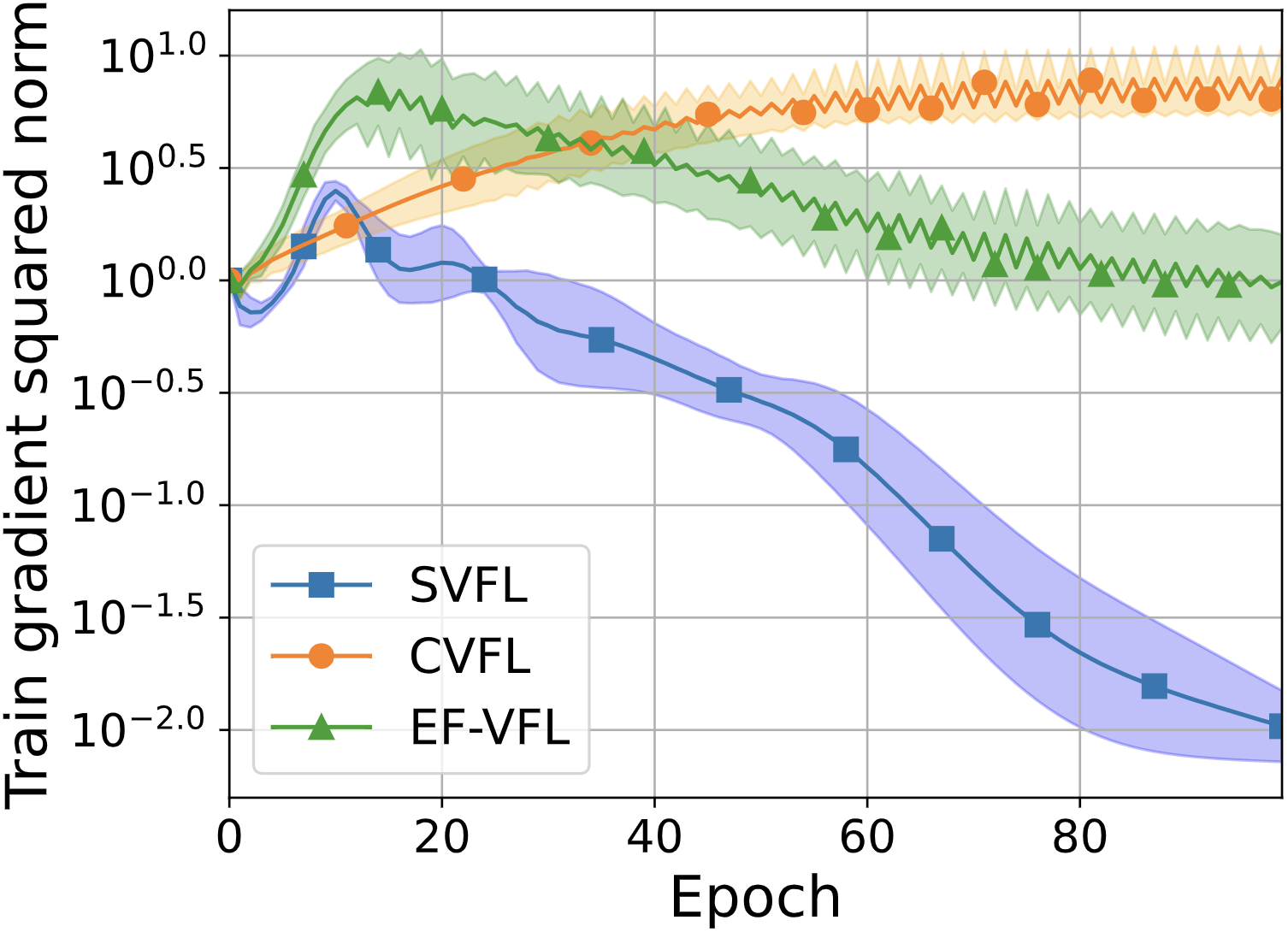}\hspace{1mm}\includegraphics[width=0.24\linewidth]{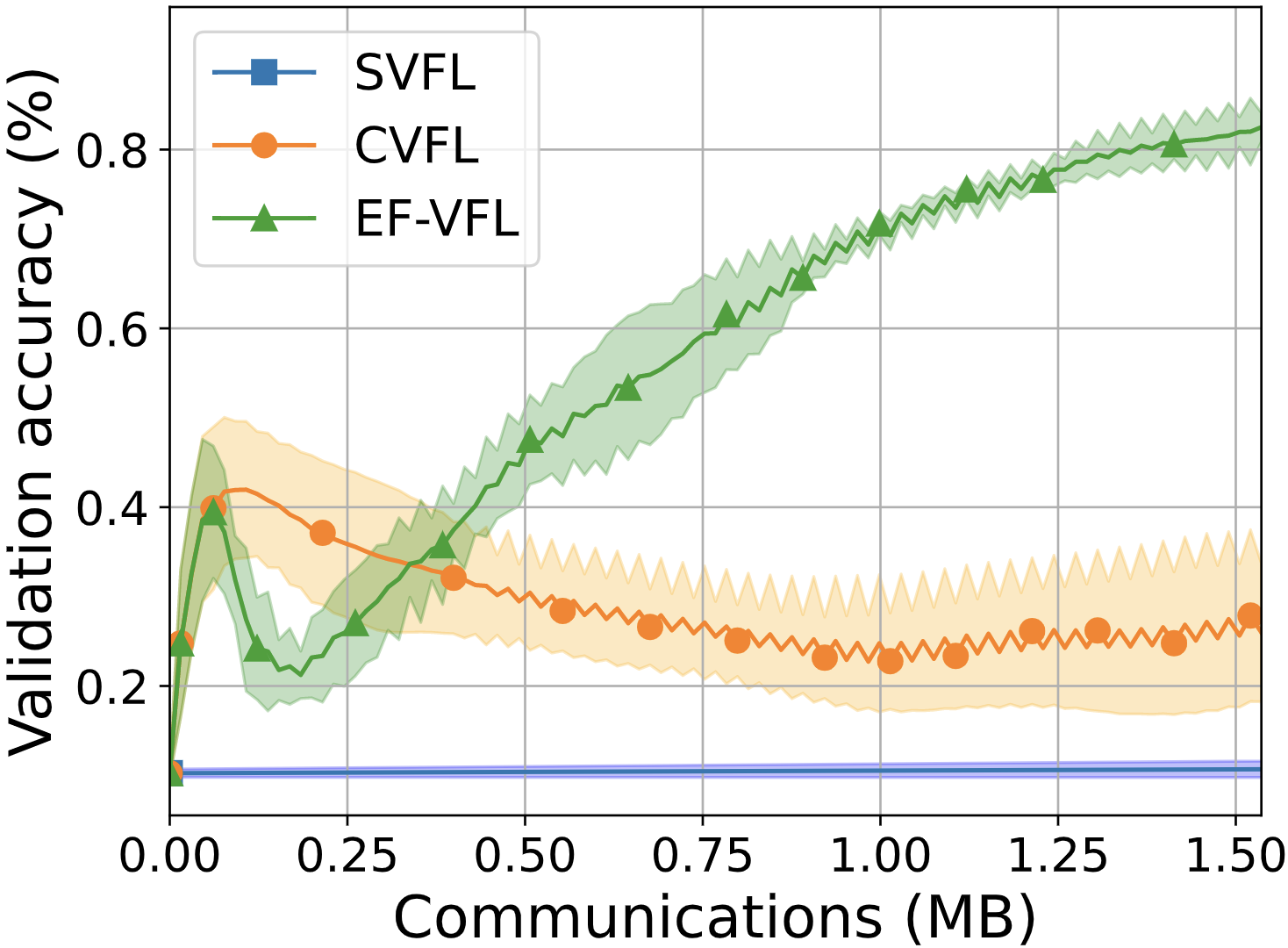}}
	\hfil
	\subfloat[quantization with $b=1$]{\includegraphics[width=0.24\linewidth]{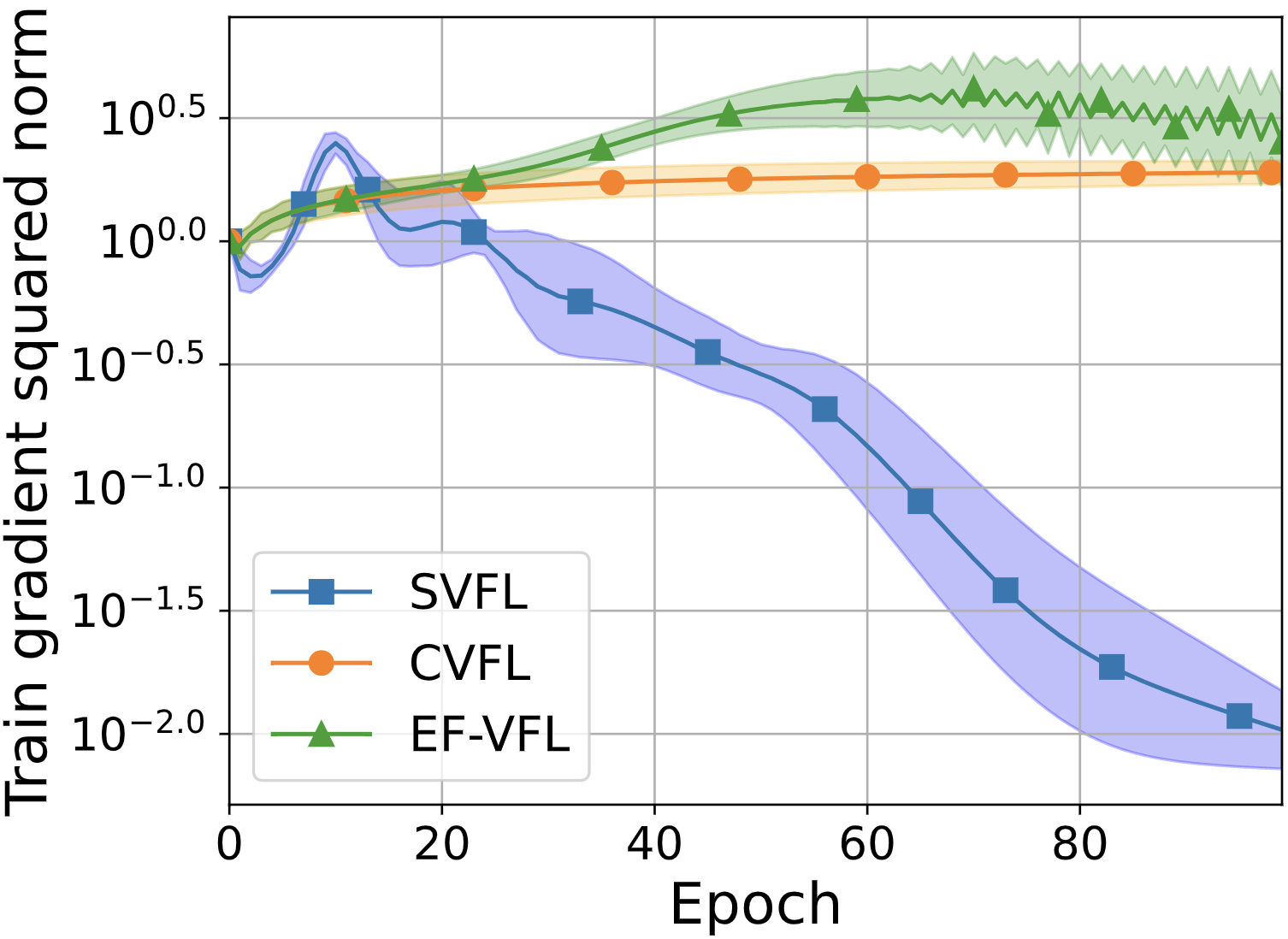}\hspace{1mm}\includegraphics[width=0.24\linewidth]{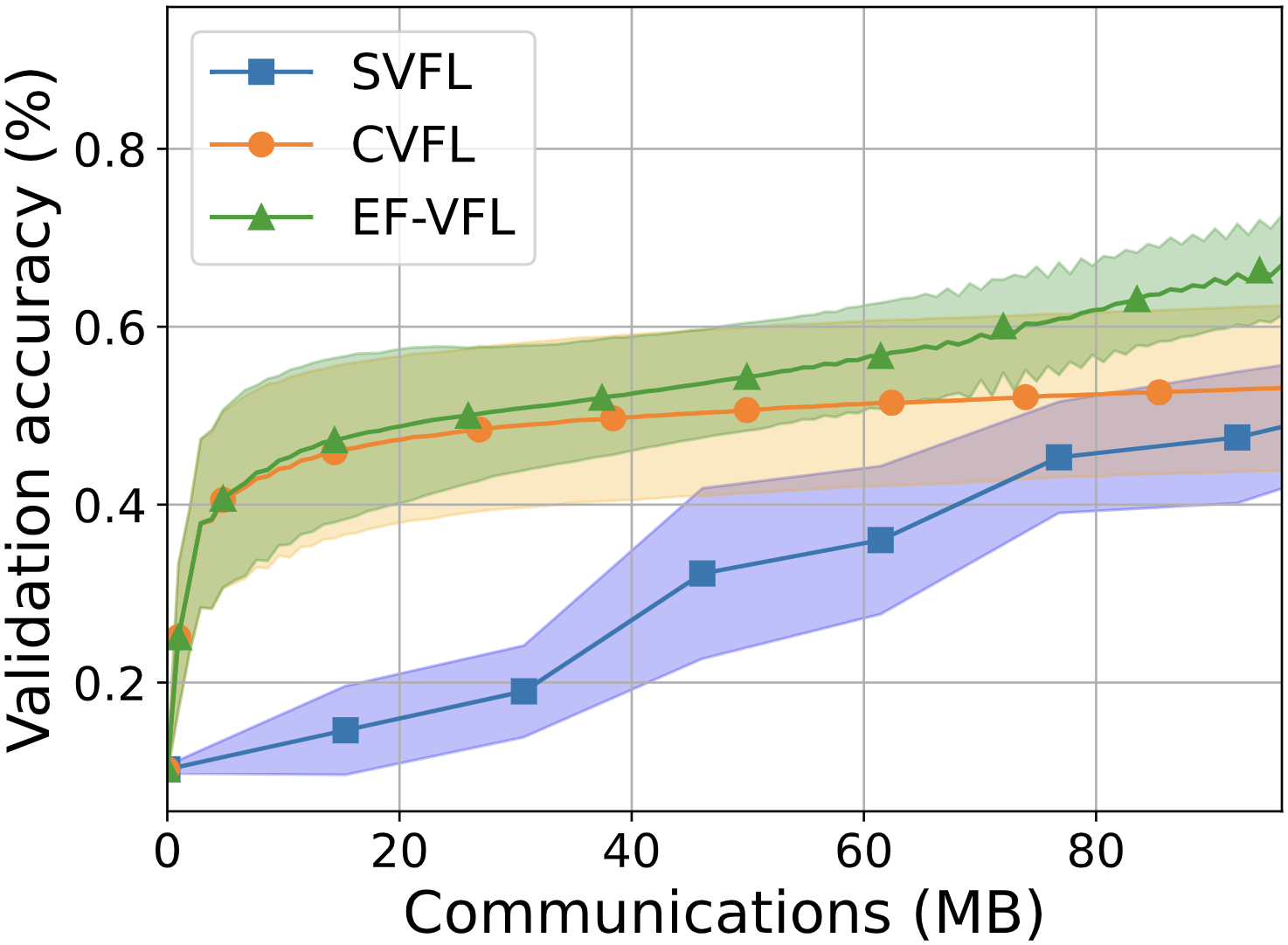}}
	\caption{The (relative) training gradient squared norm with respect to epochs
		and validation accuracy with respect to communication cost for the training of a shallow neural network on MNIST. On the left, \texttt{CVFL} and \texttt{EF-VFL} employ top-$k$ sparsification with a decreasing $k$ across rows. On the right, they employ stochastic quantization with a decreasing number of bits across rows. \texttt{SVFL} is the same throughout.
		\label{fig:mnist_exp}}
\end{figure*}

\vspace{2mm}
\paragraph{MNIST.}
We train a shallow neural network (one hidden layer) on the MNIST digit recognition dataset~\citep{lecun1998gradient}. The $28\times28$ images in the original dataset $\mathcal{D}$ are split into four local datasets $\mathcal{D}_k$ of $14\times14$ images, its quadrants ($K=4$). The local models~$\bm{h}_{kn}$ are maps $\bm{v}\mapsto \text{sigmoid}(\bm{W}_{k1}\bm{v})$, with $\bm{W}_{k1}\in\mathbb{R}^{128\times196}$, and the server model is $(\bm{v}_1,\dots,\bm{v}_4)\mapsto \bm{W}_2(\sum_{k=1}^4\bm{v}_k)$, with $\bm{W}_2\in\mathbb{R}^{10\times16}$. We use cross-entropy loss. In Figure~\ref{fig:mnist_exp}, we present the results for when \texttt{EF-VFL} and \texttt{CVFL} employ $\topk$, keeping $10\%$, $1\%$, and $0.1\%$ of the entries, and when they employ $\text{qsgd}_s$, sending $b\in\{4,2,1\}$ bits per entry, instead of the uncompressed $b=32$. In both figures, we see that \texttt{EF-VFL} outperforms \texttt{SVFL} and \texttt{CVFL} in communication efficiency. In terms of results per epoch, \texttt{EF-VFL} significantly outperforms \texttt{CVFL} and, for a sufficiently large $k$ (for $\topk$) or $b$ (for $\text{qsgd}_s$) \texttt{EF-VFL} achieves a similar performance to \texttt{SVFL}. As predicted in Section~\ref{sec:convergence_guarantees}, the train gradient squared norm during training goes to zero for \texttt{EF-VFL}, as it does for \texttt{SVFL}, but not for \texttt{CVFL}.

\vspace{2mm}
\paragraph{ModelNet10.}
We train a multi-view convolutional neural network (MVCNN)~\citep{su2015multiview} on ModelNet10~\citep{wu2015}, a dataset of three-dimensional CAD models. We use a preprocessed version of ModelNet10, where each sample is represented by 12 two-dimensional views. We assign a view per client ($K=12$). In Figure~\ref{fig:modelnet_all_exp}, we present the results for when \texttt{EF-VFL} and \texttt{CVFL} employ $\topk$, keeping $10\%$, $1\%$, and $0.1\%$ of the entries, and when they employ $\text{qsgd}_s$, with $b\in\{4,2,1\}$. We plot the train loss with respect to the number of epochs and the validation accuracy with respect to the communication cost. We observe that, for \texttt{EF-VFL}, the training loss decreases more rapidly than for \texttt{CVFL}. Further, if the compression is not excessively aggressive, \texttt{EF-VFL} performs similarly to \texttt{SVFL}. In terms of communication efficiency, \texttt{EF-VFL} outperforms both \texttt{SVFL} and \texttt{CVFL}.

\begin{figure*}[t!]
	\centering
	\subfloat[top-$k$ keeping $10\%$\vspace{-0mm} ]{\includegraphics[width=0.24\linewidth]{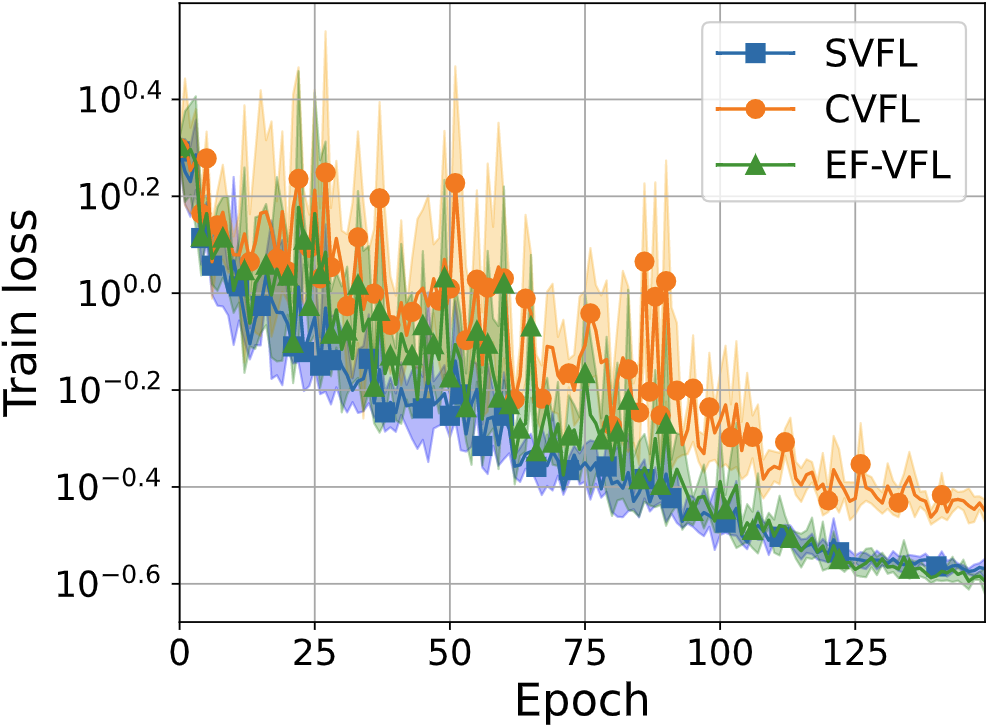}\hspace{1mm}\includegraphics[width=0.24\linewidth]{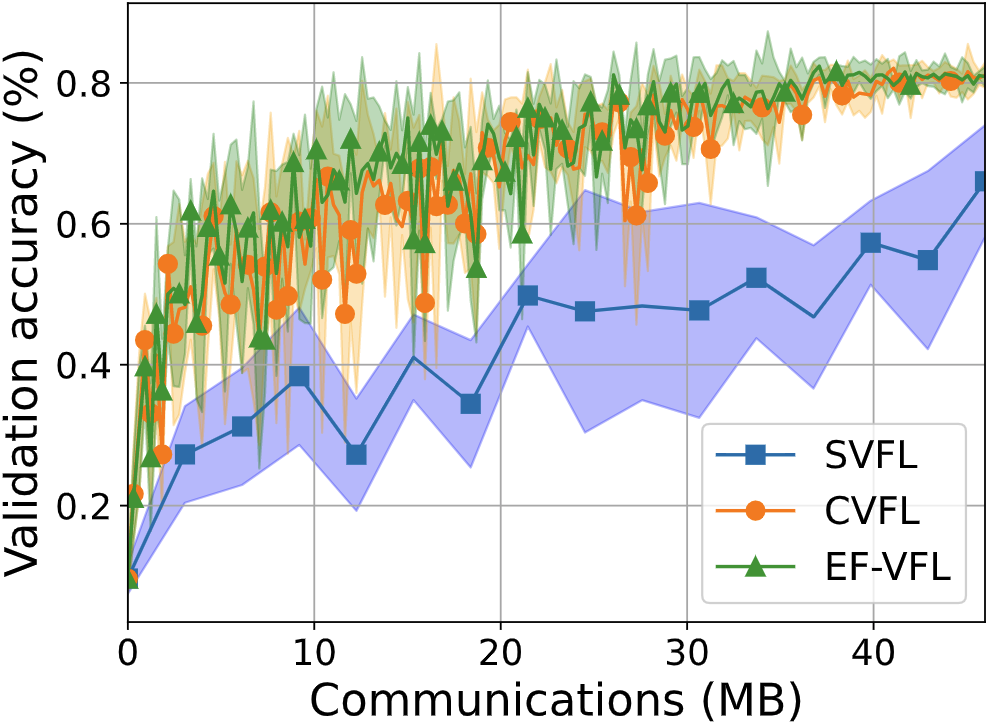}}
	\hfil
	\subfloat[quantization with $b=4$\vspace{-0mm} ]{\includegraphics[width=0.24\linewidth]{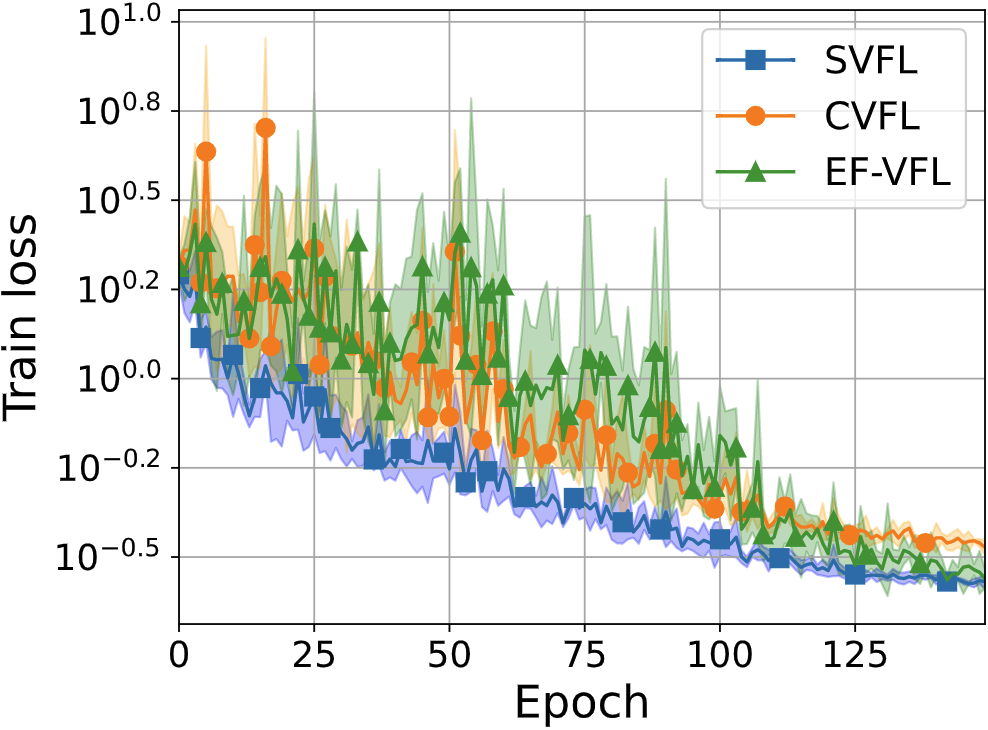}\hspace{1mm}\includegraphics[width=0.24\linewidth]{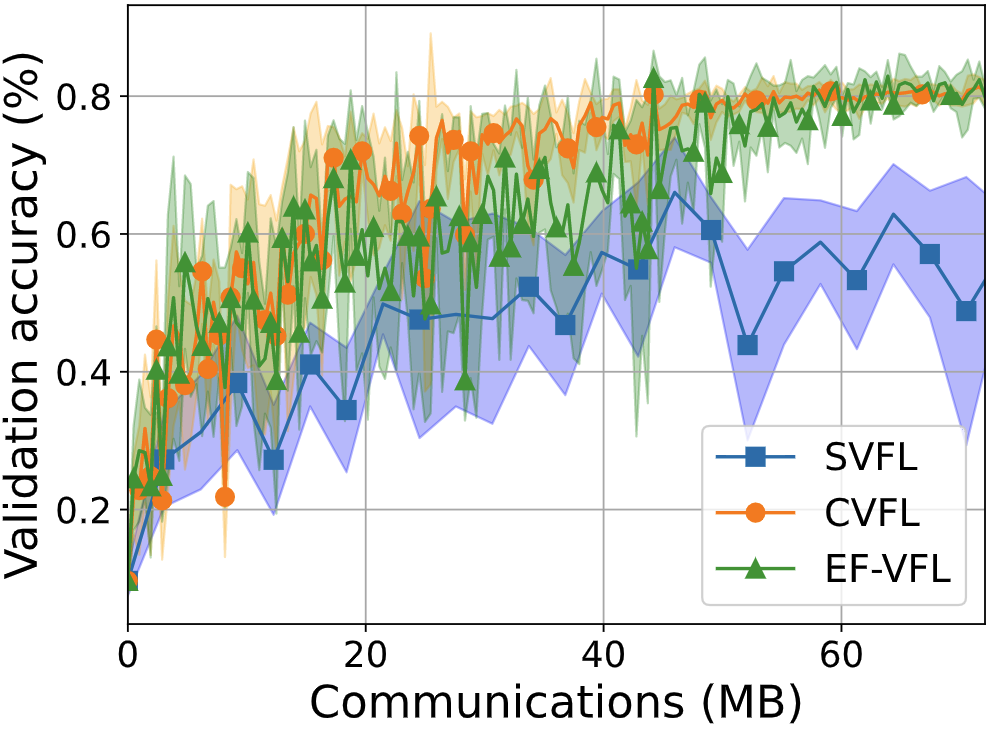}}
	\\
	\subfloat[top-$k$ keeping $1\%$\vspace{-0mm} ]{\includegraphics[width=0.24\linewidth]{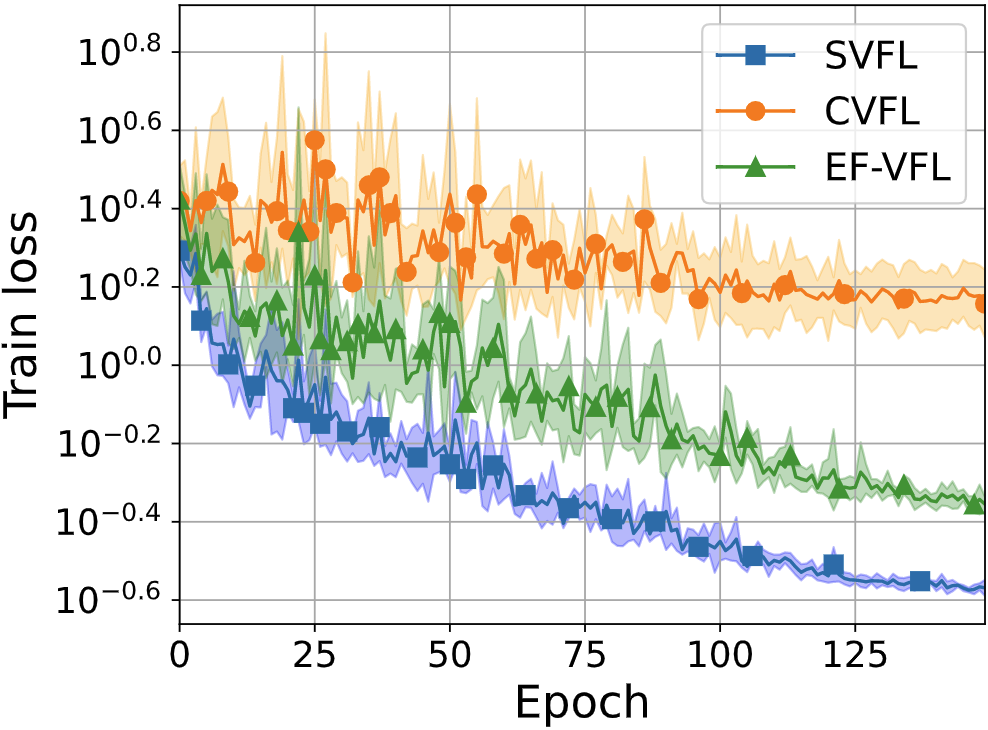}\hspace{1mm}\includegraphics[width=0.24\linewidth]{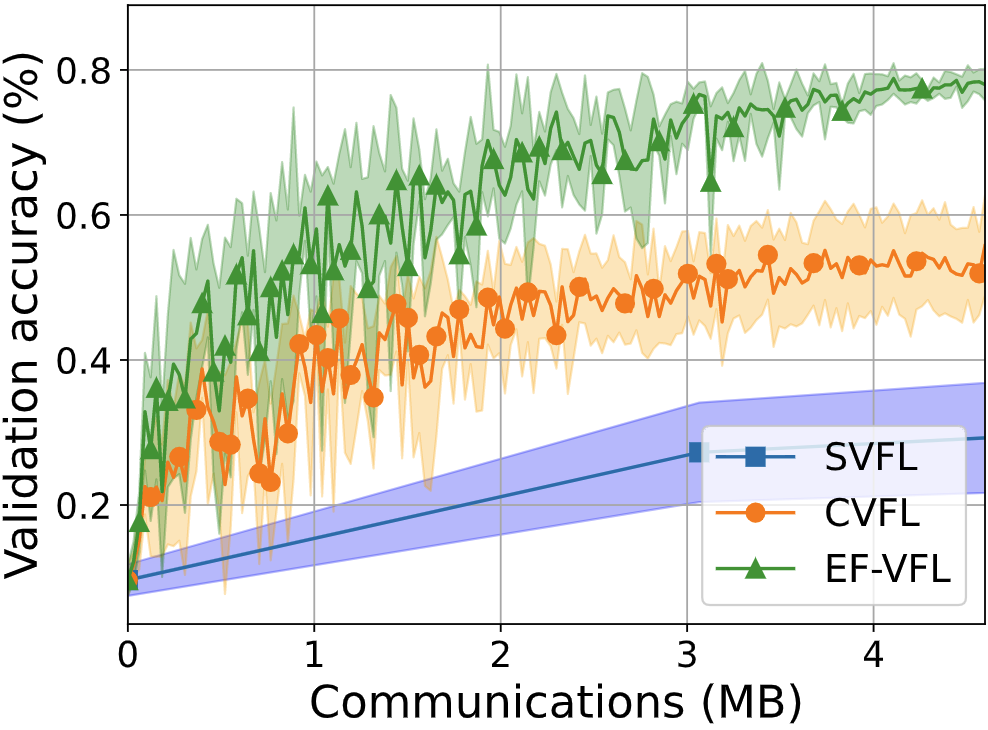}}
	\hfil
	\subfloat[quantization with $b=2$\vspace{-0mm} ]{\includegraphics[width=0.24\linewidth]{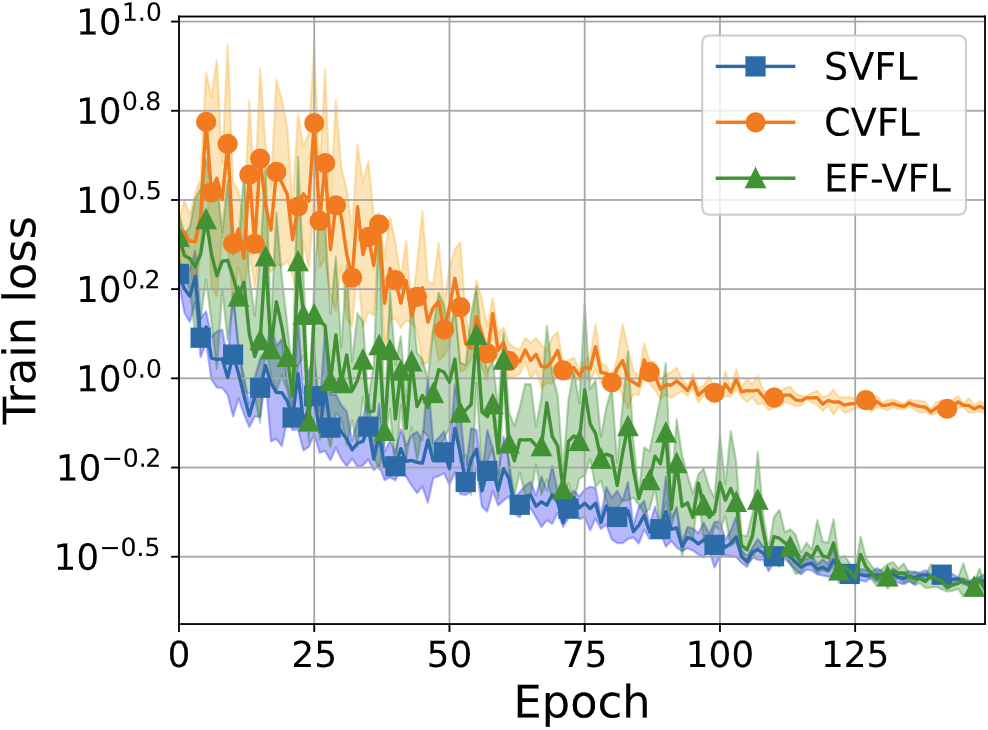}\hspace{1mm}\includegraphics[width=0.24\linewidth]{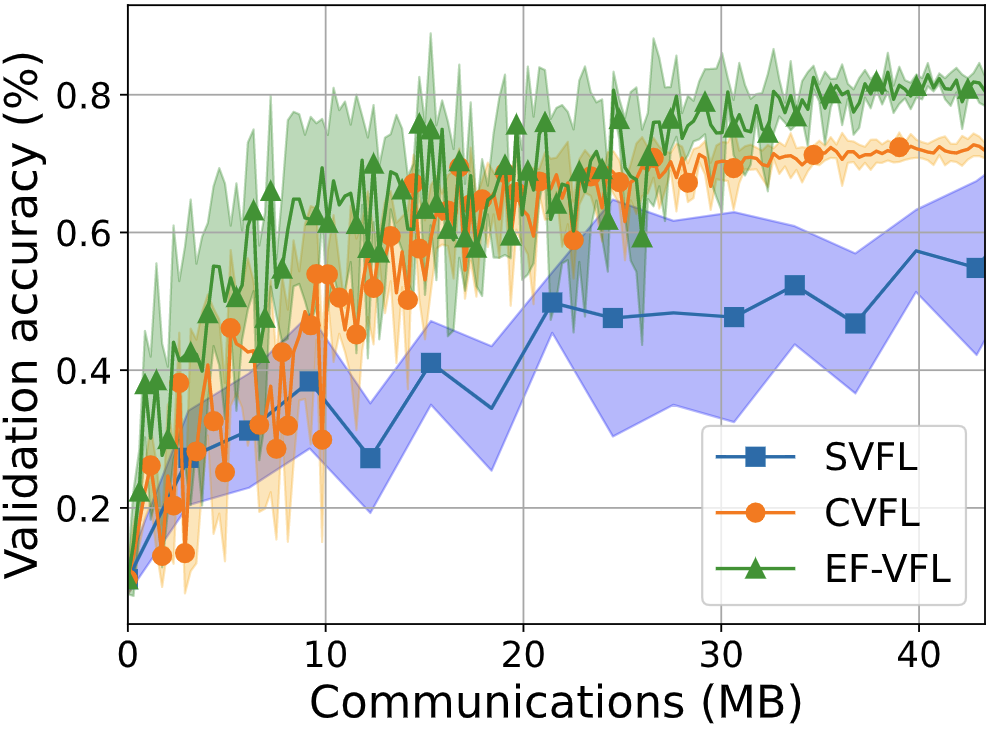}}
	\\
	\subfloat[top-$k$ keeping $0.1\%$]{\includegraphics[width=0.24\linewidth]{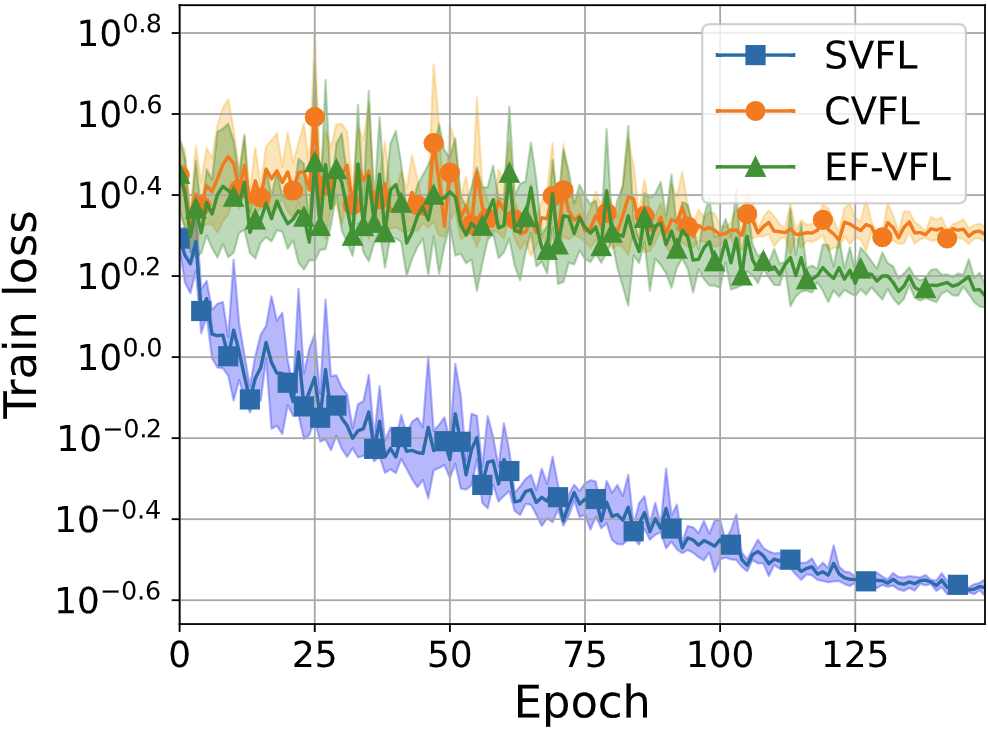}\hspace{1mm}\includegraphics[width=0.24\linewidth]{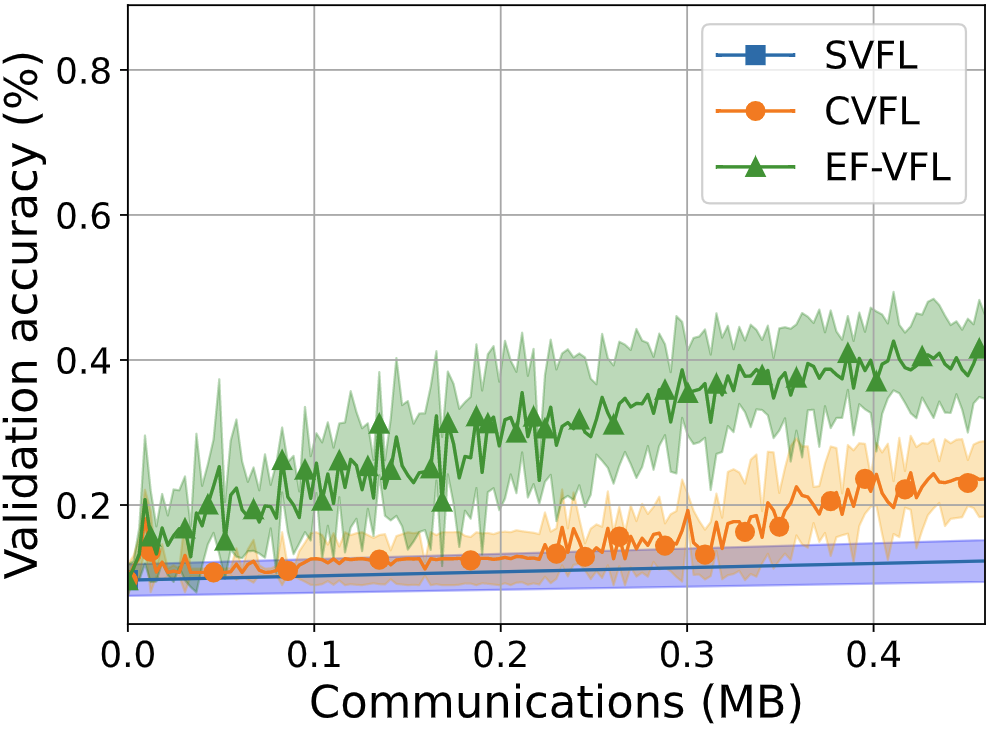}}
	\hfil
	\subfloat[quantization with $b=1$]{\includegraphics[width=0.24\linewidth]{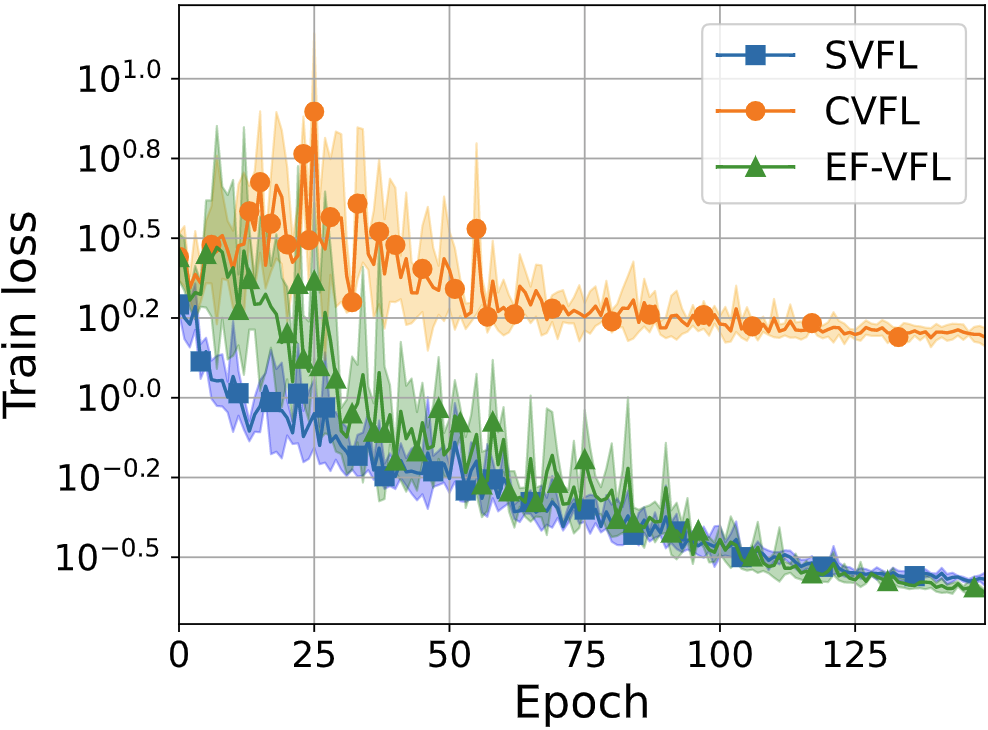}\hspace{1mm}\includegraphics[width=0.24\linewidth]{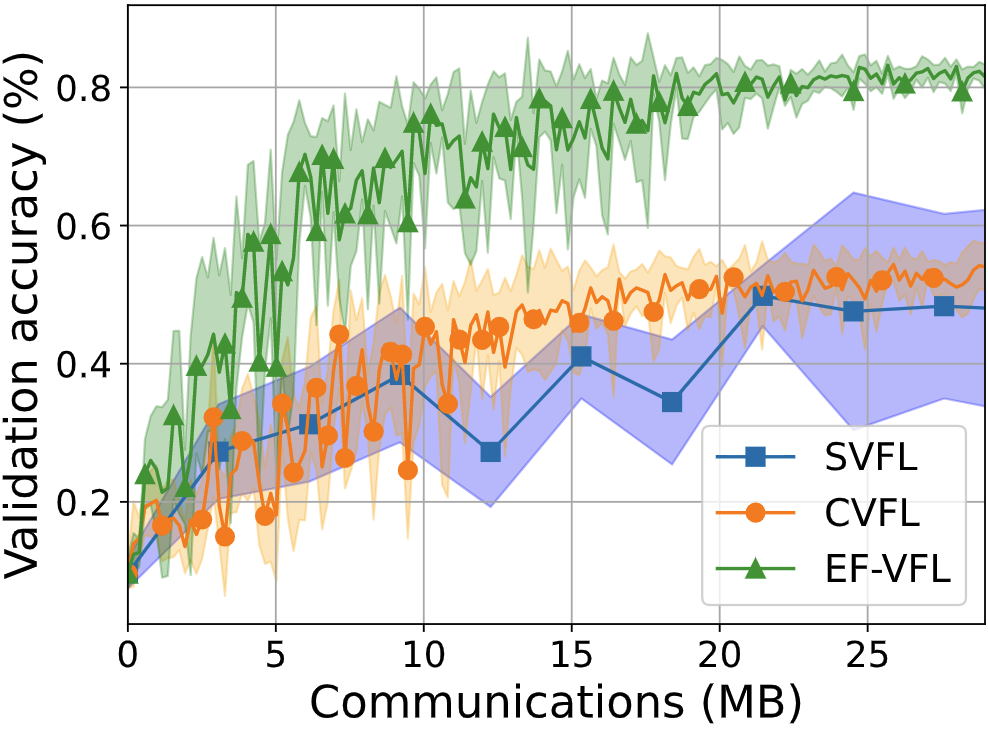}}
	\caption{Train loss with respect to the number of epochs and validation accuracy with respect to the communication cost for the training of an MVCNN on ModelNet10. On the left, \texttt{CVFL} and \texttt{EF-VFL} employ top-$k$ sparsification with a decreasing $k$ across rows. On the right, they employ stochastic quantization with a decreasing number of bits across rows. \texttt{SVFL} is the same throughout.
		\label{fig:modelnet_all_exp}}
\end{figure*}

\vspace{2mm}
\paragraph{CIFAR-100.}
We train a model based on a residual neural network, ResNet18~\citep{he2015deep}, on CIFAR-100~\citep{krizhevsky2009learning}. More precisely, we divide each image into four quadrants and allocate one quadrant to each client ($K=4$), with each client using a ResNet18 model as its local model. The server model is linear (a single layer). In Figure~\ref{fig:resnet_all_exp}, we present the results for when \texttt{EF-VFL} and \texttt{CVFL} employ $\topk$, keeping $10\%$, $1\%$, and $0.1\%$ of the entries, and when they employ $\text{qsgd}_s$, with $b\in\{4,2,1\}$. We plot the train loss with respect to the number of epochs and the validation accuracy with respect to the communication cost. Regarding the results with respect to the number of epochs, \texttt{EF-VFL} achieves a similar performance to that of \texttt{SVFL}, significantly outperforming \texttt{CVFL}. In terms of communication efficiency, \texttt{EF-VFL} outperforms both \texttt{SVFL} and \texttt{CVFL}.

{
	We summarize the test metrics for all three tasks in Table~\ref{tab:test_metrics}. In brief, while \texttt{CVFL} performs well for less aggressive compression is employed, the improved performance of \texttt{EF-VFL} is significant when more aggressive compression is employed.
}

\begin{table*}[t]
	\caption{
		{Test accuracy for \texttt{SVFL}, \texttt{CVFL}, and \texttt{EF-VFL} across different tasks, for a fixed number of epochs. We run reach experiment for 5 seeds and present the mean accuracy $\pm$ standard deviation, highlighting the highest accuracy in bold.}
		\label{tab:test_metrics}
	}
	\vspace{-1mm}
	\centering
		\begin{tabular}{l*{7}{c}}
			\specialrule{1.2pt}{0pt}{0pt}
			\noalign{\vskip 1mm}
			\multicolumn{8}{c}{$\qquad\qquad\qquad\quad${MNIST (accuracy, \%)}}\\
			\specialrule{1.2pt}{0pt}{0pt}
			&  & \multicolumn{3}{c}{{top-$k$ compressor}} & \multicolumn{3}{c}{{qsgd compressor}} \\
			\cline{3-8}
			\noalign{\vskip 1mm}
			& {Uncompressed} & {keep $10\%$} & {keep $1\%$} & {keep $0.1\%$} & {$ b=4 $} & {$ b=2 $} & {$ b=1 $} \\
			\specialrule{1.2pt}{0pt}{0pt}
			{\texttt{SVFL}} & $91.6\pm0.1$ & --- & --- & --- & --- & --- & --- \\
			{\texttt{CVFL}} & --- & $77.2\pm1.9$ & $35.7\pm5.0$ & $25.7\pm7.6$ & $50.3\pm6.1$ & $53.0\pm7.4$ & $52.7\pm8.7$ \\
			{\texttt{EF-VFL}} & --- & $\textbf{91.8}\pm0.1$ & $\textbf{91.1}\pm0.3$ & $\textbf{82.4}\pm2.1$ & $\textbf{87.2}\pm0.4$ & $\textbf{81.1}\pm1.6$ & $\textbf{66.8}\pm5.9$ \\
			\specialrule{1.2pt}{0pt}{0pt} 
			\noalign{\vskip 1mm}
			\multicolumn{8}{c}{$\qquad\qquad\qquad\quad$ModelNet10 (accuracy, \%)}\\
			\specialrule{1.2pt}{0pt}{0pt}
			&  & \multicolumn{3}{c}{{top-$k$ compressor}} & \multicolumn{3}{c}{{qsgd compressor}} \\
			\cline{3-8}
			\noalign{\vskip 1mm}
			& {Uncompressed} & {keep $10\%$} & {keep $1\%$} & {keep $0.1\%$} & {$ b=4 $} & {$ b=2 $} & {$ b=1 $} \\
			\specialrule{1.2pt}{0pt}{0pt} 
			{\texttt{SVFL}} & $81.2\pm0.8$ & --- & --- & --- & --- & --- & --- \\
			{\texttt{CVFL}} & --- & $\textbf{80.7}\pm3.1$ & $53.2\pm{6.5}$ & $24.5\pm4.2$ & $\textbf{80.3}\pm2.0$ & $70.7\pm1.6$ & $52.0\pm{3.2}$ \\
			{\texttt{EF-VFL}} & --- & $80.4\pm1.9$ & $\textbf{77.4}\pm2.5$ & $\textbf{40.3}\pm{4.8}$ & $79.4\pm{3.7}$ & $\textbf{80.4}\pm{2.7}$ & $\textbf{81.1}\pm2.8$ \\
			\specialrule{1.2pt}{0pt}{0pt} 
			\noalign{\vskip 1mm}
			\multicolumn{8}{c}{$\qquad\qquad\qquad\quad$CIFAR-100 (accuracy, \%)}\\
			\specialrule{1.2pt}{0pt}{0pt}
			&  & \multicolumn{3}{c}{{top-$k$ compressor}} & \multicolumn{3}{c}{{qsgd compressor}} \\
			\cline{3-8}
			\noalign{\vskip 1mm}
			& {Uncompressed} & {keep $10\%$} & {keep $1\%$} & {keep $0.1\%$} & {$ b=4 $} & {$ b=2 $} & {$ b=1 $} \\
			\specialrule{1.2pt}{0pt}{0pt}
			{\texttt{SVFL}} & $57.7\pm0.6$ & --- & --- & --- & --- & --- & --- \\
			{\texttt{CVFL}} & --- & $56.8\pm0.6$ & $45.1\pm{2.3}$ & $11.6\pm1.4$ & $19.2\pm{0.6}$ & $5.9\pm0.7$ & $2.0\pm0.1$ \\
			{\texttt{EF-VFL}} & --- & $\textbf{57.2}\pm{0.8}$ & $\textbf{54.8}\pm0.9$ & $\textbf{36.4}\pm{2.8}$ & $\textbf{57.8}\pm0.5$ & $\textbf{50.2}\pm{1.3}$ & $\textbf{34.7}\pm{2.8}$ \\
			\specialrule{1.2pt}{0pt}{0pt} 
		\end{tabular}
	\vspace{-3mm}
\end{table*}

\begin{figure*}[t!]
	\centering
	\subfloat[top-$k$ keeping $10\%$\vspace{-0mm} ]{%
		\includegraphics[width=0.24\linewidth]{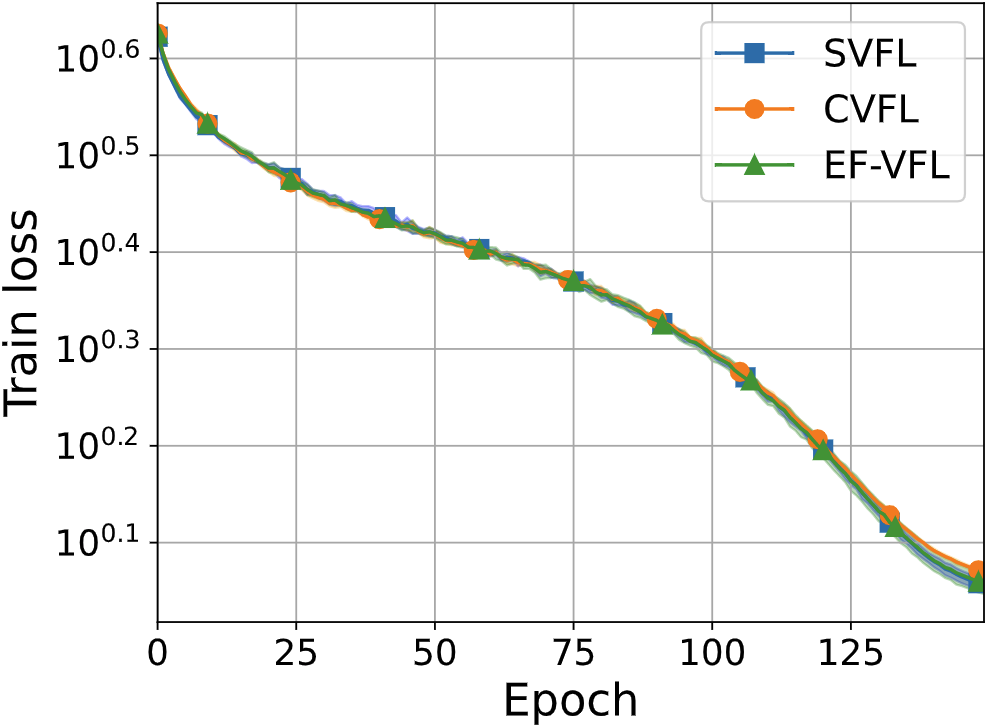}%
		\hspace{1mm}%
		\includegraphics[width=0.24\linewidth]{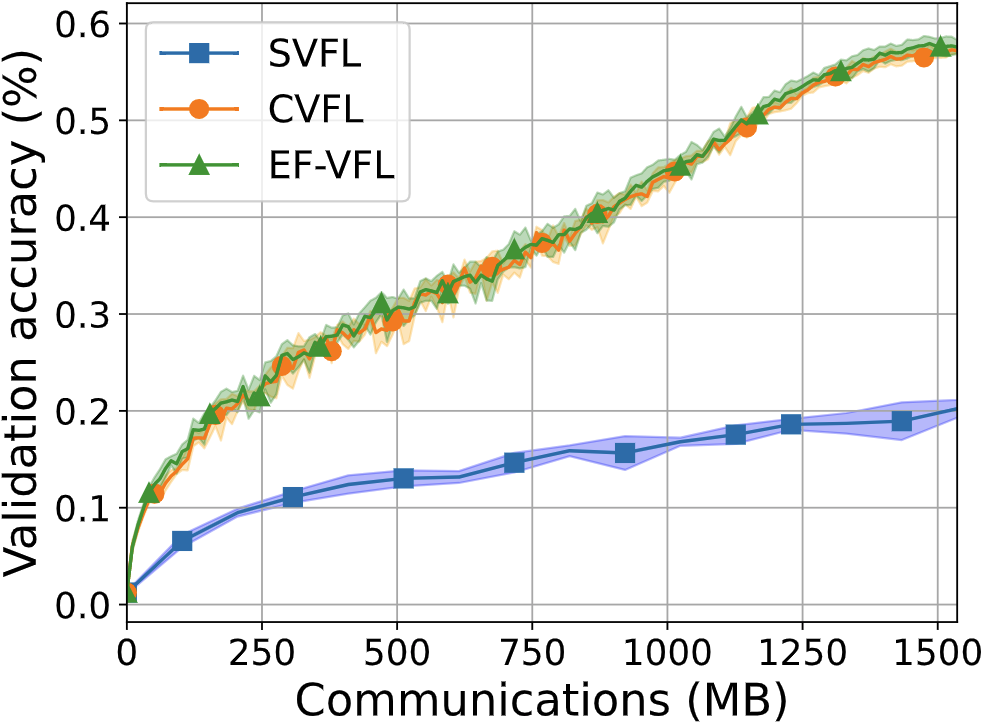}%
	}%
	\hfil%
	\subfloat[quantization with $b=4$\vspace{-0mm} ]{%
		\includegraphics[width=0.24\linewidth]{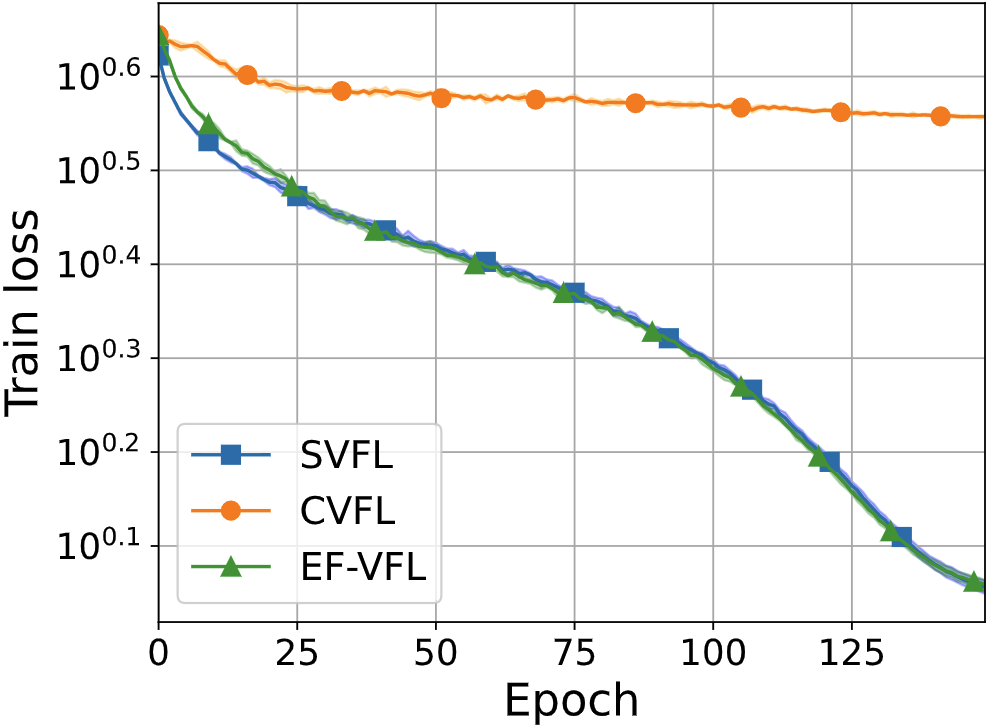}%
		\hspace{1mm}%
		\includegraphics[width=0.24\linewidth]{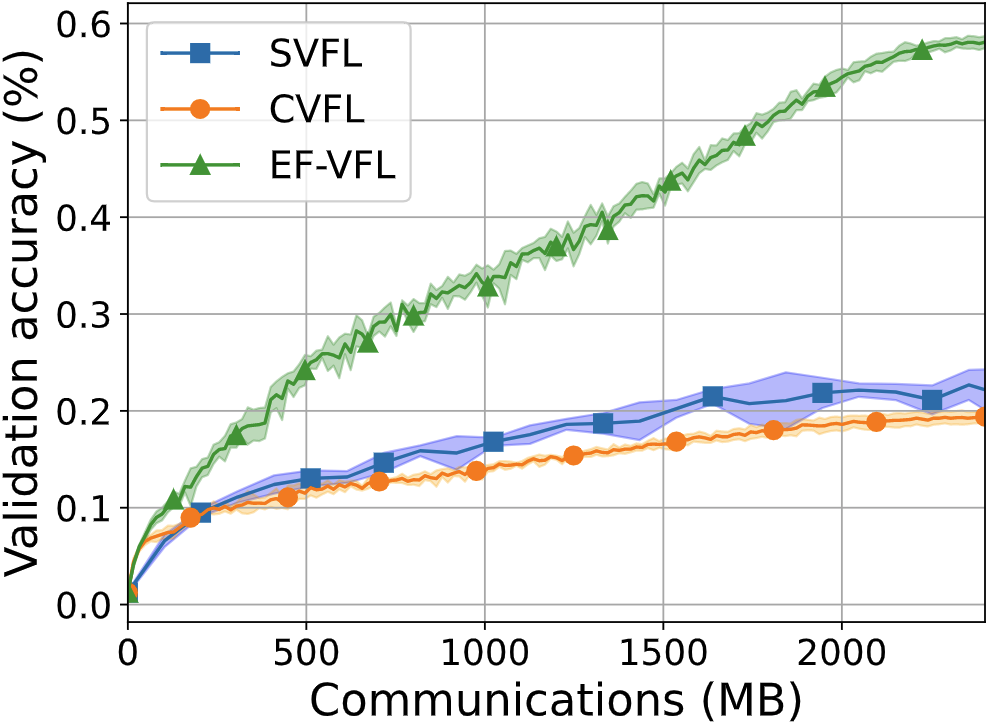}%
	}%
	\\
	\subfloat[top-$k$ keeping $1\%$\vspace{-0mm} ]{%
		\includegraphics[width=0.24\linewidth]{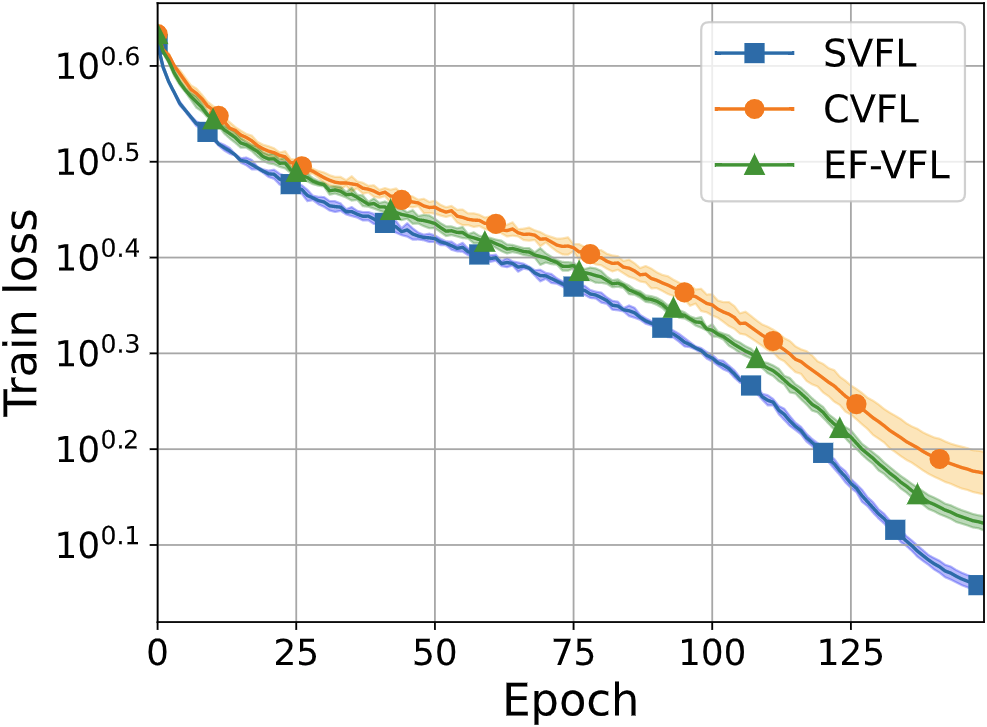}%
		\hspace{1mm}%
		\includegraphics[width=0.24\linewidth]{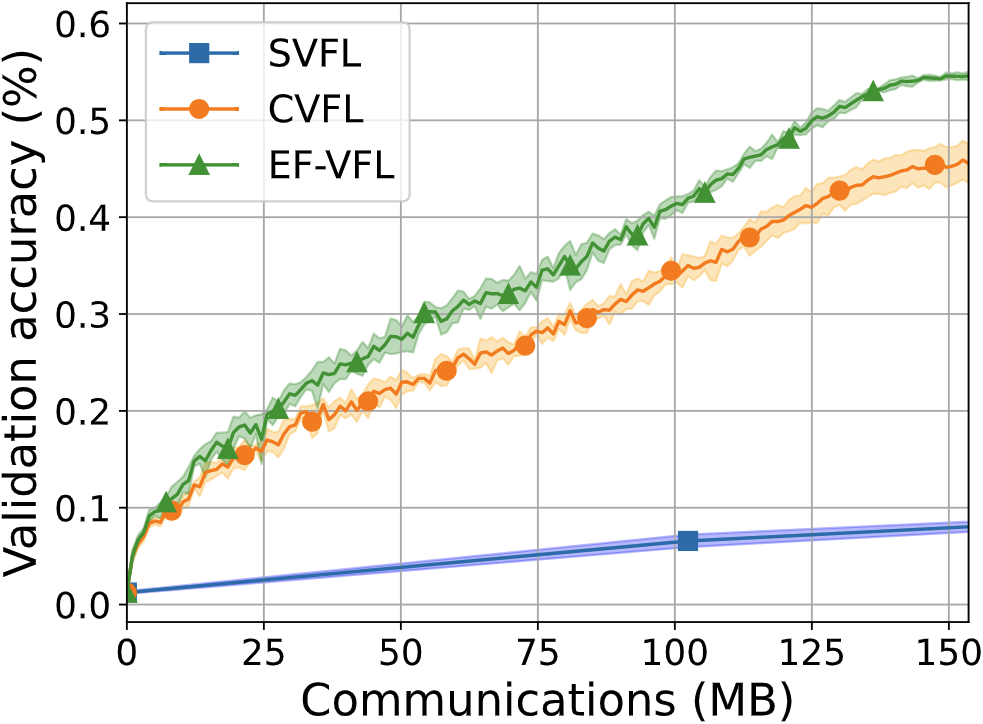}%
	}%
	\hfil%
	\subfloat[quantization with $b=2$\vspace{-0mm} ]{%
		\includegraphics[width=0.24\linewidth]{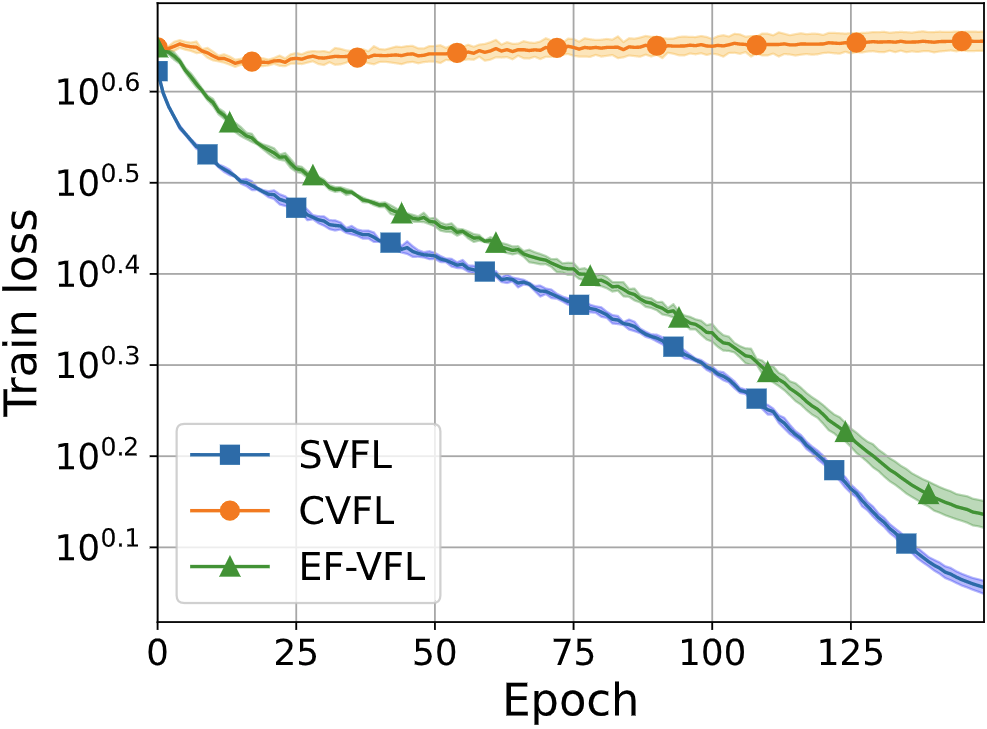}%
		\hspace{1mm}%
		\includegraphics[width=0.24\linewidth]{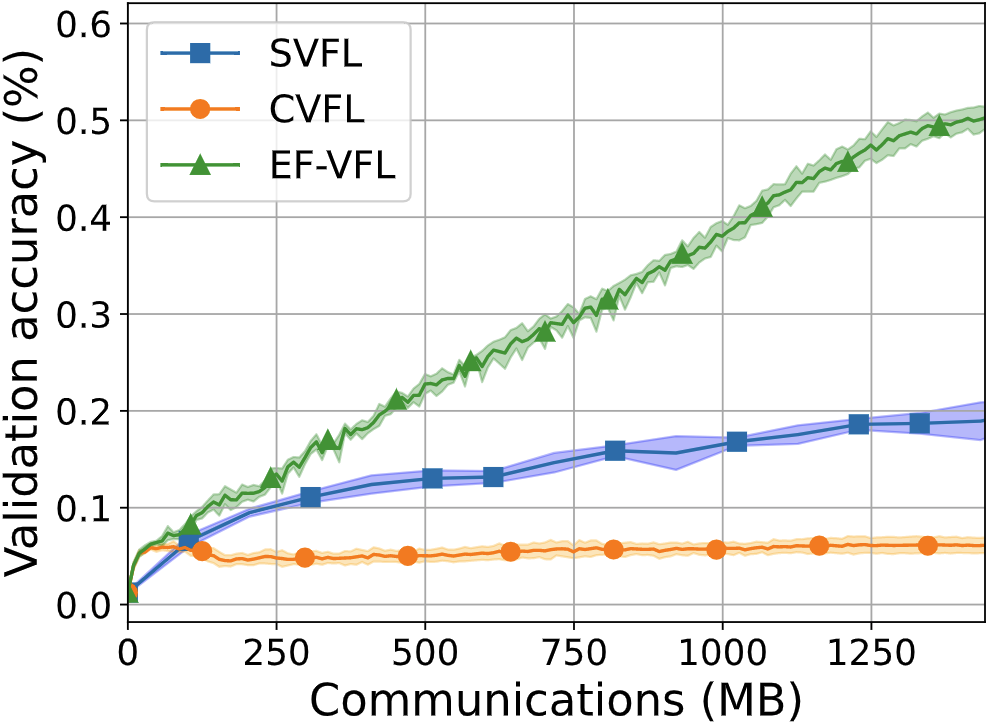}%
	}%
	\\
	\subfloat[top-$k$ keeping $0.1\%$]{%
		\includegraphics[width=0.24\linewidth]{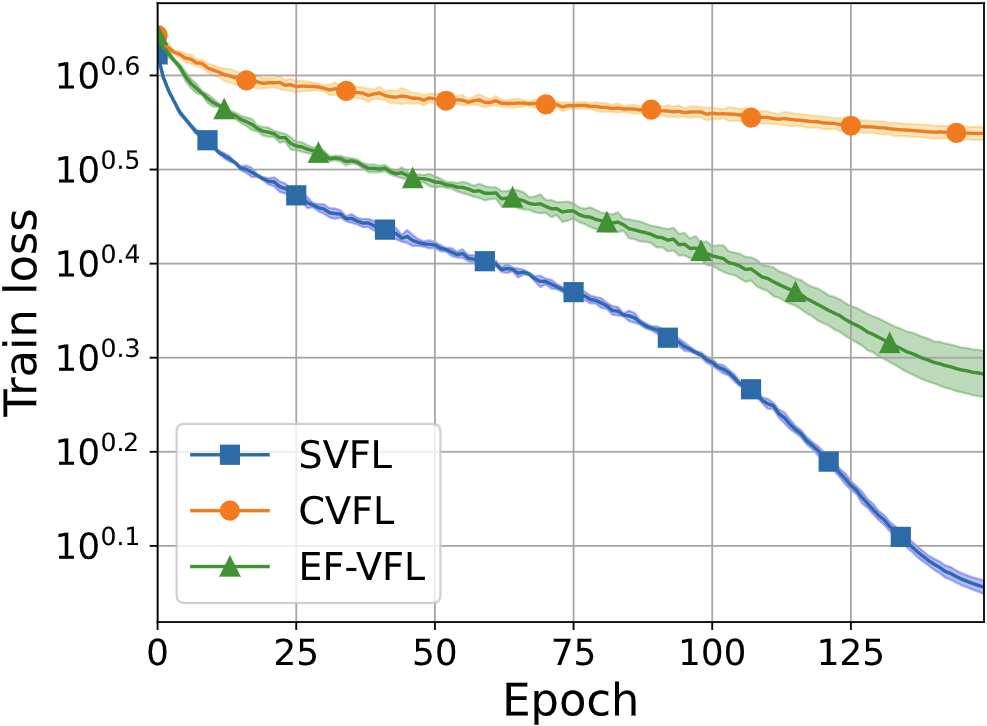}%
		\hspace{1mm}%
		\includegraphics[width=0.24\linewidth]{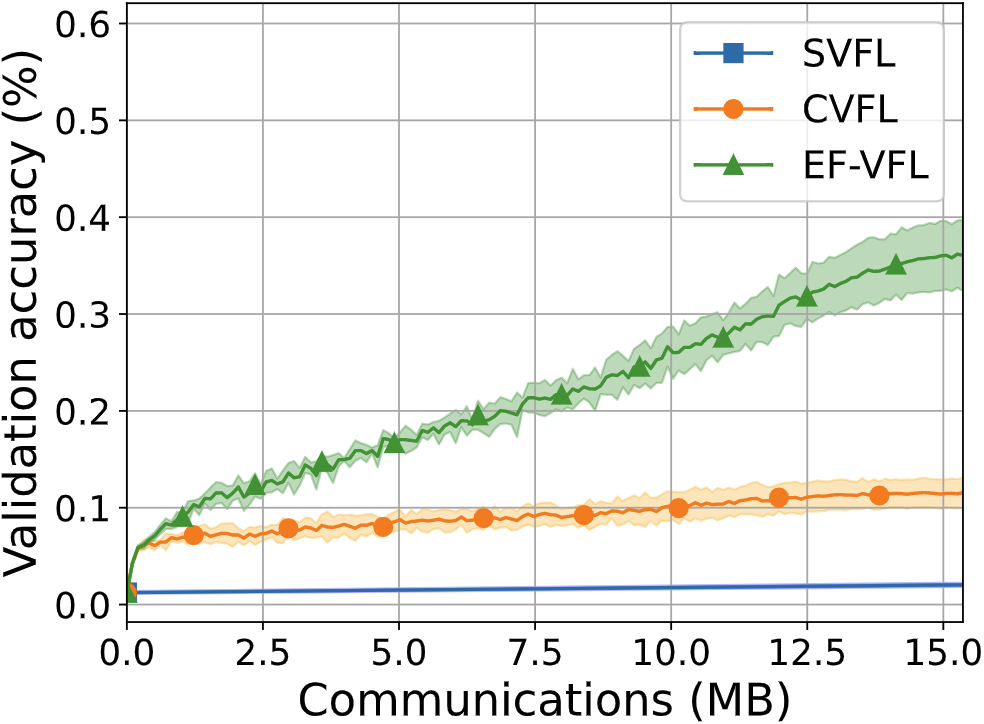}%
	}%
	\hfil%
	\subfloat[quantization with $b=1$]{%
		\includegraphics[width=0.24\linewidth]{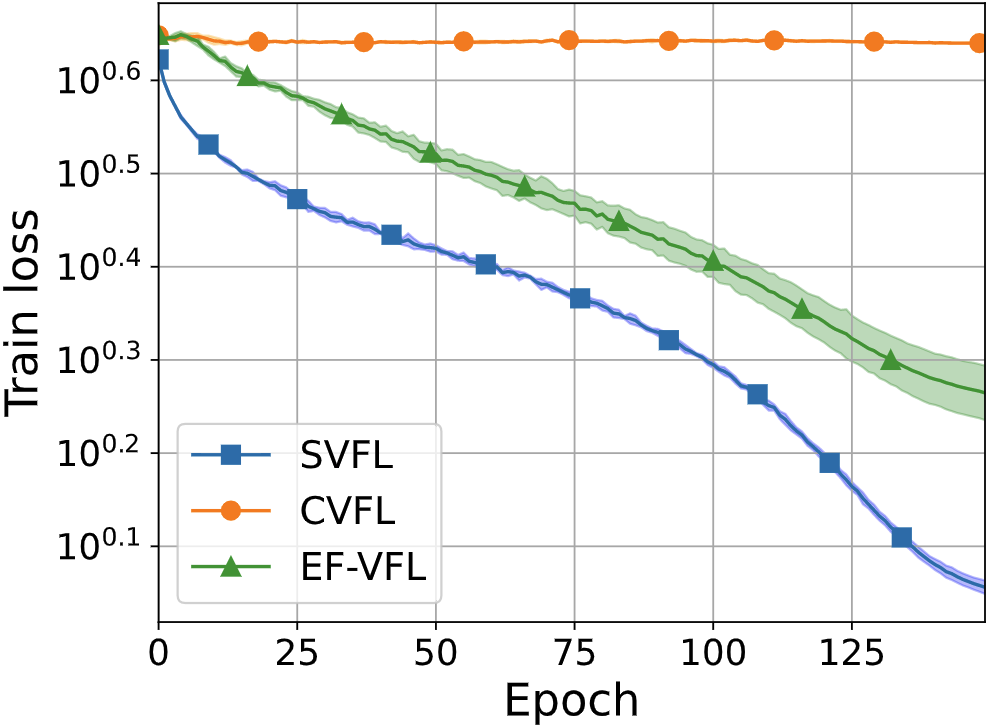}%
		\hspace{1mm}%
		\includegraphics[width=0.24\linewidth]{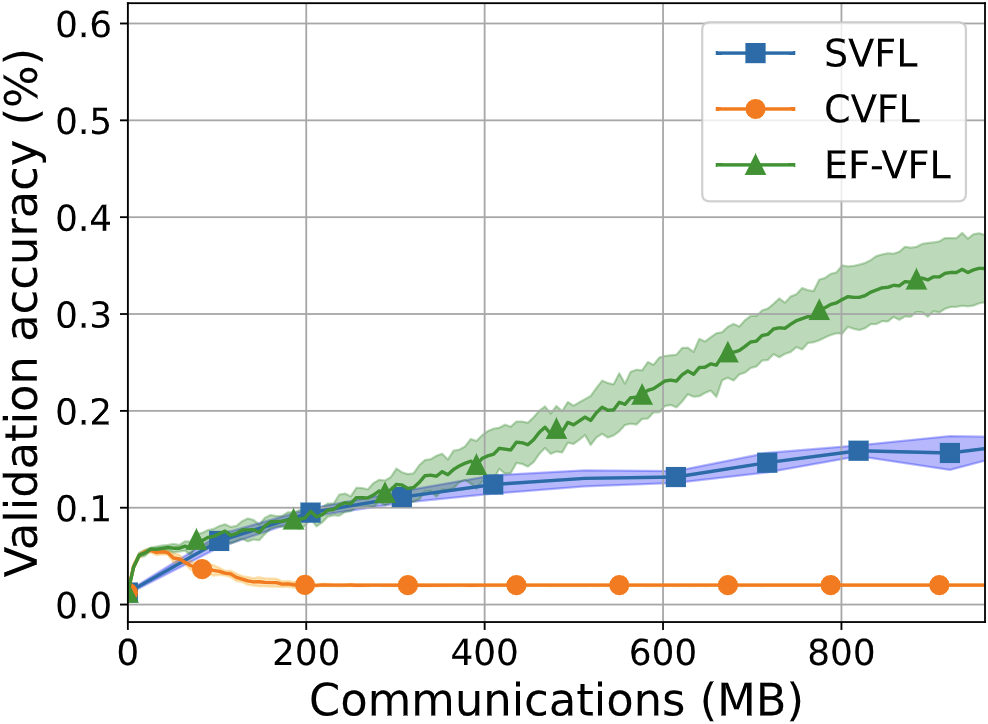}%
	}
	\caption{Train loss with respect to the number of epochs and the validation accuracy with respect to the communication cost for the training of a ResNet18-based model on CIFAR-100. On the left, \texttt{CVFL} and \texttt{EF-VFL} employ top-$k$ sparsification with a decreasing $k$ across rows. On the right, they employ stochastic quantization with a decreasing number of bits across rows. \texttt{SVFL} is the same throughout.\label{fig:resnet_all_exp}}
\end{figure*}

\subsection{Performance under private labels}
\label{sec:private_labels_exps}
In this section, we run experiments on the adaption of \texttt{EF-VFL} to handle private labels, proposed in Section~\ref{sec:efvfl_private_labels}.

In~\citet{Castiglia2022}, the authors assume that the labels are available at all clients and do not propose an adaptation of \texttt{CVFL} to deal with private labels. Yet, to get a baseline for a compressed VFL method allowing for label privacy, we adapt \texttt{CVFL} in a similar manner to how we adapt \texttt{EF-VFL} (that is, sending back the derivative from the server to the clients, instead of $\phi_n$ and $\bm{x}_0$, and without backpropagating through the compression operator).

We run an experiment on MNIST, training the same shallow neural network as in Section~\ref{sec:single_local_update}, with all the same settings, except for the use of batch size~$B=1024$. Further, we train a ResNet18-based model with a linear server model (a single layer) on CIFAR-100~\citep{krizhevsky2009learning}. For all the optimizers, we use an initial stepsize of $\eta=0.01$ and a cosine annealing scheduler with a minimum stepsize of $1/100$ the initial value and use a batch size~$B=128$ and a weight decay of $0.01$. The compressed-communication methods employ $\topk$, keeping $5\%$ of the entries.

In Figure~\ref{fig:private_labels_exp}, we observe that, although the modified \texttt{EF-VFL} for handling private labels converges noticeably slower than the original method, it still performs effectively. For both the MNIST experiment and the CIFAR-100 experiment, we see that, while adapting \texttt{CVFL} to handle private labels leads to a severe drop in performance, \texttt{EF-VFL} slows down much less noticeably. In fact, for the MNIST experiment, \texttt{EF-VFL} with private labels still outperforms \texttt{CVFL}, even with public labels.

\begin{figure*}[t!]
	\centering
	\subfloat[MNIST, top-$k$ keeping $5\%$\vspace{-0mm}]{\includegraphics[width=0.24\linewidth]{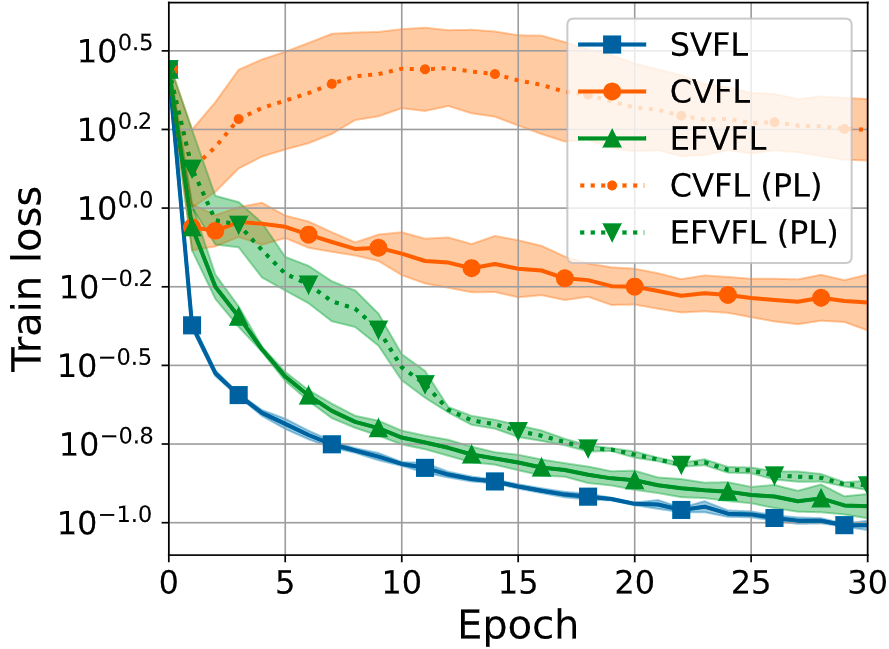}\hspace{1mm}\includegraphics[width=0.24\linewidth]{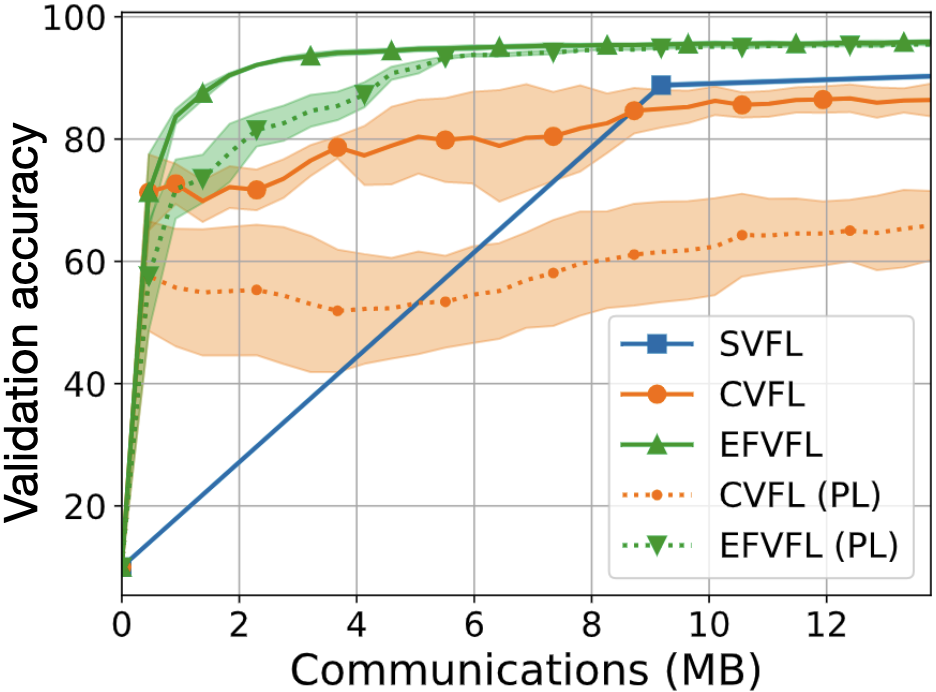}}
	\hfil
	\subfloat[CIFAR-100, top-$k$ keeping $5\%$\vspace{-0mm}]{\includegraphics[width=0.24\linewidth]{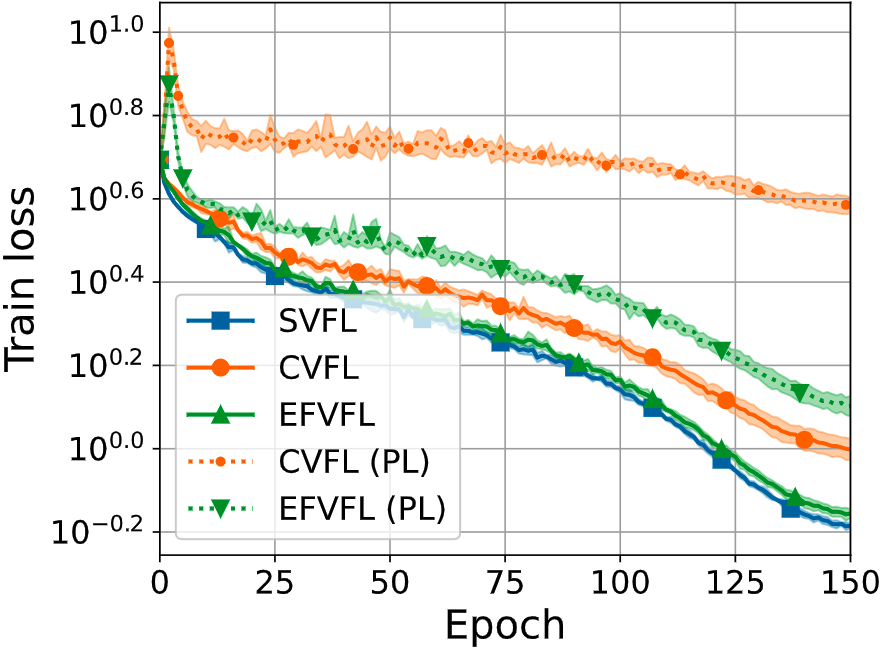}\hspace{1mm}\includegraphics[width=0.24\linewidth]{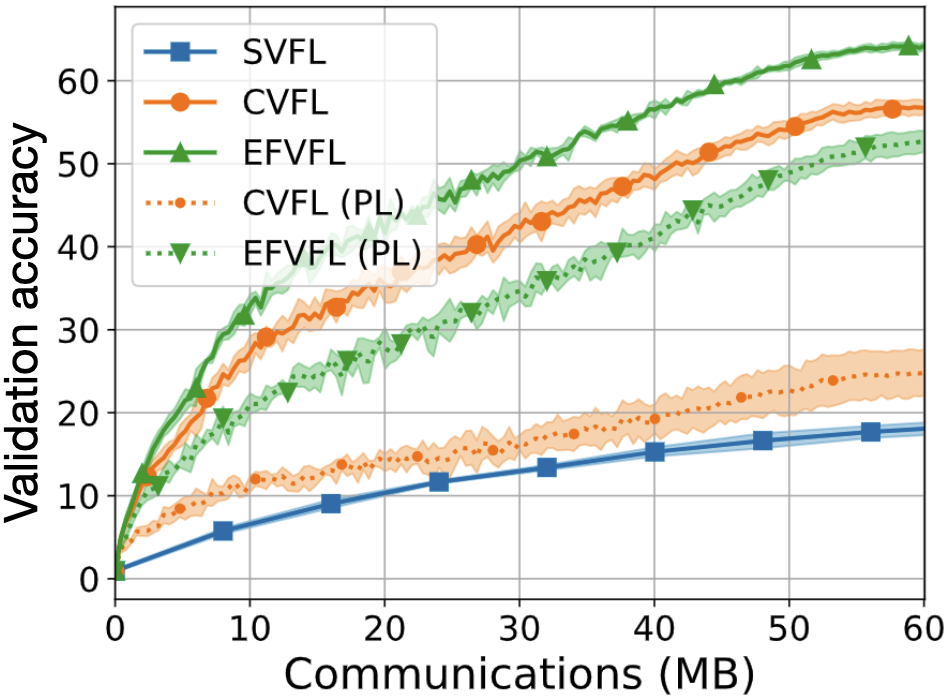}}
	\caption{Train loss with respect to the number of epochs and validation accuracy with respect to the communication cost for the training of a shallow neural network on MNIST and a ResNet18-based model on CIFAR-100. In the legend, PL stands for private labels. The communication compressed methods---\texttt{CVFL}, \texttt{EF-VFL}, \texttt{CVFL} (PL), and \texttt{EF-VFL} (PL)---employ top-$k$ sparsification.\label{fig:private_labels_exp}}
\end{figure*}

\subsection{Performance under multiple local updates}
As mentioned earlier, some VFL works employ $Q>1$ local updates per round~\citep{Liu2022}, using stale information from the other machines. We now show that, although our analysis focuses on the case where each client performs a single local update at each round of communications (that is, $Q=1$), \texttt{EF-VFL} performs well in the $Q>1$ case too. In particular, to study the performance of \texttt{EF-VFL} when carrying out multiple local updates, we train an MVCNN on ModelNet10 and a ResNet18 on CIFAR-10. 

For ModelNet10, all three VFL optimizers use a batch size~$B=128$, a stepsize~$\eta=0.004$, and a weight decay of 0.01. Further, we use a learning rate scheduler, halving the learning rate at epochs 50 and 75. The results are presented in Figure~\ref{fig:mvcnn_2q} and Figure~\ref{fig:mvcnn_4q}. For CIFAR-10, all three VFL optimizers use a batch size~$B=128$, a stepsize~$\eta=0.0025$, and a weight decay of 0.01. Further, we use a learning rate scheduler, halving the learning rate at epochs 40, 60, and 80. The results are presented in Figure~\ref{fig:resnet_2q} and Figure~\ref{fig:resnet_4q}.

For both ModelNet10 and CIFAR-10, we see that, similarly to the $Q=1$ case, our method outperforms \texttt{SVFL} and \texttt{CVFL} in communication efficiency. In terms of results per epoch, \texttt{EF-VFL} performs similarly to \texttt{SVFL} and significantly better than \texttt{CVFL}. Interestingly, for the CIFAR-10 task, \texttt{EF-VFL} even outperforms \texttt{SVFL} with respect to the number of epochs. We suspect this may be due to the fact that compression helps to mitigate the overly greedy nature of the parallel updates based on stale information.

\begin{figure*}[t]
	\centering
	\subfloat[ModelNet10 ($\text{qsgd}_s$ with $b=4$; $Q=2$)\label{fig:mvcnn_2q}]{\includegraphics[width=0.24\linewidth]{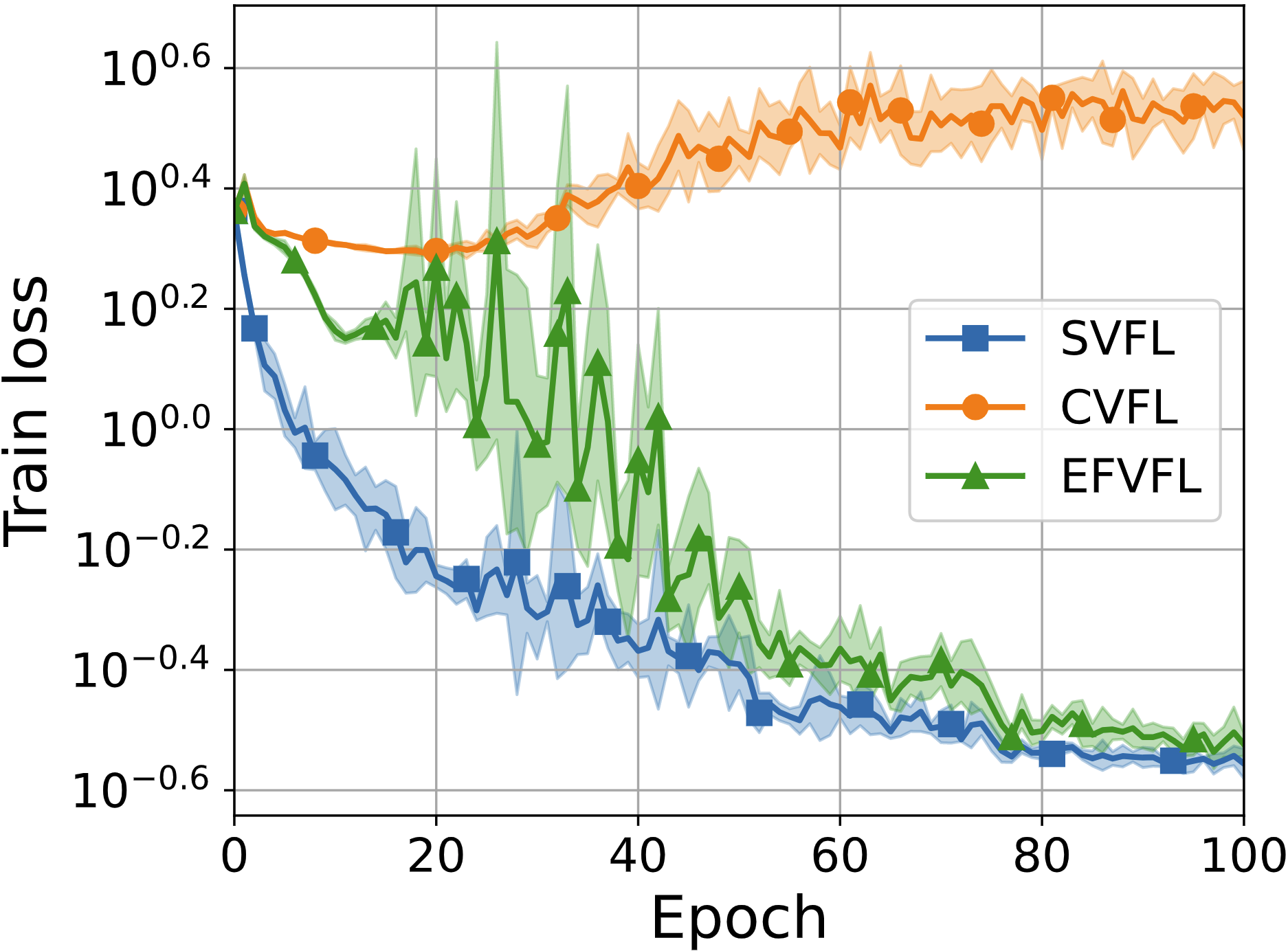}\hspace{1mm}\includegraphics[width=0.24\linewidth]{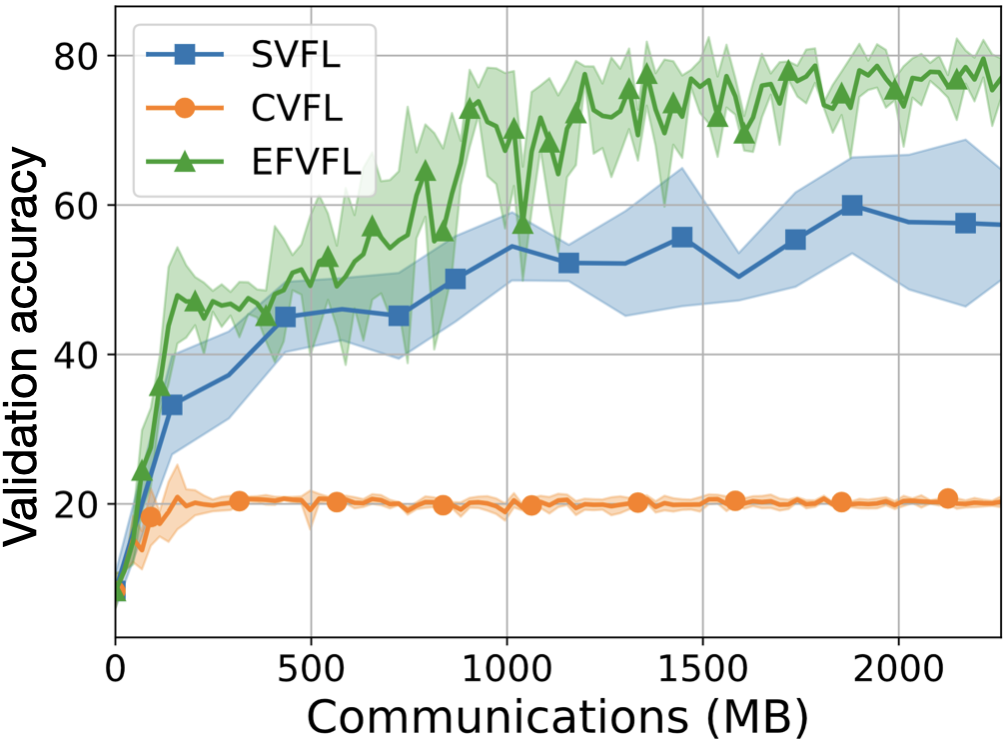}}
	\hfil
	\subfloat[ModelNet10 ($\text{qsgd}_s$ with $b=4$; $Q=4$)\label{fig:mvcnn_4q}]{\includegraphics[width=0.24\linewidth]{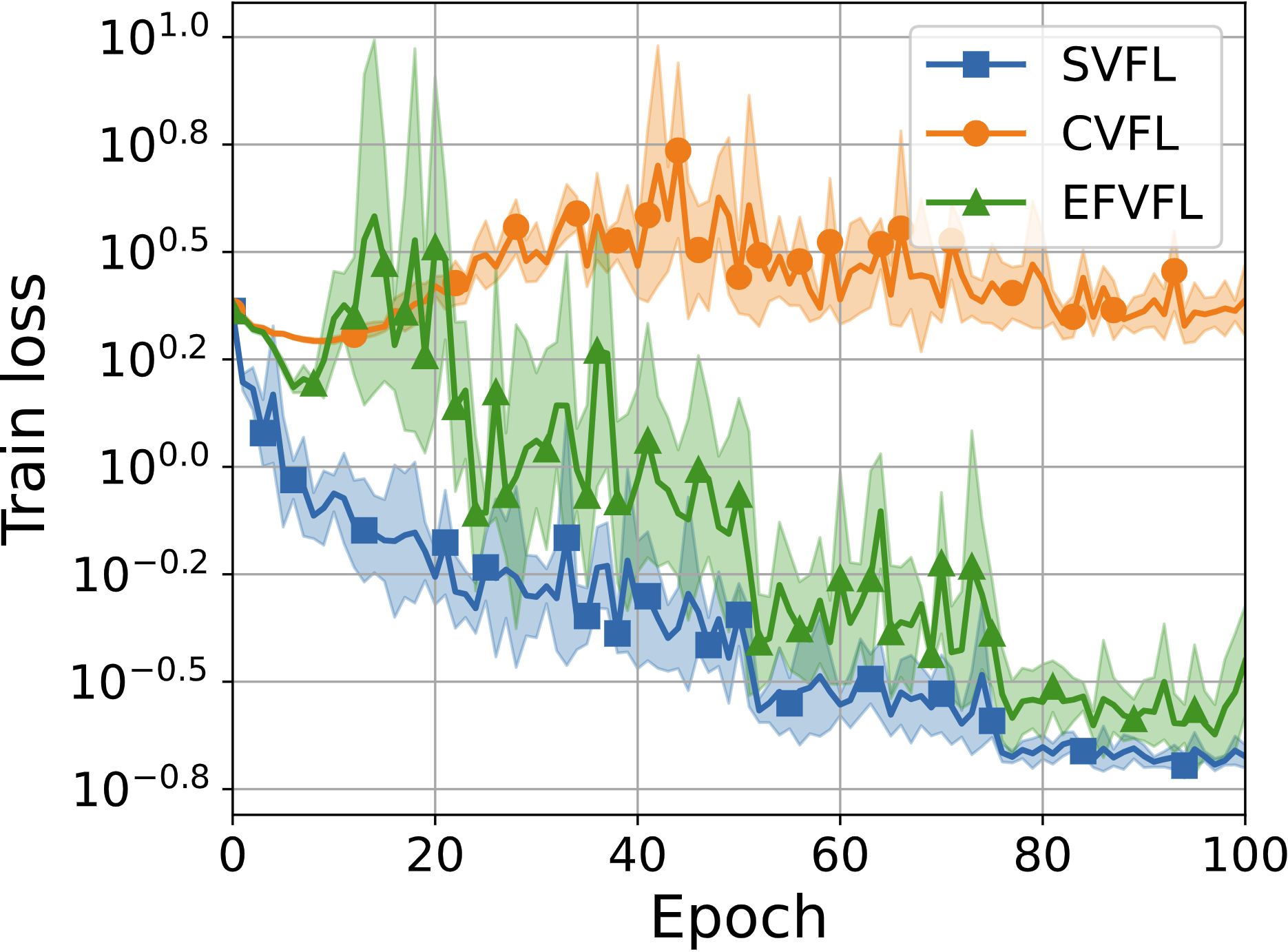}\hspace{1mm}\includegraphics[width=0.24\linewidth]{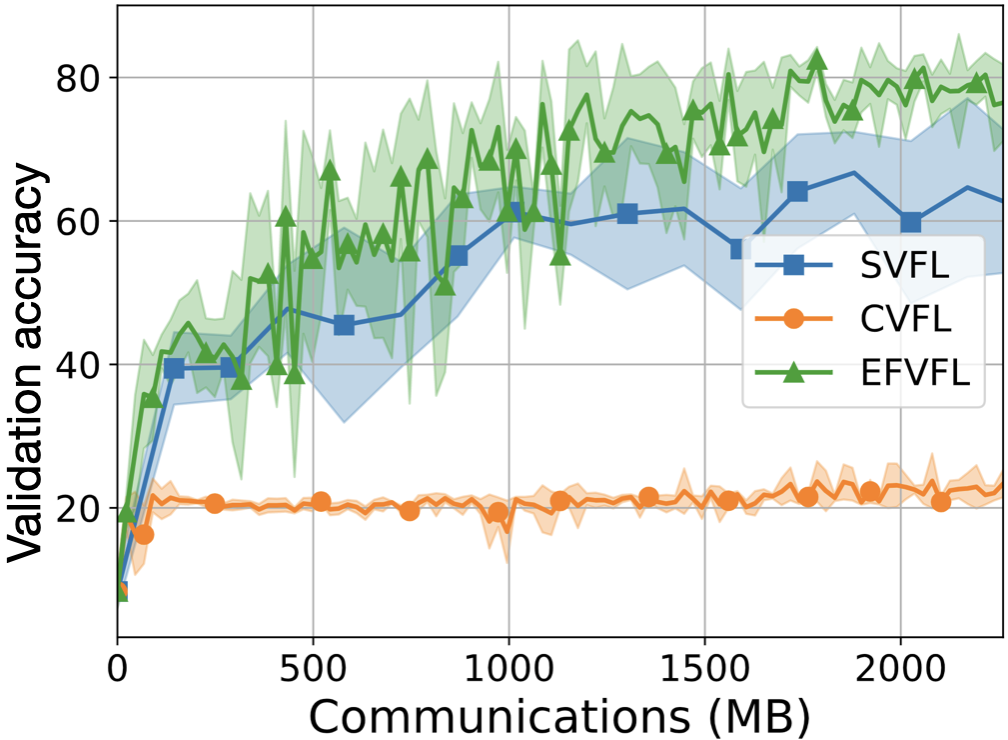}}
	\\
	\subfloat[CIFAR-10 ($\text{qsgd}_s$ with $b=1$; $Q=2$)\label{fig:resnet_2q}]{\includegraphics[width=0.24\linewidth]{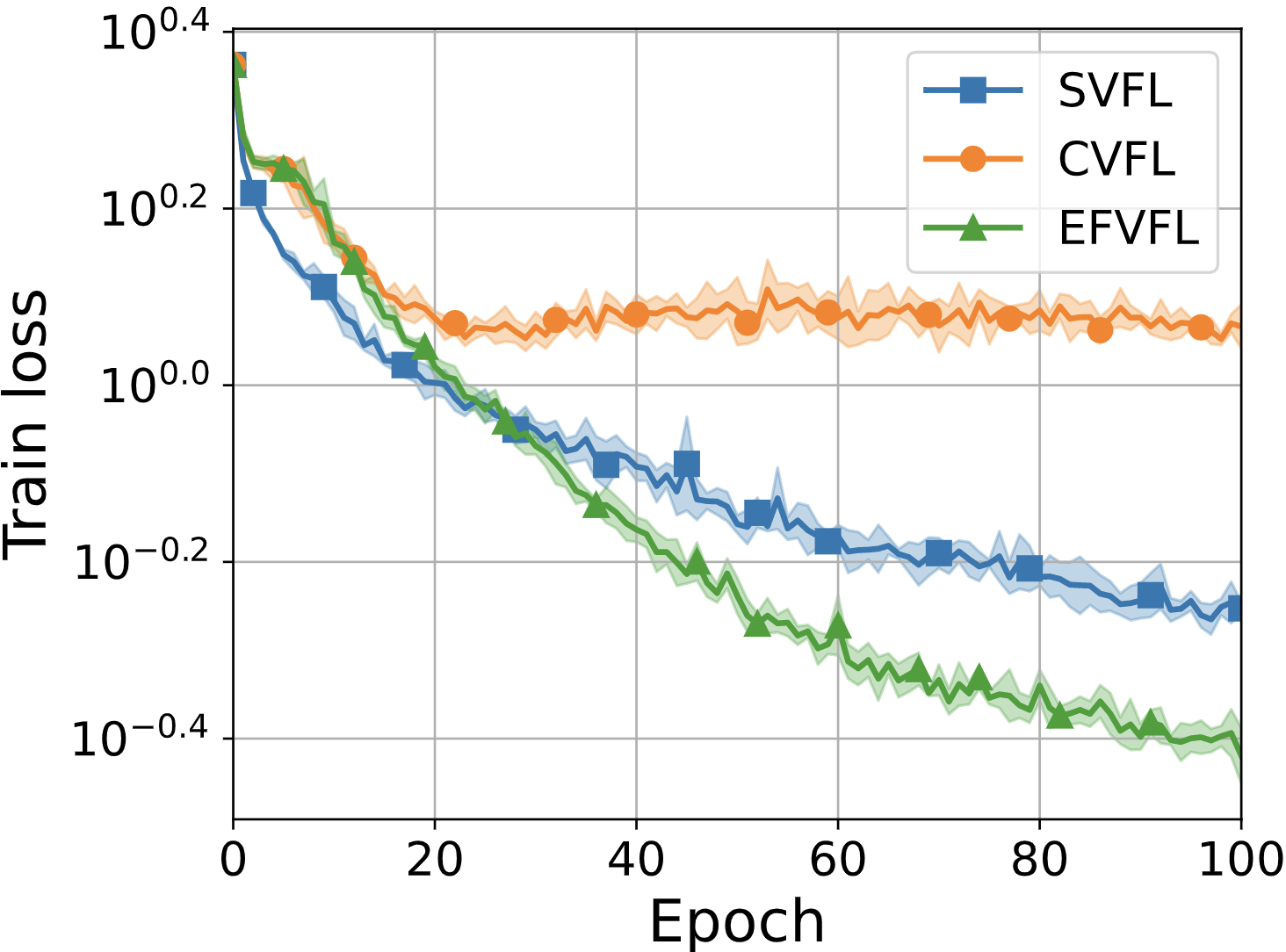}\hspace{1mm}\includegraphics[width=0.24\linewidth]{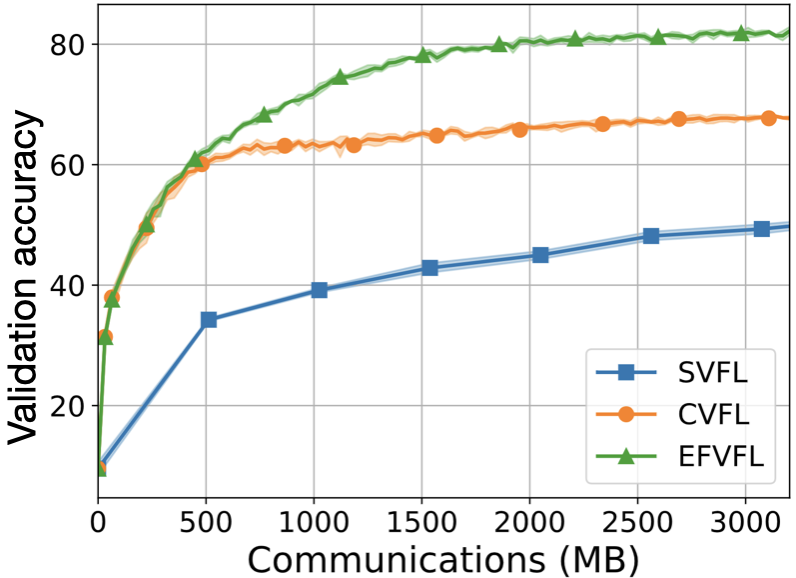}}
	\hfil
	\subfloat[CIFAR-10 ($\text{qsgd}_s$ with $b=1$; $Q=4$)\label{fig:resnet_4q}]{\includegraphics[width=0.24\linewidth]{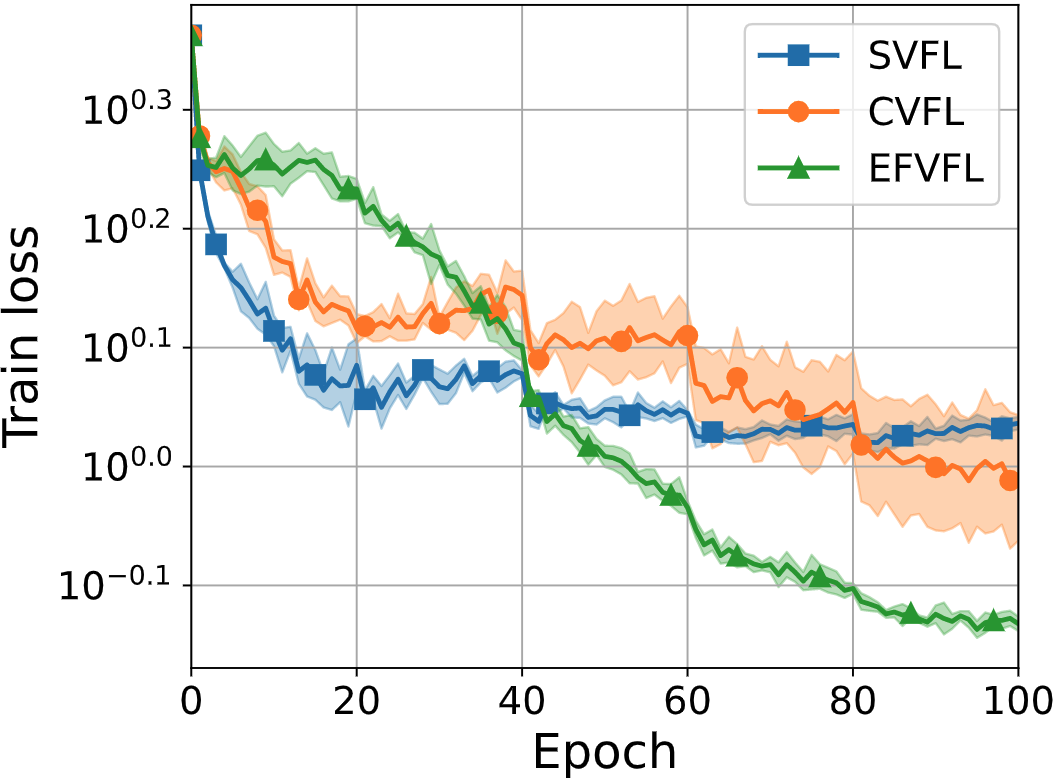}\hspace{1mm}\includegraphics[width=0.24\linewidth]{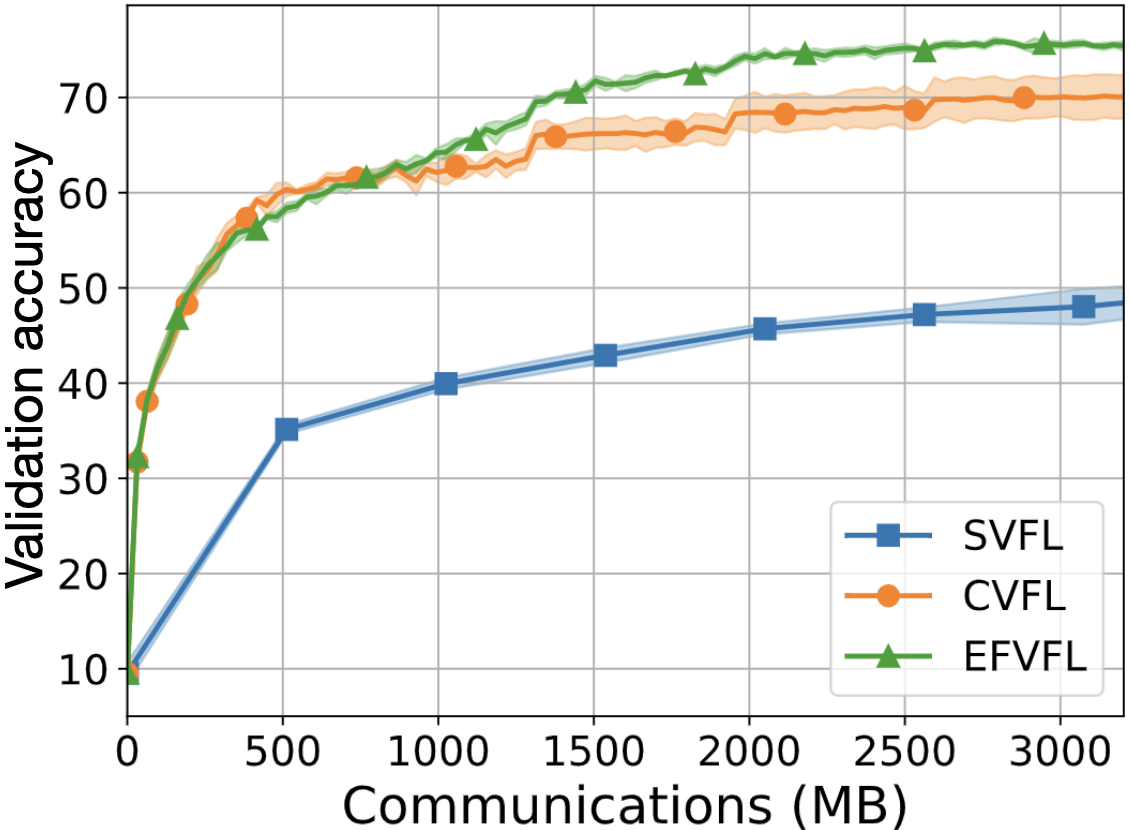}}
	\caption{Train loss with respect to the number of epochs and validation accuracy with respect to the communication cost for the training of a multi-view convolutional neural network on ModelNet10 and a residual neural network on CIFAR-10. \texttt{CVFL} and \texttt{EF-VFL} use stochastic quantization. On the left, all three vertical FL optimizers use $Q=2$ local updates and, on the right, they all use $Q=4$ local updates.\label{fig:resnet_multiple_local_updates}}
\end{figure*}

\section{Conclusions}
In this work, we proposed \texttt{EF-VFL}, a method for compressed vertical federated learning. Our method leverages an error feedback mechanism to achieve a $\mathcal{O}(1/T)$ convergence rate {for a sufficiently large batch size}, improving upon the state-of-the-art rate of $\mathcal{O}(1/\sqrt{T})$. Numerical experiments further demonstrate the faster convergence of our method. We further show that, under the PL inequality, our method converges linearly and introduce a modification of \texttt{EF-VFL} supporting the use of private labels. In the future, it would be interesting to study the use of error feedback based compression methods for VFL in the fully-decentralized and semi-decentralized settings{, in setups with asynchronous updates,} and in combination with privacy mechanisms, such as differential privacy as done in the horizontal setting \citep{li2022soteriafl,li2023convergence}.

	\section*{Acknowledgements}
	
This work is supported in part by the Fundação para a Ciência e a Tecnologia through the Carnegie Mellon Portugal Program under grant SFRH/BD/150738/2020; by the grants U.S. National Science Foundation CCF-2007911 and ECCS-2318441; by NOVA LINCS funding (DOI: 10.54499/UIDB/04516/2020 and 10.54499/UIDP/04516/2020); by LARSyS FCT funding (DOI: 10.54499/LA/P/0083/2020); by PT Smart Retail project [PRR - 02/C05-i11/2024.C645440011-00000062], through IAPMEI - Agência para a Competitividade e Inovação; and by TaRDIS Horizon2020 Contract ID: 101093006.

	\bibliography{efvfl_final}
	
	\appendix
	
\section{Preliminaries}
\label{sec:preliminaries}

If a function is $L$-smooth~\eqref{eq:lsmooth}, then the following quadratic upper bound holds:
\begin{equation} \label{eq:qub}
	\forall
	\bm{x},\bm{y}\in\mathbb{R}^d
	\colon
	\quad
	f(\bm{y})
	\leq
	f(\bm{x})
	+ \nabla f(\bm{x})^\top
	(\bm{y} - \bm{x})
	+\frac{L}{2}
	\lVert \bm{x} - \bm{y}\rVert^2.
\end{equation}

\noindent
It follows from Assumption~\eqref{ass:bounded_embedding} that the following inequality holds:
\begin{equation} \label{eq:lipschitz}
	\lVert \bm{H}_k(\bm{x}) - \bm{H}_k(\bm{y}) \rVert
	\leq
	H
	\lVert \bm{x} - \bm{y}\rVert,
	\quad
	\forall\;\bm{x},\bm{y}\in\mathbb{R}^{d_k}.
\end{equation}

\noindent
Letting $\epsilon>0$, we use the following standard inequality in our analysis:
\begin{equation} \label{eq:young_ineq}
	\forall
	\bm{x},\bm{y}\in\mathbb{R}^d
	\colon
	\quad
	\lVert \bm{x} + \bm{y} \rVert^2
	\leq
	(1+\epsilon) \lVert \bm{x} \rVert^2
	+(1+\epsilon^{-1}) \lVert \bm{y} \rVert^2.
\end{equation}

\noindent
We define the distortion associated with block~$k$ at time~$t$ as
\[
D_k^{(t)}\coloneqq\left\lVert\bm{G}_k^{t}-\bm{H}_k(\bm{x}_k^t)\right\rVert^2
\]
and, recall, we denote the total distortion at time~$t$ as $D^{(t)}=\sum_{k=0}^KD_k^{(t)}$. We also define $\nu\coloneqq(1-\alpha)\in[0,1)$.

In Section~\ref{sec:convergence_guarantees}, we introduced the following sigma-algebra
\[
\mathcal{F}_t= \sigma(\bm{G}^0,\bm{x}^1,\bm{G}^1,\dots, \bm{x}^{t},\bm{G}^{t}),
\]
where $\bm{G}^t=\{\bm{G}^t_0,\dots,\bm{G}^t_K\}$. We now further define
\[
\mathcal{F}^\prime_t\coloneqq \sigma(\bm{G}^0,\bm{x}^1,\bm{G}^1,
\dots, \bm{x}^{t},\bm{G}^{t},\bm{x}^{t+1}).
\]
Recall that we let $\mathbb{E}_{\mathcal{F}}$ denote the conditional expectation $\mathbb{E}[\cdot\mid \mathcal{F}]$.

Note that, while we write our proofs for Algorithm~\ref{alg:efvfl}, they can be easily adjusted to cover Algorithm~\ref{alg:efvfl_private_labels}. To do so, it suffices to adjust the notation, replacing $\bm{g}^t$ and $\tilde{\bm{g}}^t$ by $\bm{\nabla}^t_k$ and $\tilde{\bm{\nabla}}^t_k$, respectively, and to make minor changes to the proof of Lemma~\ref{lemma:surrogate_offset_bound}, which cause the constant $K$ in Lemma~\ref{lemma:surrogate_offset_bound} to be replaced with $K+1$. This, in turn, leads to a similar adjustment in the constants of our main theorems.

\section{Supporting Lemmas}

\subsection{Proof of Lemma~\ref{lemma:surrogate_offset_bound}} \label{app:lemma1_proof}

Decoupling the offset across blocks, we get that
\begin{align*}
	\lVert \bm{g}^{t} - & \nabla f(\bm{x}^{t}) \rVert^2\\
	&=
	\sum_{k=0}^K
	\left\lVert
	\bm{g}_k^{t}
	-\nabla_k f\left(\bm{x}_k\right)
	\right\rVert^2,
	\\
	&\leq
	\sum_{k=0}^K
	\lVert \nabla \bm{H}_k(\bm{x}_k) \rVert^2
	\left\lVert
	\tilde{\nabla}_k^t \Phi
	-
	\nabla_k \Phi
	(
	\{\bm{H}_k(\bm{x}_k^t)\}^K_{k=0}
	)
	\right\rVert^2,
\end{align*}
where we use the chain rule and the fact that $\lVert \bm{A}\bm{x}\rVert \leq \lVert \bm{A}\rVert \lVert \bm{x}\rVert$.
Now, it follows from the bounded gradient assumption~\eqref{eq:bounded_embedding} on $\{\bm{H}_k\}$ and the $L$-smoothness~\eqref{eq:lsmooth} of $\Phi$ that
\begin{align*}
	\lVert \bm{g}^{t} - \nabla f(\bm{x}^{t}) \rVert^2
	&\leq
	H^2L^2
	\sum_{k=0}^K
	\sum_{j\not=k}
	\left\lVert 
	\bm{G}_j^{t}
	-
	\bm{H}_j(\bm{x}_j^{t})
	\right\rVert^2
	\\
	&=
	H^2L^2
	\sum_{k=0}^K
	\sum_{j\not=k}
	D^{(t)}_j
	\\
	&=
	KH^2L^2D^{(t)},
\end{align*}
as we set out to prove. For Algorithm~\ref{alg:efvfl_private_labels}, the sum $\sum_{j\not=k}$ would instead be $\sum_{j=1}^K$, leading to $\lVert \bm{g}^{t} - \nabla f(\bm{x}^{t}) \rVert^2 \leq (K+1)H^2L^2D^{(t)}$. However, note that, for Algorithm~\ref{alg:efvfl_private_labels}, $D_0^{(t)}=0$.

\subsection{Proof of Lemma~\ref{lemma:distortion_recursive_bound_w_embedding_diff}} \label{app:lemma2_proof}

It follows from the definition of distortion and from the update of our compression estimate that
\begin{align*}
	\mathbb{E}_{\mathcal{F}^\prime_t}
	\left[D_k^{(t+1)}\right]
	&=
	\mathbb{E}_{\mathcal{F}^\prime_t} \left\lVert\bm{G}_k^{t+1}-\bm{H}_k(\bm{x}_k^{t+1})\right\rVert^2
	\\
	&=
	\mathbb{E}_{\mathcal{F}^\prime_t} \left\lVert
	\bm{G}_{k}^{t}+\mathcal{C}( \bm{H}_{k}(\bm{x}_k^{t+1}) -\bm{G}_{k}^{t})
	-\bm{H}_k(\bm{x}_k^{t+1})\right\rVert^2.
\end{align*}
Now, from the definition of contractive compressor~\eqref{eq:biased_compressor} and from \eqref{eq:young_ineq}, we have that
\begin{align*}
	\mathbb{E}_{\mathcal{F}^\prime_t}
	\left[D_k^{(t+1)}\right]
	&\leq
	\nu\left\lVert\bm{G}_k^{t}-\bm{H}_k(\bm{x}_k^{t+1})\right\rVert^2
	\\
	&\leq
	\nu(1+\epsilon)\left\lVert\bm{G}_k^{t}-\bm{H}_k(\bm{x}_k^{t})\right\rVert^2
	\\
	&\qquad+ \nu(1+\epsilon^{-1})\left\lVert\bm{H}_k(\bm{x}_k^{t+1}) - \bm{H}_k(\bm{x}_k^{t})\right\rVert^2,
\end{align*}
where, recall, $\nu=(1-\alpha)\in[0,1)$. Further, from the bounded gradient assumption---in particular, from~\eqref{eq:lipschitz}---we arrive at
\begin{align*}
	\mathbb{E}_{\mathcal{F}^\prime_t}
	\left[D_k^{(t+1)}\right]
	&\leq
	\nu(1+\epsilon) D_k^{(t)}
	+\nu(1+\epsilon^{-1}) H^2 \lVert \bm{x}_k^{t+1} - \bm{x}_k^{t} \rVert^2
	\\
	&=
	\nu(1+\epsilon) D_k^{(t)}
	+\nu(1+\epsilon^{-1}) \eta^2 H^2 \lVert \tilde{\bm{g}}_k^{t} \rVert^2,
\end{align*}
where, recall, $\tilde{\bm{g}}_k^{t}$ is our (possibly stochastic) update vector. Summing over $k=0,1,\dots,K$ and taking the nonconditional expectation of both sides of the inequality, we get that
\[
\mathbb{E} D^{(t+1)}
\leq
\nu(1+\epsilon) \mathbb{E} D^{(t)}
+\nu(1+\epsilon^{-1}) \eta^2 H^2
\mathbb{E} \lVert \tilde{\bm{g}}^{t} \rVert^2.
\]
Lastly, using the fact that, under~\eqref{eq:unbiased_grad}, \eqref{eq:bounded_var} is equivalent to $\mathbb{E}
\lVert \tilde{\bm{g}}^t \rVert^2
\leq
\mathbb{E}\lVert \bm{g}^t \rVert^2
+\frac{\sigma^2}{B}$, we arrive at~\eqref{eq:distortion_recursive_bound}.

\section{Main Theorems}

First, let us define some shorthand notation for terms we will be using throughout our proof, whose expectation is with respect to the (possible) randomness in the compression across all steps:
\begin{align*}
	\text{(compression error)}
	\qquad
	&
	\Omega_1^t
	\coloneqq
	\mathbb{E} D^{(t)},
	\\
	\text{(surrogate norm)}
	\qquad
	&
	\Omega_2^t
	\coloneqq
	\mathbb{E} \lVert \bm{g}^{t}\rVert^2.
\end{align*}

\subsection{Proof of Theorem~\ref{thm:efvfl_thm}}
\label{sec:efvfl_thm_proof}

From the $L$-smoothness of $f$---more specifically, from \eqref{eq:qub}---we have that
\[
\begin{split}
	f(\bm{x}^{t+1})-f(\bm{x}^{t})
	&\leq
	\langle \nabla f(\bm{x}^{t}), \bm{x}^{t+1} - \bm{x}^{t} \rangle
	+\frac{L}{2} \lVert \bm{x}^{t+1} - \bm{x}^{t} \rVert^2
	\\&=
	-\eta\langle \nabla f(\bm{x}^{t}), \tilde{\bm{g}}^t \rangle
	+\frac{\eta^2L}{2} \lVert \tilde{\bm{g}}^t \rVert^2.
\end{split}
\]
Taking the conditional expectation over the batch selection, it follows from the unbiasedness of $\tilde{\bm{g}}^t$~\eqref{eq:unbiased_grad} that
\[
\mathbb{E}_{\mathcal{F}_t}
f(\bm{x}^{t+1})-f(\bm{x}^{t})
\leq
-\eta\langle \nabla f(\bm{x}^{t}), \bm{g}^{t} \rangle
+\frac{\eta^2L}{2} \mathbb{E}_{\mathcal{F}_t} \lVert \tilde{\bm{g}}^t \rVert^2.
\]
From~\eqref{eq:bounded_var}, we have that 
$\mathbb{E}_{\mathcal{F}_t}
\lVert
\tilde{\bm{g}}^t - \bm{g}^t
\rVert^2
\leq
\frac{\sigma^2}{B}$, which, under~\eqref{eq:unbiased_grad}, is equivalent to
$\mathbb{E}_{\mathcal{F}_t}
\lVert \tilde{\bm{g}}^t \rVert^2
\leq
\lVert \bm{g}^t \rVert^2
+\frac{\sigma^2}{B}$, so
\begin{align*}
	\mathbb{E}_{\mathcal{F}_t} f(\bm{x}^{t+1}) &- f(\bm{x}^{t})\\
	&\leq
	-\eta\langle \nabla f(\bm{x}^{t}), \bm{g}^{t} \rangle
	+\frac{\eta^2L}{2} \lVert \bm{g}^t \rVert^2
	+\frac{\eta^2L\sigma^2}{2B}
	\\ & =
	-\frac{\eta}{2} \lVert \nabla f(\bm{x}^{t})\rVert^2
	-\frac{\eta}{2}(1-\eta L) \lVert \bm{g}^{t}\rVert^2
	\\ & \qquad\qquad
	+\frac{\eta}{2} \lVert \bm{g}^{t} -\nabla f(\bm{x}^{t})\rVert^2
	+\frac{\eta^2L\sigma^2}{2B},
\end{align*}
where the last equation follows from the polarization identity $\langle a,b\rangle=\frac{1}{2}(\lVert a\rVert^2+\lVert b\rVert^2-\lVert a-b\rVert^2)$. Now, using our surrogate offset bound~\eqref{eq:surrogate_offset_bound} and taking the (non-conditional) expectation, we get that:
\begin{equation} \label{eq:smoothness_and_lemma1}
	\begin{split}
		\mathbb{E}f(\bm{x}^{t+1})-\mathbb{E}f(\bm{x}^{t})
		&\leq
		-\frac{\eta}{2} \mathbb{E} \lVert \nabla f(\bm{x}^{t})\rVert^2
		-\frac{\eta}{2}(1-\eta L) \mathbb{E} \lVert \bm{g}^{t}\rVert^2
		\\&\quad
		+\frac{\eta KH^2L^2}{2} \mathbb{E} D^{(t)}
		+\frac{\eta^2L\sigma^2}{2B}.
	\end{split}
\end{equation}

Using the $\Omega_1^t$ and $\Omega_2^t$ notation defined earlier and recalling that $\nu=(1-\alpha)\in[0,1)$, we rewrite \eqref{eq:distortion_recursive_bound} and \eqref{eq:smoothness_and_lemma1}, respectively, as
\[
\Omega_1^{t+1}
\leq
\nu(1+\epsilon) \Omega_1^t
+\nu(1+\epsilon^{-1}) \eta^2 H^2 \Omega_2^t + \frac{\nu(1+\epsilon^{-1}) \eta^2 H^2\sigma^2}{B}
\]
and
\[
\begin{split}
	\mathbb{E}f(\bm{x}^{t+1})-\mathbb{E}f(\bm{x}^{t})
	&\leq
	-\frac{\eta}{2} \mathbb{E} \lVert \nabla f(\bm{x}^{t})\rVert^2
	+\frac{\eta KH^2L^2}{2} \Omega_1^t
	\\&\quad
	-\frac{\eta}{2}(1-\eta L) \Omega_2^t
	+\frac{\eta^2L\sigma^2}{2B}.
\end{split}
\]

Multiplying the first inequality by a positive constant $w$ and adding it to the second one, we get
\begin{equation} \label{eq:pre_descent_like_lemma}
	\begin{split}
		\mathbb{E}f(\bm{x}^{t+1})-\mathbb{E}f(&\bm{x}^{t})
		+ w \Omega_1^{t+1}
		-\psi_1(w)
		\Omega_1^{t}
		\\
		&\leq
		-\frac{\eta}{2}
		\mathbb{E} \lVert \nabla f(\bm{x}^{t})\rVert^2
		+ \psi_2(w)
		\Omega_2^{t}
		\\&\quad+
		\left( w\nu(1+\epsilon^{-1}) H^2
		+\frac{L}{2} \right)
		\frac{\eta^2\sigma^2}{B}
		,
	\end{split}
\end{equation}
where
\[
\psi_1 (w)
\coloneqq
w \nu(1+\epsilon) + \frac{\eta KH^2L^2}{2}
\]
and
\[ \psi_2 (w)
\coloneqq
w\nu(1+\epsilon^{-1})\eta^2 H^2 - \frac{\eta}{2}(1-\eta L). \]

Looking at \eqref{eq:pre_descent_like_lemma}, we see that, if $\psi_2 (w) \leq 0$, we can drop the $\Omega_2^t$ term. Further, if $\psi_1 (w)\leq w$, we can telescope the $\Omega_1^t$ term as we sum the inequalities for $t=0,\dots,T-1$, as we do for the $\mathbb{E}f(\bm{x}^{t})$ terms. We thus get that:
\begin{equation} \label{eq:pre_eta_w_ineq}
	\begin{split}
		\frac{1}{T} \sum_{t=0}^{T-1} \mathbb{E} \lVert \nabla f(\bm{x}^{t})\rVert^2
		&\leq
		\frac{2(f(\bm{x}^{0})-\mathbb{E}f(\bm{x}^{T}))}{\eta T}
		\\&\quad+
		\frac{2w (\Omega_1^0-\Omega_1^T)}{\eta T}
		\\&\quad+
		\left( 2w\nu(1+\epsilon^{-1}) H^2
		+L \right)
		\frac{\eta\sigma^2}{B},
	\end{split}
\end{equation}
for
\[
w \in \mathcal{W}_\epsilon
\coloneqq
\left\{
w
\colon
\frac{\eta KH^2L^2}{2(1-\nu(1+\epsilon))}
\leq
w
\leq
\frac{1-\eta L}{2\eta H^2 \nu(1+\epsilon^{-1})}
\right\}
,
\]
where the lower bound follows from $\psi_1 (w)\leq w$ and the upper bound from $\psi_2 (w) \leq 0$.

\vspace{1mm}
\paragraph{Bounding $\eta$ and choosing $\epsilon$.}
To ensure that $\mathcal{W}_\epsilon$ is not empty, we need
\[
\eta^2
\gamma(\epsilon)
L^2
+
\eta
L
\leq
1
\quad
\text{where}
\quad
\gamma(\epsilon)
\coloneqq
KH^4
\frac{\nu(1+\epsilon^{-1})}{1-\nu(1+\epsilon)}
.
\]
From Lemma 5 of~\citet{richtarik2021ef21}, we know that, if $a,b>0$, then $0\leq\eta\leq\frac{1}{\sqrt{a}+b}$ implies $a\eta^2+b\eta\leq1$. Thus, we can ensure that $\mathcal{W}_\epsilon$ is not empty by requiring
\[
\eta
\leq
\frac{1}{\sqrt{\gamma(\epsilon) L^2}+L}
=
\left( \sqrt{\gamma(\epsilon)} L+L\right)^{-1}
.
\]
Further, to ensure that all $w\in\mathcal{W}_\epsilon$ are positive, we need $\nu(1+\epsilon)<1$, which holds for $\epsilon<\frac{1-\nu}{\nu}$. Thus, to have the largest upper bound possible on the stepsize~$\eta$, we want $\epsilon$ to be the solution to the following optimization problem, solved in Lemma 3 of~\citet{richtarik2021ef21}:
\[
\begin{split}
	\epsilon^\star
	&\coloneqq
	\argmin_\epsilon
	\left\{
	\tilde{\gamma}(\epsilon)\coloneqq
	\frac{\nu(1+\epsilon^{-1})}{1-\nu(1+\epsilon)}
	\colon
	0<\epsilon<\frac{1-\nu}{\nu}
	\right\}
	\\&=
	\frac{1}{\sqrt{\nu}}-1.
\end{split}
\]
It follows that $\sqrt{\tilde{\gamma}(\epsilon^\star)}=\frac{1+\sqrt{1-\alpha}}{\alpha}-1$ and thus $\gamma(\epsilon^\star)=KH^4\left(\frac{1+\sqrt{1-\alpha}}{\alpha}-1\right)^2\eqqcolon\rho_{\alpha1}$. We therefore need
\[
\eta
\leq
\left( \sqrt{\gamma(\epsilon^\star)} L+L\right)^{-1}
=
\left( \sqrt{\rho_{\alpha1}} L+L\right)^{-1}
.
\]
Note that, for $\alpha=1$, we recover $\eta\leq1/L$.


\vspace{1mm}
\paragraph{Choosing $w$.}
Now, since $f^\star\leq f(\bm{x})$ for all $\bm{x}$ and $\Omega_1^T\geq 0$, we have from~\eqref{eq:pre_eta_w_ineq} that, for all $ w\in\mathcal{W}_\epsilon $:
\[
\begin{split}
	\frac{1}{T} \sum_{t=0}^{T-1} \mathbb{E} \lVert \nabla f(\bm{x}^{t})\rVert^2
	&\leq
	\frac{2\Delta}{\eta T}
	+ \frac{2w \Omega_1^0}{\eta T}
	\\&\quad+
	\left( 2w\nu(1+\epsilon^{-1}) H^2
	+L \right)
	\frac{\eta\sigma^2}{B},
\end{split}
\]
where $\Delta\coloneqq f(\bm{x}^{0})-f^\star$. From the inequality above, we see that we want $w\in\mathcal{W}_\epsilon$ to be as small as possible. Therefore, we take $w$ to be the lower bound in $\mathcal{W}_\epsilon$. Since $1-\nu(1+\epsilon^\star)=1-\sqrt{\nu}$, this corresponds to setting $w=\frac{\eta KH^2L^2}{2(1-\sqrt{\nu})}$. Recalling that $\Omega_1^t=\mathbb{E} D^{(t)}$ and $\nu=1-\alpha$, we thus arrive at~\eqref{eq:nonconvex_thm}:
\[
\begin{split}
	\frac{1}{T} \sum_{t=0}^{T-1} \mathbb{E} \lVert \nabla f(\bm{x}^{t})\rVert^2
	&\leq
	\frac{2\Delta}{\eta T}
	+
	\frac{KH^2L^2}{1-\sqrt{1-\alpha}}
	\cdot
	\frac{\mathbb{E} D^{(0)}}{T}
	\\&\quad+
	\left(
	\eta L
	\rho_{\alpha1}
	+1 \right)
	\frac{\eta L\sigma^2}{B}
	.
\end{split}
\]

\subsection{Proof of Theorem~\ref{thm:pl}}
\label{sec:efvfl_thm_proof_pl}

Recall that, using the $\Omega_1^t$ and $\Omega_2^t$ notation, we can rewrite \eqref{eq:distortion_recursive_bound} and \eqref{eq:smoothness_and_lemma1}, respectively, as
\[ \Omega_1^{t+1}
\leq
\nu(1+\epsilon) \Omega_1^t
+\nu(1+\epsilon^{-1}) \eta^2 H^2 \Omega_2^t + \frac{\nu(1+\epsilon^{-1}) \eta^2 H^2\sigma^2}{B}\]
and
\[
\begin{split}
	\mathbb{E}f(\bm{x}^{t+1})-\mathbb{E}f(\bm{x}^{t})
	&\leq
	-\frac{\eta}{2} \mathbb{E} \lVert \nabla f(\bm{x}^{t})\rVert^2
	+\frac{\eta KH^2L^2}{2} \Omega_1^t
	\\&\quad-
	\frac{\eta}{2}(1-\eta L) \Omega_2^t
	+\frac{\eta^2L\sigma^2}{2B}.
\end{split}
\]

Now, from our earlier introduced Lyapunov function~\eqref{eq:lyapunov}, $V_{t}=
\mathbb{E}f(\bm{x}^t)-f^\star+c\Omega_1^t$, we have that:
\begin{align*}
	V_{t+1}
	&=
	\mathbb{E}f(\bm{x}^{t+1})-f^\star+c\Omega_1^{t+1}
	\\
	& \stackrel{\text{(i)}}{\leq}
	\mathbb{E}f(\bm{x}^{t}) - f^\star
	-\frac{\eta}{2} \mathbb{E} \lVert \nabla f(\bm{x}^{t})\rVert^2
	\\&\quad+ \left(\frac{\eta KH^2L^2}{2}+c\nu(1+\epsilon)\right) \Omega_1^{t}
	+ \psi_2 (c) \Omega_2^{t}
	\\&\quad+(L+2c\nu(1+\epsilon^{-1})H^2)\frac{\eta^2\sigma^2}{2B}
	\\
	& \stackrel{\text{(ii)}}{\leq}
	(1-\eta\mu) (\mathbb{E}f(\bm{x}^{t}) - f^\star)
	\\&\quad+ \left(\frac{\eta KH^2L^2}{2}+c\nu(1+\epsilon)\right) \Omega_1^{t}
	\\&\quad+ \psi_2 (c) \Omega_2^{t}
	+(L+2c\nu(1+\epsilon^{-1})H^2)\frac{\eta^2\sigma^2}{2B}
	\\
	& =
	(1-\eta\mu)V_{t}
	\\&\quad+ \underbrace{\left(\frac{\eta KH^2L^2}{2}+c\nu(1+\epsilon)-c(1-\eta\mu)\right)}_{\eqqcolon\psi_3(c)} \Omega_1^{t}
	\\&\quad+ \psi_2(c) \Omega_2^{t}
	+(L+2c\nu(1+\epsilon^{-1})H^2)\frac{\eta^2\sigma^2}{2B},
\end{align*}
where (i) follows from \eqref{eq:distortion_recursive_bound}, \eqref{eq:smoothness_and_lemma1}, $c>0$, and $\psi_2 (w) = w\nu(1+\epsilon^{-1})\eta^2 H^2 - \frac{\eta}{2}(1-\eta L)$ and (ii) follows from the PL inequality~\eqref{eq:pl}. Looking the inequality above, we see that, if there is a $c$ such that $\psi_2(c),\psi_3(c)\leq0$, then
\begin{equation}\label{eq:recursive_lyapunov_bound}
	V_{t+1}
	\leq
	(1-\eta\mu)V_{t} +(L+2cH^2\nu(1+\epsilon^{-1}))\frac{\eta^2\sigma^2}{2B}
	.
\end{equation}
Note that, similarly to what we had in the proof for Theorem~\ref{thm:efvfl_thm}, $\psi_2(c)\leq0$ corresponds to a upper bound on $c$, while $\psi_3(c)\leq0$ corresponds to an lower bound on $c$. We therefore want $c\in
\mathcal{W}_\epsilon^\prime$, where
\[
\mathcal{W}_\epsilon^\prime
\coloneqq
\left\{
c
\colon
\frac{\eta KH^2L^2}{2(1-\nu(1+\epsilon)- \eta\mu )}
\leq
c
\leq
\frac{1-\eta L}{2\eta\nu(1+\epsilon^{-1})H^2}
\right\}
.
\]
Recursing~\eqref{eq:recursive_lyapunov_bound}, we get
\begin{equation} \label{eq:pl_bound}
	\begin{split}
		V_{T}
		& \leq
		(1-\eta\mu)^TV_{0}
		+(L+2cH^2\nu(1+\epsilon^{-1}))\frac{\eta^2\sigma^2}{2B}
		\sum_{t=0}^{T-1}(1-\eta\mu)^t \notag
		\\
		& \leq
		(1-\eta\mu)^TV_{0}
		+(L+2cH^2\nu(1+\epsilon^{-1}))\frac{\eta\sigma^2}{2B\mu},
	\end{split}
\end{equation}
where the second inequality follows from the sum of a geometric series, arriving at the result we set out to prove.


\vspace{1mm}
\paragraph{Choosing $\epsilon$ and bounding $\eta$ so that $\mathcal{W}_\epsilon^\prime$ is nonempty.}
Note that the lower bound defining $\mathcal{W}_\epsilon^\prime$ is positive if $\eta<\frac{1-\nu(1+\epsilon)}{\mu}$, where $1-\nu(1+\epsilon)>0$ as long as $\epsilon<\frac{1-\nu}{\nu}$. Further, $\mathcal{W}_\epsilon^\prime$ is not empty, as long as
\[
\frac{\eta KH^2L^2}{2(1-\nu(1+\epsilon)- \eta\mu )}
\leq
\frac{1-\eta L}{2\eta\nu(1+\epsilon^{-1})H^2},
\]
which is equivalent to
\[
\eta^2
L^2
\left(\beta_\epsilon(\alpha) K H^4-\mu/L\right)
+
\eta
L(\theta_\epsilon(\alpha)+\mu/L)
\leq
\theta_\epsilon(\alpha),
\]
where
\[
\theta_\epsilon(\alpha)=1-(1-\alpha)(1+\epsilon)
\quad
\text{and}
\quad
\beta_\epsilon(\alpha)=(1-\alpha)(1+\epsilon^{-1}).
\]
If $\epsilon\leq\min\left\{\frac{1-\alpha}{\alpha},\frac{\alpha}{1-\alpha} \right\}$, we have that $\beta_\epsilon(\alpha)\geq1$ and $\theta_\epsilon(\alpha)\geq0$ for all $\alpha$. It follows from $\beta_\epsilon(\alpha)\geq1$ that $\beta_\epsilon(\alpha) K H^4\geq K H^4\geq H^4\geq 1$, where the last inequality follows from \eqref{eq:bounded_embedding} holding for $\bm{H}_0$. We thus get that
\begin{equation} \label{eq:pre_stepsize_bound}
	\eta^2 L^2 \left(1-\mu/L\right) + \eta L(\mu/L)
	\leq
	\theta_\epsilon(\alpha)
	.
\end{equation}
Choosing $\epsilon$ to be
\[
\epsilon^\star
=
\begin{cases}
	\alpha, \quad & 0<\alpha\leq1/2, \\
	1-\alpha, \quad & 1/2<\alpha\leq1,
\end{cases}
\]
we get that
\[
\theta_{\epsilon^\star}(\alpha)
=
\begin{cases}
	\alpha^2, \quad & 0<\alpha\leq1/2, \\
	-1+3\alpha-\alpha^2, \quad & 1/2<\alpha\leq1.
\end{cases}
\]
Since $\alpha^2\leq-1+3\alpha-\alpha^2$ for $\alpha\in(1/2,1]$, we have that $\alpha^2\leq\theta_{\epsilon^\star}(\alpha)$
for all $\alpha\in(0,1]$. Thus, \eqref{eq:pre_stepsize_bound} holds if
\begin{equation} \label{eq:stepsize_bound}
	\eta^2 L^2 \left(1-\mu/L\right) + \eta L(\mu/L)
	\leq
	\alpha^2
	.
\end{equation}
Further, from \eqref{eq:lsmooth} and \eqref{eq:pl}, we get that $0\leq\mu/L\leq 1$. Therefore, for a sufficiently small $\eta$, there exists a positive $c\in\mathcal{W}_\epsilon^\prime$ such that $\psi_2(c),\psi_3(c)\leq0$.
Lastly, note that we can also guarantee that
$\eta<\frac{1-\nu(1+\epsilon^\star)}{\mu}
=\theta_{\epsilon^\star}(\alpha)/\mu$ by having $\eta<\alpha^2/\mu$, which follows from \eqref{eq:stepsize_bound}.

{\paragraph{Choosing $c$ to minimize the upper bound.}
	From~\eqref{eq:pl_bound}, we see that we want $c$ to be as small as possible. So, we choose $c$ as the lower bound in the definition of $\mathcal{W}_\epsilon^\prime$, arriving at 
	\[
	\begin{split}
		V_{T}
		&\leq
		(1-\eta\mu)^TV_{0}\\
		&+
		\left(
		L+2
		\left(\frac{\eta KH^2L^2}{2(1-\nu(1+\epsilon)- \eta\mu )}\right)
		H^2\nu\left(1+\epsilon^{-1}\right)
		\right)
		\frac{\eta\sigma^2}{2B\mu}.
	\end{split}
	\]
	Now, we know that, for $\epsilon=\epsilon^\star$ and $\eta^2 L^2 \left(1-\mu/L\right) + \eta \mu \leq \alpha^2$, the lower bound in the definition of $\mathcal{W}_\epsilon^\prime$ is less than or equal to the upper bound. We therefore have that
	\[
	2
	\left(\frac{\eta KH^2L^2}{2(1-\nu(1+\epsilon)- \eta\mu )}\right)
	H^2\nu\left(1+\epsilon^{-1}\right)
	\leq
	\frac{1-\eta L}{\eta}.
	\]
	Using this inequality in the bound above it follows that
	\[
	V_{T}
	\leq
	(1-\eta\mu)^TV_{0}
	+\frac{\sigma^2}{2B\mu},
	\]
	thus arriving at the statement that we set out to prove.}


\section{Comparison of different downlink communication schemes} \label{app:downlink_schemes}

{
	As in most communication-compressed optimization literature~\citep{haddadpour2021federated}, our primary concern is uplink communications, which are typically the main bottleneck in training. Nevertheless, this appendix discusses three alternative downlink communication schemes in \texttt{EF-VFL}: \textbf{1)} the one in Algorithm~\ref{alg:efvfl}, \textbf{2)} the one in Algorithm~\ref{alg:efvfl_private_labels}, and \textbf{3)} a modified version of the one in Algorithm~\ref{alg:efvfl} for common VFL fusion models, enabling broadcasts of a size that is independent of the number of clients, $K$. Approaches \textbf{1)} and \textbf{3)} are mathematically equivalent, yet Approach \textbf{2)} is not, as discussed earlier. Each approach has its pros and cons, making it suitable for different applications. For simplicity, this discussion focuses on top-$k$ sparsification and the full-batch case.
}

{
	\textbf{1)}
	In Algorithm~\ref{alg:efvfl}, each round of downlink communications consists of a broadcast of size $k(K+1)$—a compressed object of size $k$ for each client (the intermediate representations) and one for the server (the fusion model).
}

{
	\textbf{2)}
	In Algorithm~\ref{alg:efvfl_private_labels}, each client receives only the derivative of the loss function with respect to its representation, resulting in a total downlink communication cost of $kK$. Recall that this is only an option when performing a single local update.
}

{
	Approach~\textbf{2)} avoids the dependency of the downlink communications to each client on $K$, seen in Approach~\textbf{1)}, but requires $K$ different communications (one to each client), rather than a single broadcast, thus the total communication cost still depends on $K$. The one-to-many nature of broadcasting makes it is more appropriate to compare broadcasted information with the total downlink communications, rather than the communication to a single client, as the latter ignores the cost of contacting the other $K-1$ clients.
}

{
	\textbf{3)}
	To ensure that the downlink communication cost does not depend on $K$, we can often exploit the structure of the fusion model~$\phi$. In particular, a common choice is $\phi(\bm{x})=\phi_2(\bm{x}_0,\phi_1(\bm{H}_1(\bm{x}_1),\dots,\bm{H}_K(\bm{x}_K)))$, where $\phi_1$ is a nonparameterized representation aggregator, such as a sum or an average, and $\phi_2$ is a map parameterized by $\bm{x}_0$. In this case, instead of broadcasting $K+1$ objects, as in Approach~\textbf{1)}, the server can broadcast the aggregation of the representations, $\phi_1(\{\bm{H}_j(\bm{x}_j^t)\})$. This allows us to collapse the dimension of length $K$, as long as each client $i$ can replace $ \bm{H}_i(\bm{x}_i^t) $ with $ \bm{H}_i(\bm{x}_i^{t+1}) $ in $\phi_1(\{\bm{H}_j(\bm{x}_j^t)\})$ using its local knowledge of its own representation. For example, if $\phi_1$ is a sum, client $i$ can subtract its previous intermediate representation and add the updated one to obtain an updated aggregation. This allows client~$i$ to perform forward and backward passes over both its local model and the fusion model, and thus perform multiple local updates without requiring further communications. Yet, this downlink communication of the aggregated representations will no longer be in the range of the compressor. For example, if $\bm{v}_1$ and $\bm{v}_2$ are within the range of top-$k$, their sum, $\bm{v}_1 + \bm{v}_2$, will generally not be. Therefore, we have a broadcast of up to size $N\bar{E}+d_0$, where $d_0$ is the size of the parameters of the fusion model. That is, we avoid the dependency on $K$, but this comes at the cost of losing the compressed nature of the downlink communications. (This sum may still lie in a lower-dimensional manifold, but this typically recovers the dependency on $K$, e.g., for top-$k$ sparsification, we can upper bound the number of nonzero entries of the sum of $K$ $k$-sparse vectors by $\min\{kK,N\bar{E}\}+d_0$.) Like Approach~\textbf{1)}, Approach~\textbf{3)} does not allow for private labels.
}

{
	We present Approach \textbf{1)} in Algorithm~\ref{alg:efvfl}, rather than Approach \textbf{3)}, because most VFL applications are in the cross-silo setting~\citep{liu2024vertical} and thus the number of clients $K$ is small, therefore $k(K+1)\ll N\bar{E}+d_0$. Yet, for applications where $K$ is large, Approach \textbf{3)} may be preferable.
}

	

	
\end{document}